\newcommand{\ourcdf}{\textit{RID}}
\newcommand{\RLDCDF}{\textit{RLD}}
\newcommand{\ourvi}{\textit{RIV}}
\newcommand{\RID}{\textit{RID}}
\newcommand{\RLD}{\textit{RLV}}
\newcommand{\LD}{\textit{LD}^*}
\newcommand{\Dn}{\mathcal{D}^{(n)}}
\newtheorem{assumption}{Assumption}
\newtheorem{theorem}{Theorem}
\newtheorem{lemma}{Lemma}
\newtheorem{corollary}{Corollary}
\newtheorem{proposition}{Proposition}
\title{The Rashomon Importance Distribution: Getting RID of Unstable, Single Model-based Variable Importance}
\author{%
  Jon Donnelly*\\
  Department of Computer Science\\
  Duke University\\
  Durham, NC 27708 \\
  \texttt{jon.donnelly@duke.edu} \\
  \And
  Srikar Katta*\\
  Department of Computer Science\\
  Duke University\\
  Durham, NC 27708 \\
  \texttt{srikar.katta@duke.edu} \\
  \And
  Cynthia Rudin \\
  Department of Computer Science\\
  Duke University\\
  Durham, NC 27708 \\
  \texttt{cynthia.rudin@duke.edu} \\
  \And
  Edward P. Browne \\
  Department of Medicine\\
  University of North Carolina at Chapel Hill\\
  Chapel Hill, NC 27599\\
  \texttt{epbrowne@email.unc.edu} \\
}
\def\blfootnote{\xdef\@thefnmark{}\@footnotetext}
\begin{document}

\blfootnote{*Jon Donnelly and Srikar Katta contributed equally to this work.}
\maketitle
\begin{abstract}
  Quantifying variable importance is essential for answering high-stakes questions in fields like genetics, public policy, and medicine. Current methods generally calculate variable importance for a given model trained on a given dataset. However, for a given dataset, there may be many models that explain the target outcome equally well; without accounting for all possible explanations, different researchers may arrive at many conflicting yet equally valid conclusions given the same data. Additionally, even when accounting for all possible explanations for a given dataset, these insights may not generalize because not all good explanations are stable across reasonable data perturbations. We propose a new variable importance framework that quantifies the importance of a variable across the set of all good models and is stable across the data distribution. Our framework is extremely flexible and can be integrated with most existing model classes and global variable importance metrics. We demonstrate through experiments that our framework recovers variable importance rankings for complex simulation setups where other methods fail. Further, we show that our framework accurately estimates the \textit{true importance} of a variable for the underlying data distribution. We provide theoretical guarantees on the consistency and finite sample error rates for our estimator. Finally, we demonstrate its utility with a real-world case study exploring which genes are important for predicting HIV load in persons with HIV, highlighting an important gene that has not previously been studied in connection with HIV. 
  
\end{abstract}


\section{Introduction }\label{sec:intro}
Variable importance analysis enables researchers to gain insight into a domain or a model. Scientists are often interested in understanding causal relationships between variables, but running randomized experiments is time-consuming and expensive. Given an observational dataset, we can use global variable importance measures to check if there is a predictive relationship between two variables. It is particularly important in high stakes real world domains such as genetics  \citep{wang2020initial, novakovsky2022obtaining}, finance \citep{rudin2019we}, and criminal justice \citep{FisherRuDo19, propublica} where randomized controlled trials are impractical or unethical. 
Variable importance would ideally be measured as the importance of each variable to the data generating process. However, the data generating process is never known in practice, so prior work generally draws insight by analyzing variable importance for a surrogate model, treating that model and its variable importance as truth.

This approach can be misleading because there may be many good models for a given dataset -- a phenomenon referred to as the Rashomon effect \citep{breiman2001statistical, semenova2022existence} --- and variables that are important for one good model on a given dataset are \textit{not} necessarily important for others. As such, any insights drawn from a single model need not reflect the underlying data distribution or even the consensus among good models.
Recently, researchers have sought to overcome the Rashomon effect by computing \textit{Rashomon sets}, the set of all good (i.e., low loss) models for a given dataset \citep{FisherRuDo19, dong2020exploring}. However, \textit{the set of all good models is not stable across reasonable perturbations (e.g., bootstrap or jackknife) of a single dataset}, with stability defined as in \citep{yu2013stability}. This concept of stability is one of the three pillars of veridical data science \citep{yu2020veridicalpnas, duncan2022veridicalflow}. Note that there is wide agreement on the intuition behind stability, but not its quantification \citep{kalousis2005stability, nogueira2017stability}. As such, in line with other stability research, we do not subscribe to a formal definition and treat stability as a general notion \citep{kalousis2005stability, nogueira2017stability, yu2013stability, yu2020veridicalpnas}. In order to ensure trustworthy analyses, variable importance measures must account for both the Rashomon effect and stability.

Figure \ref{fig:mcr_unstable} provides a demonstration of this problem: across 500 bootstrap replicates from \textit{the same} data set, the Rashomon set varies wildly -- ranging from ten models to over \textit{ten thousand} --- suggesting that we should account for its instability in any computed statistics. This instability is further highlighted when considering the Model Class Reliance (MCR) variable importance, which is the range of model reliance (i.e., variable importance) values across the Rashomon set for the given dataset \citep{FisherRuDo19} (we define MCR and the Rashomon set more rigorously in Sections \ref{sec:related_work} and \ref{sec:methods} respectively).
In particular, for variable $X_2$, one interval --- ranging from -0.1 to 0.33 --- suggests that there exist good models that do not depend on this variable at all (0 indicates the variable is not important); 
on the other hand, another MCR from a bootstrapped dataset ranges from 0.33 to 0.36, suggesting that this variable is essential to all good models. Because of this instability, different researchers may draw very different conclusions about the same data distribution even when using the same method.

\begin{figure}
     \centering
     \begin{subfigure}{0.48\textwidth}
         \includegraphics[width=\textwidth]{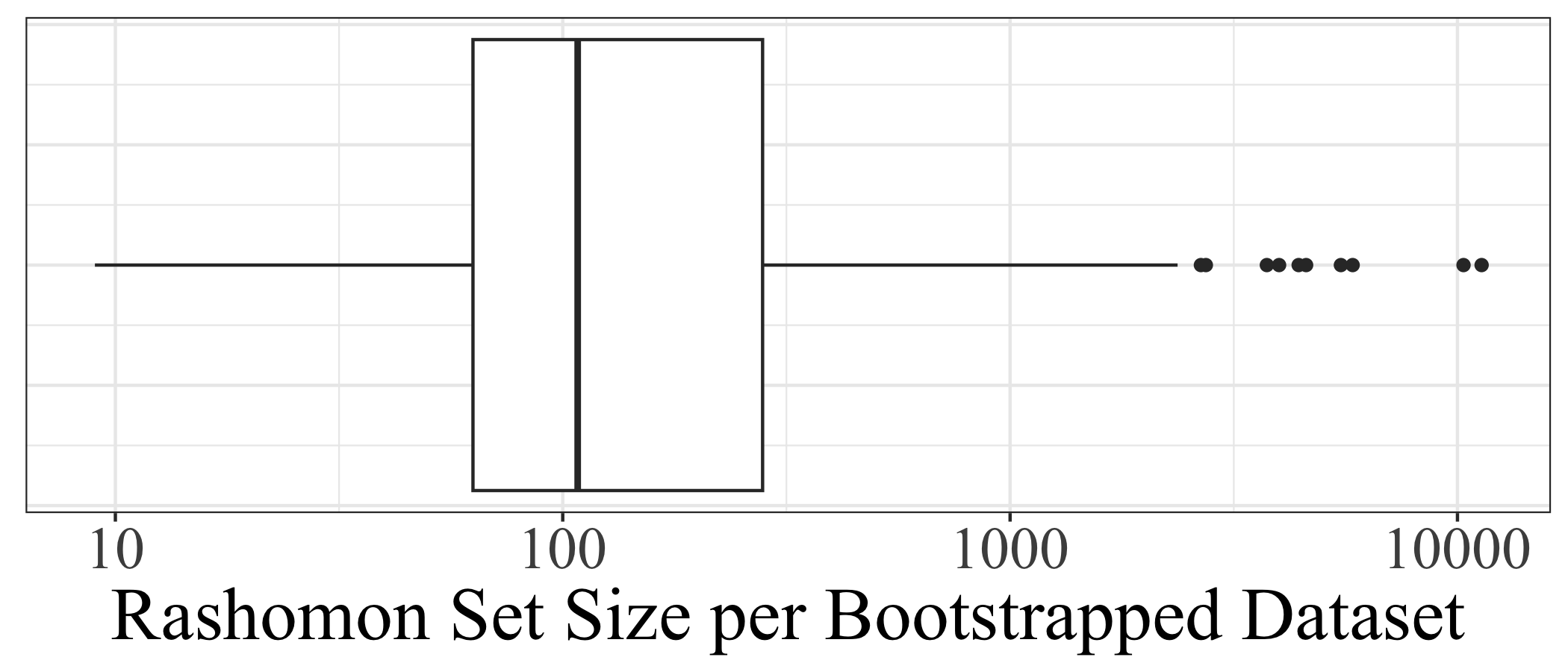}
         \caption{The number of models in each Rashomon set}
     \end{subfigure}
     \begin{subfigure}{0.48\textwidth}
         \includegraphics[width=\textwidth]{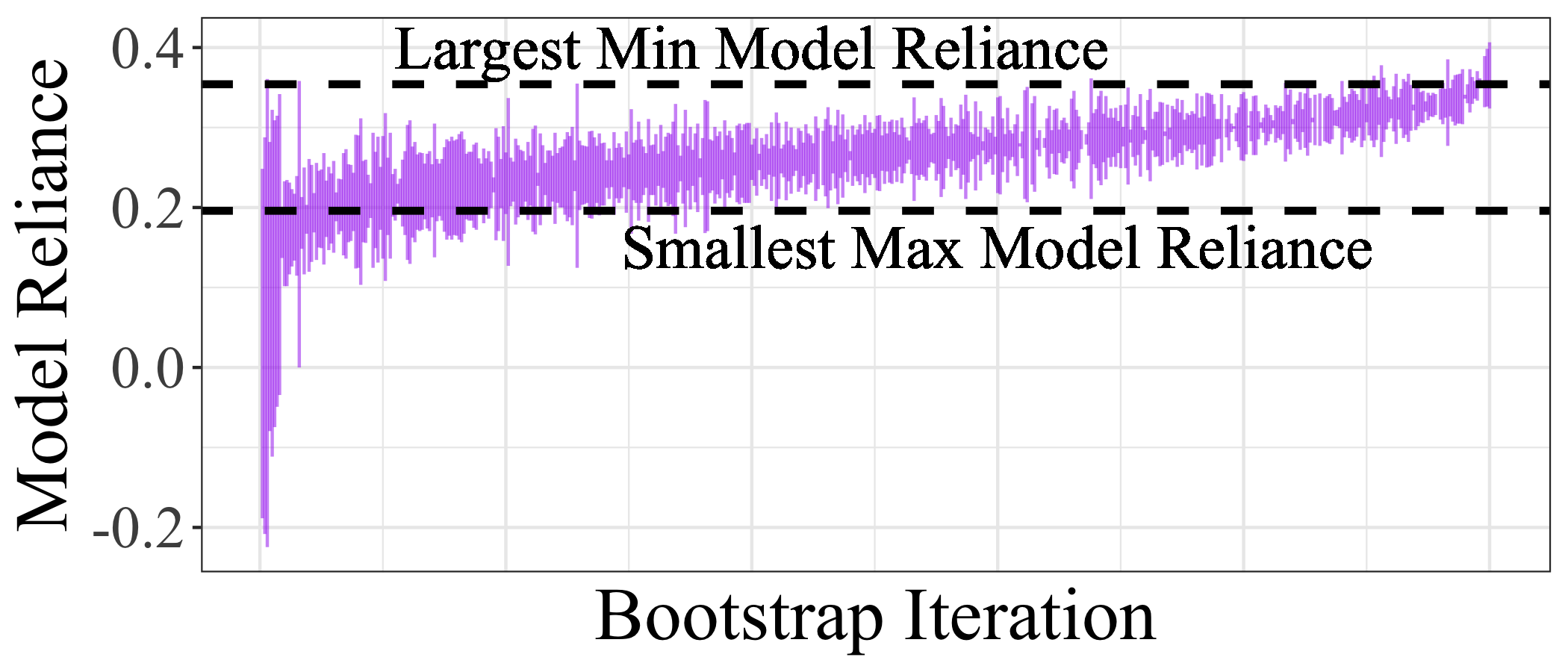}
         \caption{Model reliance range across each Rashomon set}
     \end{subfigure}
     \caption{Statistics of Rashomon sets computed across 500 bootstrap replicates of a given dataset sampled from the Monk 3 data generation process \citep{thrun1991monk}. The original dataset consisted of 124 observations, and the Rashomon set was calculated using its definition in Equation \ref{eqn:Rset}, with parameters specified in Section D of the supplement. The Rashomon set size is the number of models with loss below a threshold. Model reliance is a measure of variable importance for a single variable --- in this case, $X_2$ --- and Model Class Reliance (MCR) is its range over the Rashomon set.  Both the Rashomon set size and model class reliance are unstable across bootstrap iterations.}
    \label{fig:mcr_unstable}
\end{figure}

\begin{figure}
     \centering
     \begin{subfigure}{0.11\textwidth}
         \includegraphics[width=\textwidth]{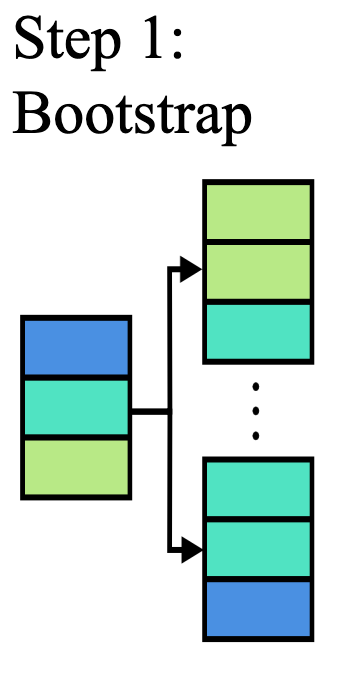}
     \end{subfigure}
     \begin{subfigure}{0.21\textwidth}
         \includegraphics[width=\textwidth]{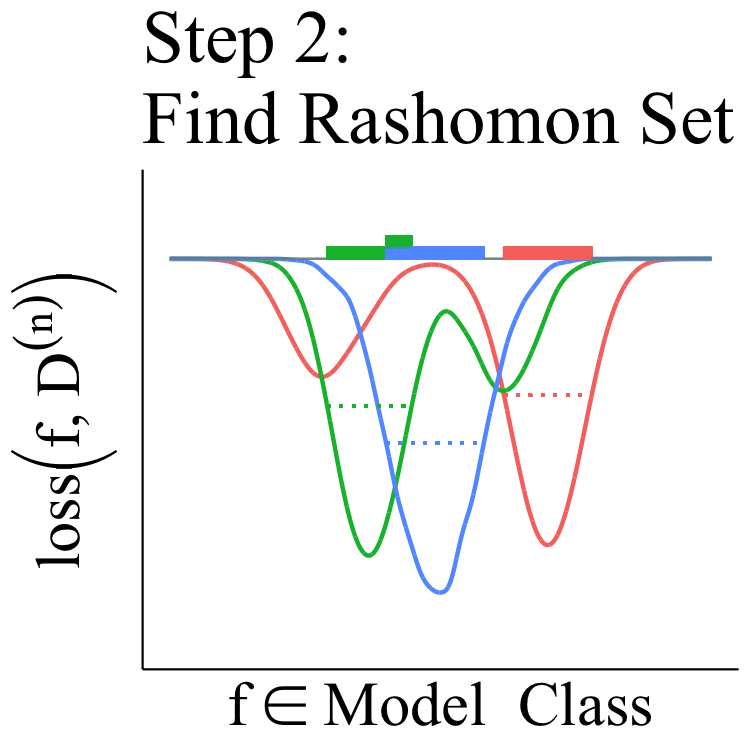}
     \end{subfigure}
     \begin{subfigure}{0.21\textwidth}
         \includegraphics[width=\textwidth]{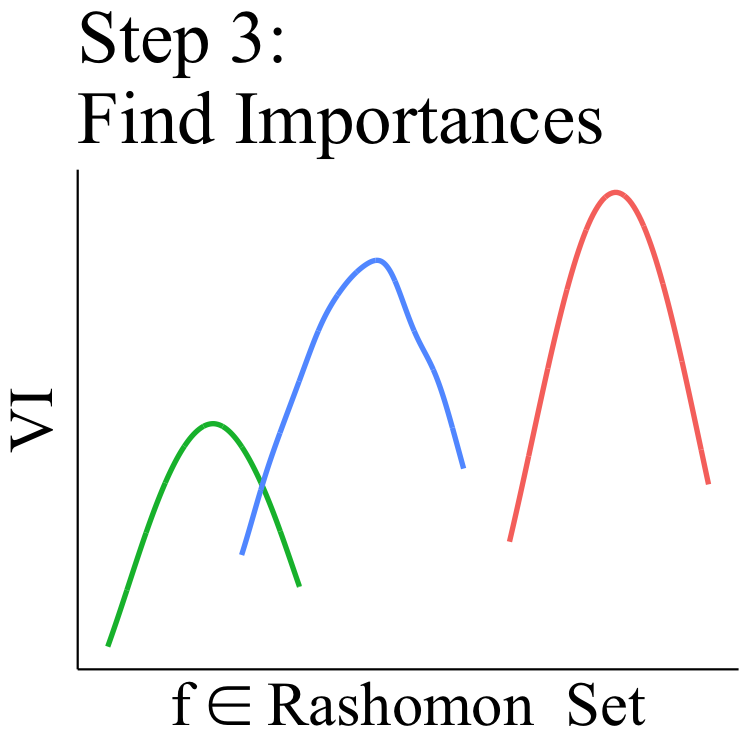}
     \end{subfigure}
     \begin{subfigure}{0.21\textwidth}
         \includegraphics[width=\textwidth]{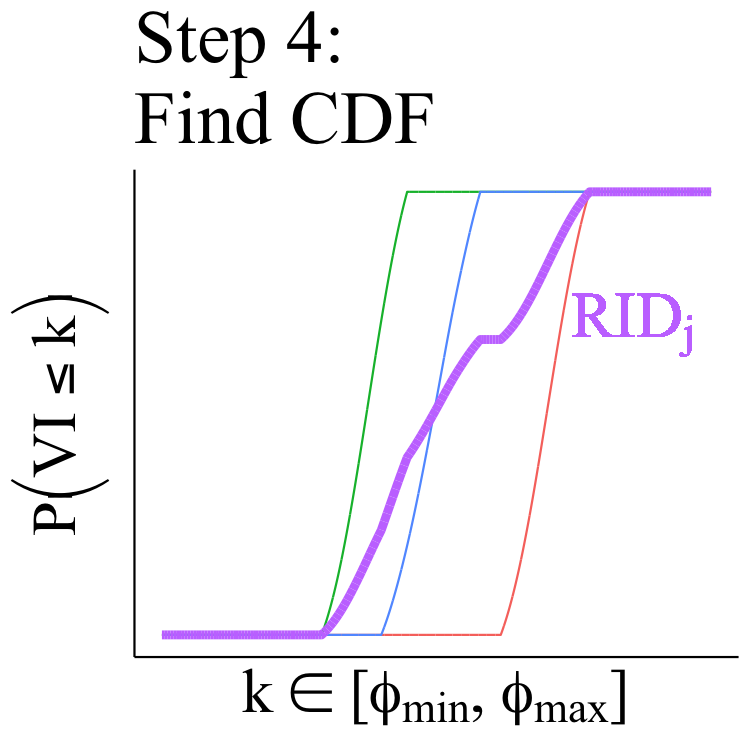}
     \end{subfigure}
     \begin{subfigure}{0.21\textwidth}
         \includegraphics[width=\textwidth]{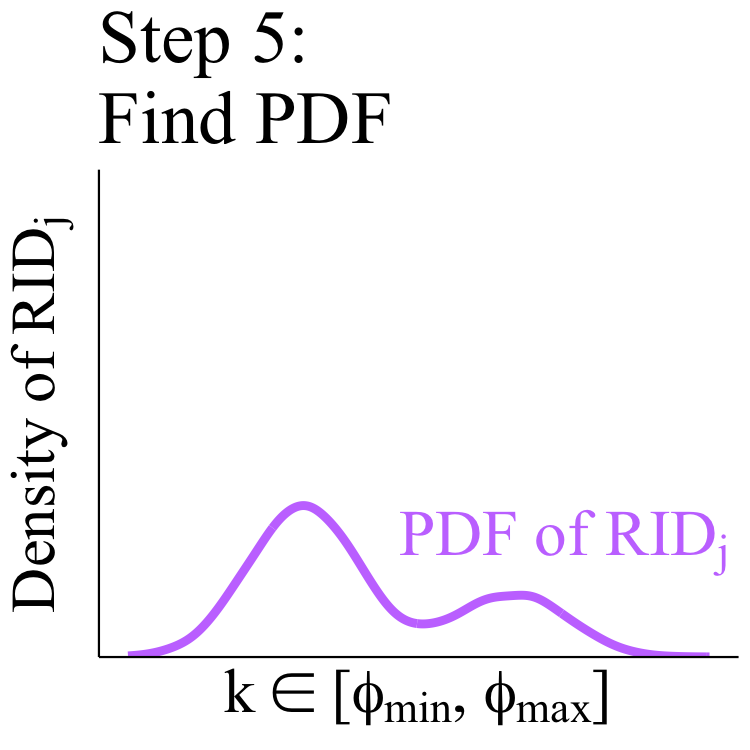}
     \end{subfigure}
        \caption{An overview of our framework. \textbf{Step 1:} We bootstrap multiple datasets from the original. \textbf{Step 2:} We show the loss values over the model class for each bootstrapped dataset, differentiated by color. The dotted line marks the Rashomon threshold; all models whose loss is under the threshold are in the Rashomon set for that bootstrapped dataset. On top, we highlight the number of bootstrapped datasets for which the corresponding model is in the Rashomon set. \textbf{Step 3:} We then compute the distribution of model reliance (variable importance -- VI) values for variable $j$ across the Rashomon set for each bootstrapped dataset. \textbf{Step 4:} We then average the corresponding CDF across bootstrap replicates into a single CDF (in purple). \textbf{Step 5:} Using the CDF, we compute the marginal distribution (PDF) of variable importance for variable $j$ across the Rashomon sets of bootstrapped datasets.}
        \label{fig:rashomonPipeline}
\end{figure}

In this work, we present a framework unifying concepts from classical nonparametric estimation with recent developments on Rashomon sets to overcome the limitations of traditional variable importance measurements. We propose a stable, model- and variable-importance-metric-agnostic estimand that quantifies variable importance across all good models for the empirical data distribution and a corresponding bootstrap-style estimation strategy. 
Our method creates a cumulative density function (CDF) for variable importance over all variables via the framework shown in Figure \ref{fig:rashomonPipeline}. Using the CDF, we can compute a variety of statistics (e.g., expected variable importance, interquartile range, and credible regions) that can summarize the variable importance distribution.

The rest of this work is structured as follows. After more formally introducing our variable importance framework, we theoretically guarantee the convergence of our estimation strategy and derive error bounds. We also demonstrate experimentally that our estimand captures the true variable importance for the data generating process more accurately than previous work. Additionally, we illustrate the generalizability of our variable importance metric by analyzing the reproducibility of our results given new datasets from the same data generation process. Lastly, we use our method to analyze which transcripts and chromatin patterns in human T cells are associated with high expression of HIV RNA. Our results suggest an unexplored link between the LINC00486 gene and HIV load. 


Code is available at \href{https://github.com/jdonnelly36/Rashomon_Importance_Distribution}{https://github.com/jdonnelly36/Rashomon\_Importance\_Distribution}.
\section{Related Work}\label{sec:related_work}
The key difference between our work and most others is the way it incorporates model uncertainty, also called the Rashomon effect \citep{breiman2001statistical}. The Rashomon effect is the phenomenon in which many different models explain a dataset equally well. It has been documented in high stakes domains including healthcare, finance, and recidivism prediction \citep{d2020underspecification, marx2020predictive, FisherRuDo19}. The Rashomon effect has been leveraged to create uncertainty sets for robust optimization \citep{tulabandhula2014robust}, to perform responsible statistical inference \citep{coker2021theory}, and to gauge whether simple yet accurate models exist for a given dataset \citep{semenova2022existence}.
One barrier to studying the Rashomon effect is the fact that \textit{Rashomon sets} are computationally hard to calculate for non-trivial model classes. Only within the past year has code been made available to solve for (and store) the full Rashomon set for any nonlinear function class -- that of decision trees \citep{xin2022exploring}. This work enables us to revisit the study of variable importance with a new lens.

A classical way to determine the importance of a variable is to leave it out and see if the loss changes. This is called algorithmic reliance \citep{FisherRuDo19} or leave one covariate out (LOCO) inference 
\citep{lei2018distribution, rinaldo2019bootstrapping}. The problem with these approaches is that the performance of the model produced by an algorithm will not change if there exist other variables correlated with the variable of interest.

Model reliance (MR) methods capture the global variable importance (VI) of a given feature \textit{for a specific model} \citep{FisherRuDo19}. (Note that MR is limited to refer to permutation importance in \cite{FisherRuDo19}, while we use the term MR to refer to any metric capturing global variable importance of a given feature and model. We use VI and MR interchangably when the relevant model is clear from context.) 
Several methods for measuring the MR of a model from a specific model class exist, including the variable importance measure from random forest which uses out-of-bag samples \citep{breiman2001statistical} and Lasso regression coefficients \citep{hastie2009elements}. \citet{lundberg2018consistent} introduce a way of measuring MR in tree ensembles using SHAP \citep{SHAP}. \citet{williamson2021general} develop MR based on the change in performance between the optimal model and the optimal model using a subset of features. 

In addition to the metrics tied to a specific model class, many MR methods can be applied to models \textit{from any model class}. Traditional correlation measures \citep{hastie2009elements} can measure the linear relationship (Pearson correlation) or general monotonic relationship (Spearman correlation) between a feature and predicted outcomes for a model from any model class. Permutation model reliance, as discussed by \cite{altmann2010permutation, FisherRuDo19, hooker2021unrestricted}, describes how much worse a model performs when the values of a given feature are permuted such that the feature becomes uninformative. Shapley-based measures of MR, such as those of \cite{williamson2020efficient, SHAP}, calculate the average marginal contribution of each feature to a model's predictions. A complete overview of the variable importance literature is beyond the scope of this work; for a more thorough review, see, for example,
\cite{arrieta2020explainable, minh2022explainable}. Rather than calculating the importance of a variable for a single model, our framework finds the importance of a variable for all models within a Rashomon set, although our framework is applicable to \textit{all} of these model reliance metrics.


In contrast, model class reliance (MCR) methods describe how much a \textit{class of models} (e.g., decision trees) relies on a variable. \citet{FisherRuDo19} uses the Rashomon set to provide bounds on the possible \textit{range} of model reliance for good models of a given class. \citet{NEURIPS2020_fd512441} analytically find the range of model reliance for the model class of random forests. \citet{zhang2020floodgate} introduce a way to compute confidence bounds for a specific variable importance metric over arbitrary models, which \citet{massimo2022floodgate} extend so that it is applicable to a  broad class of surrogate models in pursuit of computational efficiency. These methods report MCR as a range, which gives no estimate of variable importance -- only a range of what values are possible. In contrast, \citet{dong2020exploring} compute and visualize the variable importance for every member of a given Rashomon set in projected spaces, calculating a set of points; however, these methods have no guarantees of stability to reasonable data perturbations.
In contrast, our framework overcomes these finite sample biases, supporting stronger conclusions about the underlying data distribution.

Related to our work from the stability perspective, \citet{duncan2022veridicalflow} developed a software package to evaluate the stability of permutation variable importance in random forest methods; we perform a similar exercise to demonstrate that current variable importance metrics computed for the Rashomon set are not stable. Additionally, \citet{basu2018iterative} introduced iterative random forests by iteratively reweighting trees and bootstrapping to find \textit{stable} higher-order interactions from random forests. Further, theoretical results have demonstrated that bootstrapping stabilizes many machine learning algorithms and reduces the variance of statistics \citep{grandvalet2006stability, buhlmann2002analyzing}. We also take advantage of bootstrapping's flexibility and properties to ensure stability for our variable importance.

\section{Methods} \label{sec:methods}
\subsection{Definitions and Estimands} \label{sec:defEstimands}
Let $\mathcal{D}^{(n)} = \{(X_i, Y_i)\}_{i=1}^n$ denote a dataset of $n$  independent and identically distributed tuples, where $Y_i\in \mathbb{R}$ denotes some outcome of interest and $X_i\in \mathbb{R}^p$ denotes a vector of $p$ covariates. Let $g^*$ represent the data generating process (DGP) producing $\Dn$. Let $f \in \mathcal{F}$ be a model in a model class (e.g., a tree in the set of all possible sparse decision trees), and let $\phi_j\left(f, \mathcal{D}^{(n)}\right)$ denote a function that measures the importance of variable $j$ for a model $f$ over a dataset $\Dn.$ This can be any of the functions described earlier (e.g., permutation importance, SHAP). Our framework is flexible with respect to the user's choice of $\phi_j$ and enables practitioners to use the variable importance metric best suited for their purpose; for instance, conditional model reliance \cite{FisherRuDo19} is best-suited to measure only the unique information carried by the variable (that cannot be constructed using other variables), whereas other metrics like subtractive model reliance consider the unconditional importance of the variable. Our framework is easily integrable with either of these. We only assume that the variable importance function $\phi$ has a bounded range, which holds for a wide class of metrics like SHAP \cite{SHAP}, permutation model reliance, and conditional model reliance. Finally, let $\ell(f, \Dn; \lambda)$ represent a loss function given $f, \Dn,$ and loss hyperparameters $\lambda$ (e.g., regularization).  We assume that our loss function is bounded above and below, which is true for common loss functions like 0-1 classification loss, as well as for differentiable loss functions with covariates from a bounded domain.

In an ideal setting, we would measure variable importance using $g^*$ and the whole population, but this is impossible because $g^*$ is unknown and data is finite. 
In practice, scientists instead use the empirical loss minimizer for a specific dataset $\hat{f}^* \in \arg\min_{f \in \mathcal{F}} \ell(f, \Dn)$; however, several models could explain the same dataset equally well (i.e., the Rashomon effect). Rather than using a single model to compute variable importance, we propose using the entire Rashomon set.
Given a single dataset $\Dn$, we define the \textbf{Rashomon set} for a model class $\mathcal{F}$ and parameter $\varepsilon$  as the set of all models in $\mathcal{F}$ whose empirical losses are within some bound $\varepsilon > 0$ of the empirical loss minimizer:
\begin{align} \label{eqn:Rset} 
    \mathcal{R}(\epsilon, \mathcal{F}, \ell, \Dn, \lambda) = \left\{ f \in \mathcal{F} : \ell(f, \Dn; \lambda) \leq \min_{f' \in \mathcal{F}} \ell(f', \Dn; \lambda) + \varepsilon \right\}.
\end{align}
We denote this Rashomon set by $\mathcal{R}^{\varepsilon}_{\Dn}$ or ``Rset'' (this assumes a fixed $\mathcal{F}$, $\ell$ and $\lambda$). 
As discussed earlier, the Rashomon set can be fully computed and stored for non-linear models (e.g., sparse decision trees \citep{xin2022exploring}). For notational simplicity, we often omit $\lambda$ from the loss function.

While the Rashomon set describes the set of good models for a \textit{single} dataset, Rashomon sets vary across permutations (e.g., subsampling and resampling schemes) of the given data. 
We introduce a stable quantity for variable importance that accounts for all good models and all permutations from the data using the random variable $\ourvi$. $\ourvi$ is defined by its cumulative distribution function (CDF), the \textbf{Rashomon Importance Distribution ($\ourcdf$)}:  
\begin{align} \label{eq:RID}
    \ourcdf_j(k; \varepsilon, \mathcal{F}, \ell, \mathcal{P}_n, \lambda) &=\mathbb{P}_{\Dn_b \sim \mathcal{P}_n}(\ourvi_j(\varepsilon, \mathcal{F}, \ell, \Dn_b; \lambda) \leq k)\\
    &:= \mathbb{E}_{\Dn_b \sim \mathcal{P}_n}\left[\frac{|\{f \in \mathcal{R}^{\varepsilon}_{\Dn_b} : \phi_j(f, \Dn_b) \leq k\}|}{|\mathcal{R}^{\varepsilon}_{\Dn_b}|} \right]\nonumber\\
    &= \mathbb{E}_{\Dn_b \sim \mathcal{P}_n} \left[\frac{\textrm{vol of Rset s.t. variable $j$'s importance is at most $k$}}{\textrm{vol of Rset}}\right], \nonumber
\end{align}
where $\phi_j$ denotes the variable importance metric being computed on variable $j$, $k \in [\phi_{\min}, \phi_{\max}]$. For a continuous model class (e.g., linear regression models), the cardinality in the above definition becomes the volume under a measure on the function class, usually $\ell_2$ on parameter space. \ourcdf{} constructs the cumulative distribution function (CDF) for the distribution of variable importance across Rashomon sets; as $k$ increases, the value of $\mathbb{P}(\ourvi_j(\varepsilon, \mathcal{F}, \ell, \Dn_b; \lambda) \leq k)$ becomes closer to 1. The probability and expectation are taken with respect to datasets of size $n$ sampled from the empirical distribution $\mathcal{P}_n$, which is the same as considering all possible resamples of size $n$ from the originally observed dataset $\Dn$.
Equation (\ref{eq:RID}) weights the contribution of 
$\phi_j(f, \mathcal{D}^{(n)}_b)$ for each model $f$ by the proportion of datasets for which this model is a good explanation (i.e., in the Rashomon set). Intuitively, this provides greater weight to the importance of variables for stable models. 

We now define an analogous metric for the loss function $\ell$; we define the \textbf{Rashomon Loss Distribution (\RLDCDF)} evaluated at $k$ as the expected fraction of functions in the Rashomon set with loss below $k$. Here, $\RLD$ is a random variable following this CDF.
\begin{align*}
    \RLDCDF(k; \varepsilon, \mathcal{F}, \ell, \mathcal{P}_n, \lambda) &= \mathbb{P}_{\Dn_b \sim \mathcal{P}_n}\left(\RLD{}(\varepsilon, \mathcal{F}, \ell, \Dn_b; \lambda) \leq k\right)\\ 
    &:= \mathbb{E}_{\Dn_b \sim \mathcal{P}_n}\left[\frac{|\{f \in \mathcal{R}^{\varepsilon}_{\Dn_b} : \ell(f, \Dn_b) \leq k \} |}{|\mathcal{R}^{\varepsilon}_{\Dn_b}|} \right] \\
    &= \mathbb{E}_{\Dn_b \sim \mathcal{P}_n}\left[\frac{|\mathcal{R}\left(k - \min_{f \in \mathcal{F}}\ell(f, \Dn_b), \mathcal{F}, \ell; \lambda\right)|}{|\mathcal{R}(\varepsilon, \mathcal{F}, \ell; \lambda)|} \right].
\end{align*}
This quantity shows how quickly the Rashomon set ``fills up'' on average as loss changes. If there are many near-optimal functions, this will grow quickly with $k$.

In order to connect \RID{} for model class $\mathcal{F}$ to the unknown DGP $g^*$, we make a Lipschitz-continuity-style assumption on the relationship between \RLDCDF{} and $\RID$ relative to a general model class $\mathcal{F}$ and $\{g^*\}.$  To draw this connection, we define the loss CDF for $g^*$, called $LD^*$, over datasets of size $n$ as:
\begin{align*}
    LD^*(k; \ell, n, \mathcal{P}_n, \lambda) := \mathbb{E}_{\Dn_b \sim \mathcal{P}_n} \left[\mathbb{1}[\ell(g^*, \Dn_b) \leq k]\right].
\end{align*}
One could think of $LD^*$ as measuring how quickly the DGP's Rashomon set fills up as loss changes. Here, $LD^*$ is the analog of \RLDCDF{} for the data generation process.



\begin{assumption} \label{asm:lipschitz}
    If 
    \begin{align*}
        \rho\left(\RLDCDF(\cdot; \varepsilon, \mathcal{F}, \ell, \mathcal{P}_n,\lambda),
     \LD(\cdot; \ell, n, \mathcal{P}_n,\lambda) \right) &\leq \gamma \text{ then } \\
        \rho\left(\ourcdf_j(\cdot ; \varepsilon, \mathcal{F}, \ell, \mathcal{P}_n, \lambda), \ourcdf_j(\cdot ; \varepsilon, \{g^*\}, \ell, \mathcal{P}_n, \lambda) \right) &\leq d(\gamma)
    \end{align*}
    for a function $d: [0, \ell_{\max} - \ell_{\min}] \to [0, \phi_{\max} - \phi_{\min}]$ such that $\lim_{\gamma \to 0}d(\gamma)=0.$ Here, $\rho$ represents any distributional distance metric (e.g., 1-Wasserstein).
\end{assumption}

Assumption \ref{asm:lipschitz} says that a Rashomon set consisting of good approximations for $g^*$ in terms of loss will also consist of good approximations for $g^*$ in terms of variable importance.
More formally, from this assumption, we know that as $\rho(LD^*, \RLDCDF) \to 0$, the variable importance distributions will converge: $\rho\left(\ourcdf_j(\cdot, \varepsilon, \mathcal{F}, \ell, \mathcal{P}_n, \lambda), \ourcdf_j(\cdot, \varepsilon, \{g^*\}, \ell, \mathcal{P}_n, \lambda) \right) \to 0.$ We demonstrate that this assumption is realistic for a variety of model classes like linear models and generalized additive models in Section C of the supplement.



\subsection{Estimation}
We estimate $\ourcdf_j$ for each variable $j$ by leveraging bootstrap sampling to draw new datasets from the empirical data distribution: we sample observations from an observed dataset, construct its Rashomon set, and compute the $j$-th variable's importance for each model in the Rashomon set. After repeating this process for $B$ bootstrap iterations, we estimate \ourcdf{} by weighting each model $f$'s realized variable importance score (evaluated on each bootstrapped dataset) by the proportion of the bootstrapped datasets for which $f$ is in the Rashomon set \textit{and} the size of each Rashomon set in which $f$ appears.
Specifically, let $\Dn_b$ represent the dataset sampled with replacement from $\Dn$ in iteration $b = 1, \ldots, B$ of the bootstrap procedure. For each dataset $\Dn_b$, we find the Rashomon set $\mathcal{R}_{\Dn_b}^{\varepsilon}$. Finally, we compute an \textbf{empirical estimate $\widehat{\ourcdf}_j$} of $\ourcdf_j$ by computing:
\begin{align}
    \widehat{\ourcdf}_j(k; \varepsilon, \mathcal{F}, \ell, \mathcal{P}_n, \lambda)
  = \frac{1}{B}\sum_{b = 1}^B \left(\frac{|\{f \in \mathcal{R}^{\varepsilon}_{\Dn_b} : \phi_j(f, \Dn_b) \leq k \}|}{|\mathcal{R}^{\varepsilon}_{\Dn_b}|} \right). \notag
\end{align}

Under Assumption \ref{asm:lipschitz}, we can directly connect our estimate $\widehat{\ourcdf}(k; \varepsilon, \mathcal{F}, \ell, \mathcal{P}_n, \lambda)$ to the DGP's variable importance distribution $\ourcdf{}(k; \ell_{\text{max}}, \{g^*\}, \ell, \mathcal{P}_n, \lambda)$, which Theorem \ref{thm:wasser} formalizes.

\begin{theorem} \label{thm:wasser}
    Let Assumption \ref{asm:lipschitz} hold for distributional distance $\rho(A_1, A_2)$ between distributions $A_1$ and $A_2$.
    For any $t > 0$, $j \in \{0, \hdots, p\}$ as  $\rho\left(LD^*(\cdot; \ell, n, \lambda), \RLDCDF(\cdot; \varepsilon, \mathcal{F}, \ell, \mathcal{P}_n, \lambda) \right) \to 0$ and $B \to \infty$,
    \begin{align*}
        \mathbb{P}\left( \left| \widehat{\ourcdf}_j(k; \varepsilon, \mathcal{F}, \ell, \mathcal{P}_n, \lambda) - \ourcdf_j(k; \varepsilon, \{g^*\}, \ell, \mathcal{P}_n, \lambda) \right| \geq t \right) \to 0.
    \end{align*}
\end{theorem}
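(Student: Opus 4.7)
The plan is to apply a triangle inequality and control the two resulting terms independently. Writing $\ourcdf_j^{\mathcal{F}}(k) := \ourcdf_j(k; \varepsilon, \mathcal{F}, \ell, \mathcal{P}_n, \lambda)$ and $\ourcdf_j^{*}(k) := \ourcdf_j(k; \varepsilon, \{g^*\}, \ell, \mathcal{P}_n, \lambda)$, I would decompose
\[
\bigl|\widehat{\ourcdf}_j(k) - \ourcdf_j^{*}(k)\bigr| \;\leq\; \underbrace{\bigl|\widehat{\ourcdf}_j(k) - \ourcdf_j^{\mathcal{F}}(k)\bigr|}_{\text{(A) Monte Carlo error}} \;+\; \underbrace{\bigl|\ourcdf_j^{\mathcal{F}}(k) - \ourcdf_j^{*}(k)\bigr|}_{\text{(B) model class bias}},
\]
and show that each term is less than $t/2$ with probability tending to $1$ under the stated limits. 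A union bound then gives the theorem.

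For term (A), I would observe that $\widehat{\ourcdf}_j(k)$ is by construction the empirical mean over bootstrap replicates of the i.i.d.\ random variables
\[
Z_b \;=\; \frac{\bigl|\{f \in \mathcal{R}^{\varepsilon}_{\Dn_b} : \phi_j(f, \Dn_b) \leq k\}\bigr|}{|\mathcal{R}^{\varepsilon}_{\Dn_b}|},
\]
each of which is supported on $[0,1]$ and has expectation $\ourcdf_j^{\mathcal{F}}(k)$ by the definition in \eqref{eq:RID}. Hoeffding's inequality then yields $\mathbb{P}(|\widehat{\ourcdf}_j(k)-\ourcdf_j^{\mathcal{F}}(k)|\geq t/2) \leq 2\exp(-Bt^2/2)$, which vanishes as $B \to \infty$. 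This step is essentially routine.

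For term (B), I would invoke Assumption \ref{asm:lipschitz} directly. Let $\gamma := \rho(\LD(\cdot;\ell,n,\mathcal{P}_n,\lambda),\RLDCDF(\cdot;\varepsilon,\mathcal{F},\ell,\mathcal{P}_n,\lambda))$; by hypothesis $\gamma \to 0$, so Assumption \ref{asm:lipschitz} gives $\rho(\ourcdf_j^{\mathcal{F}}, \ourcdf_j^{*}) \leq d(\gamma) \to 0$. Note that when the model class is the singleton $\{g^*\}$ the Rashomon set collapses to $\{g^*\}$ regardless of $\varepsilon$, so $\ourcdf_j^{*}$ is simply the CDF of $\phi_j(g^*,\Dn_b)$ under bootstrap resampling, and is a well-defined reference object.

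The main obstacle is converting the distributional closeness delivered by Assumption \ref{asm:lipschitz} into pointwise closeness at the fixed evaluation point $k$ that appears in the theorem. If $\rho$ is taken to be a metric that dominates pointwise differences of CDFs, such as the Kolmogorov--Smirnov distance, then $|\ourcdf_j^{\mathcal{F}}(k) - \ourcdf_j^{*}(k)| \leq \rho(\ourcdf_j^{\mathcal{F}}, \ourcdf_j^{*}) \leq d(\gamma)$ and the argument closes immediately. For weaker metrics like the $1$-Wasserstein distance, I would need an additional continuity assumption on $\ourcdf_j^{*}$ at $k$ (or would restrict the theorem to continuity points of the limiting CDF, as is standard in portmanteau-type results) to upgrade integral closeness to pointwise closeness. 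I expect this bridge between distributional and pointwise convergence, rather than the Hoeffding or assumption-application steps, to be where the proof actually requires care.
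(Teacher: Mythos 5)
Your proposal follows essentially the same route as the paper's proof: add and subtract $\ourcdf_j(k;\varepsilon,\mathcal{F},\ell,\mathcal{P}_n,\lambda)$, apply the triangle inequality and a union bound, kill the Monte Carlo term with Hoeffding's inequality as $B \to \infty$ (the paper's Theorem A.2 and Corollary A.1), and kill the bias term via Assumption \ref{asm:lipschitz}. One remark: the bridge you flag as the delicate step --- converting $\rho$-closeness of the two distributions into pointwise closeness of the CDFs at the fixed $k$ --- is passed over silently in the paper's proof, which simply asserts that Assumption \ref{asm:lipschitz} forces $\left|\ourcdf_j(k;\varepsilon,\mathcal{F},\ell,\mathcal{P}_n,\lambda) - \ourcdf_j(k;\varepsilon,\{g^*\},\ell,\mathcal{P}_n,\lambda)\right| \to 0$; your observation that this requires either a CDF-dominating metric such as Kolmogorov--Smirnov or a continuity condition at $k$ is correct and is a point on which your write-up is more careful than the paper's.
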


For a set of models that performs sufficiently well in terms of loss, $\widehat{\RID}_j$ thus recovers the CDF of variable importance for the true model across all reasonable perturbations.
Further, we can provide a finite sample bound for the estimation of a marginal distribution between $\widehat{\ourcdf}_j$ and $\ourcdf_j$ for the model class $\mathcal{F}$, as stated in Theorem \ref{thm:ourvi_finite_sample}. 
Note that this result does not require Assumption \ref{asm:lipschitz}. 

\begin{theorem} \label{thm:ourvi_finite_sample}
Let $t > 0$ and $\delta \in (0, 1)$ be some pre-specified values. Then, with probability at least $1 - \delta$ with respect to bootstrap samples of size $n$,
\begin{align}
    \left| \widehat{\ourcdf}_j(k; \varepsilon, \mathcal{F}, \ell, \mathcal{P}_n, \lambda) - \ourcdf_j(k; \varepsilon, \mathcal{F}, \ell, \mathcal{P}_n, \lambda) \right| \leq t
\end{align}
with number of bootstrap samples $B \geq \frac{1}{2t^2}\ln\left( \frac{2}{\delta} \right)$ for any $k \in [\phi_{\min}, \phi_{\max}]$.
\end{theorem}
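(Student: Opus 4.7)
The plan is to recognize that $\widehat{\ourcdf}_j(k;\varepsilon,\mathcal{F},\ell,\mathcal{P}_n,\lambda)$ is exactly the empirical mean of $B$ i.i.d. bounded random variables whose common expectation is $\ourcdf_j(k;\varepsilon,\mathcal{F},\ell,\mathcal{P}_n,\lambda)$, and then to invoke Hoeffding's inequality. Concretely, for each bootstrap iteration $b=1,\dots,B$ let
\begin{equation*}
Z_b \;:=\; \frac{\bigl|\{f\in\mathcal{R}^{\varepsilon}_{\Dn_b}:\phi_j(f,\Dn_b)\leq k\}\bigr|}{|\mathcal{R}^{\varepsilon}_{\Dn_b}|}.
\end{equation*}
Because the bootstrap samples $\Dn_b$ are drawn i.i.d.\ from $\mathcal{P}_n$, the variables $Z_1,\dots,Z_B$ are i.i.d. Each $Z_b$ is a ratio of a subset size to the ambient Rashomon-set size, so $Z_b\in[0,1]$ deterministically. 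By the definition of $\ourcdf_j$ in Equation (\ref{eq:RID}) we have $\mathbb{E}[Z_b]=\ourcdf_j(k;\varepsilon,\mathcal{F},\ell,\mathcal{P}_n,\lambda)$, and by construction $\widehat{\ourcdf}_j(k;\varepsilon,\mathcal{F},\ell,\mathcal{P}_n,\lambda)=\tfrac{1}{B}\sum_{b=1}^{B} Z_b$.

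Next I would apply the standard two-sided Hoeffding inequality for bounded i.i.d.\ variables on $[0,1]$:
\begin{equation*}
\mathbb{P}\!\left(\left|\tfrac{1}{B}\sum_{b=1}^{B} Z_b-\mathbb{E}[Z_1]\right|\geq t\right)\;\leq\;2\exp(-2Bt^{2}).
\end{equation*}
Setting the right-hand side to be at most $\delta$ and solving gives $B\geq \tfrac{1}{2t^{2}}\ln(2/\delta)$, which is exactly the bound in the statement. Since $t$ and $\delta$ are free and $k$ is fixed in the statement, this yields the desired pointwise (in $k$) finite-sample guarantee, and no appeal to Assumption \ref{asm:lipschitz} is needed because the result compares the estimator to its own expectation rather than to any quantity involving $g^{*}$.

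There is essentially no deep obstacle: the only subtlety worth spelling out is justifying the i.i.d.\ and boundedness properties of $Z_b$. Boundedness is immediate since the numerator never exceeds the denominator and $|\mathcal{R}^{\varepsilon}_{\Dn_b}|>0$ whenever the empirical risk minimizer exists, which holds under the paper's blanket assumption that the loss is bounded and the model class admits a minimizer. The i.i.d.\ property follows because the bootstrap draws $\Dn_b\sim\mathcal{P}_n$ are independent by construction and $Z_b$ is a fixed deterministic functional of $\Dn_b$ (the Rashomon set, $\phi_j$, and the threshold $k$ are all fixed). With these two observations in place, Hoeffding's inequality closes the argument in a single line.
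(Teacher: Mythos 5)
Your proposal is correct and follows essentially the same route as the paper's own proof: identify the per-bootstrap ratio as a $[0,1]$-bounded i.i.d.\ quantity whose expectation is $\ourcdf_j(k)$, apply the two-sided Hoeffding inequality to get the $2\exp(-2Bt^2)$ tail, and solve for $B$. Your added remarks on why the ratio is well-defined and why no appeal to Assumption \ref{asm:lipschitz} is needed are accurate but do not change the argument.
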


Because we use a bootstrap procedure, we can control the number of bootstrap iterations to ensure that the difference between $\ourcdf_j$ and $\widehat{\ourcdf}_j$ is within some pre-specified error. (As defined earlier, $\ourcdf_j$ is the expectation over infinite bootstraps, whereas $\widehat{\ourcdf}_j$ is the empirical average over $B$ bootstraps.) For example, after 471 bootstrap iterations, we find that $\widehat{\ourcdf}_j(k; \varepsilon, \mathcal{F}, \ell, \mathcal{P}_n, \lambda)$ is within 0.075 of $\ourcdf_j(k; \varepsilon, \mathcal{F}, \ell, \mathcal{P}_n, \lambda)$ for any given $k$ with 90\% confidence. It also follows that as $B$ tends to infinity, the estimated $\ourvi_j$ will converge to the true value. 

Since we stably estimate the entire distribution of variable importance values, we can create (1) stable point estimates of variable importance (e.g., expected variable importance) that account for the Rashomon effect, (2) interquantile ranges of variable importance, and (3) confidence regions that characterize uncertainty around a point estimate of variable importance. We prove exponential rates of convergence for these statistics estimated using our framework in Section B of the supplement.

Because our estimand and our estimation strategy (1) enable us to manage instability, (2) account for the Rashomon effect, and (3) are completely model-agnostic and flexibly work with most existing variable importance metrics, $\ourcdf$ is a valuable quantification of variable importance. 
\section{Experiments With Known Data Generation Processes} \label{sec:experiments}
\begin{figure*}[ht]
    \centering

     \includegraphics[width=\textwidth]{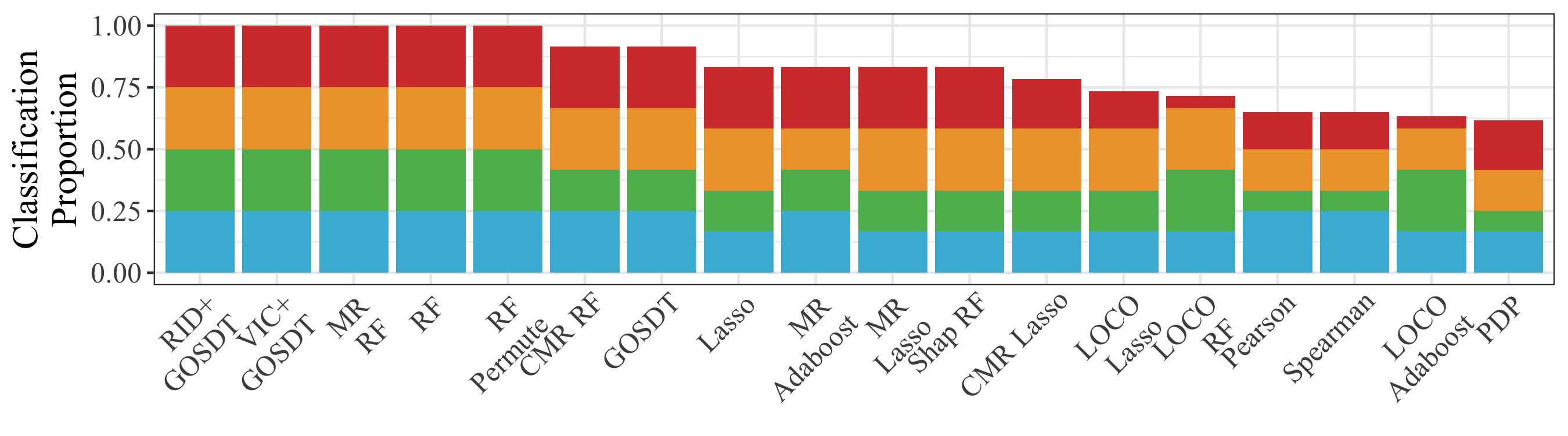}
     \begin{subfigure}{\textwidth}
         \centering
         \includegraphics[width=\textwidth]{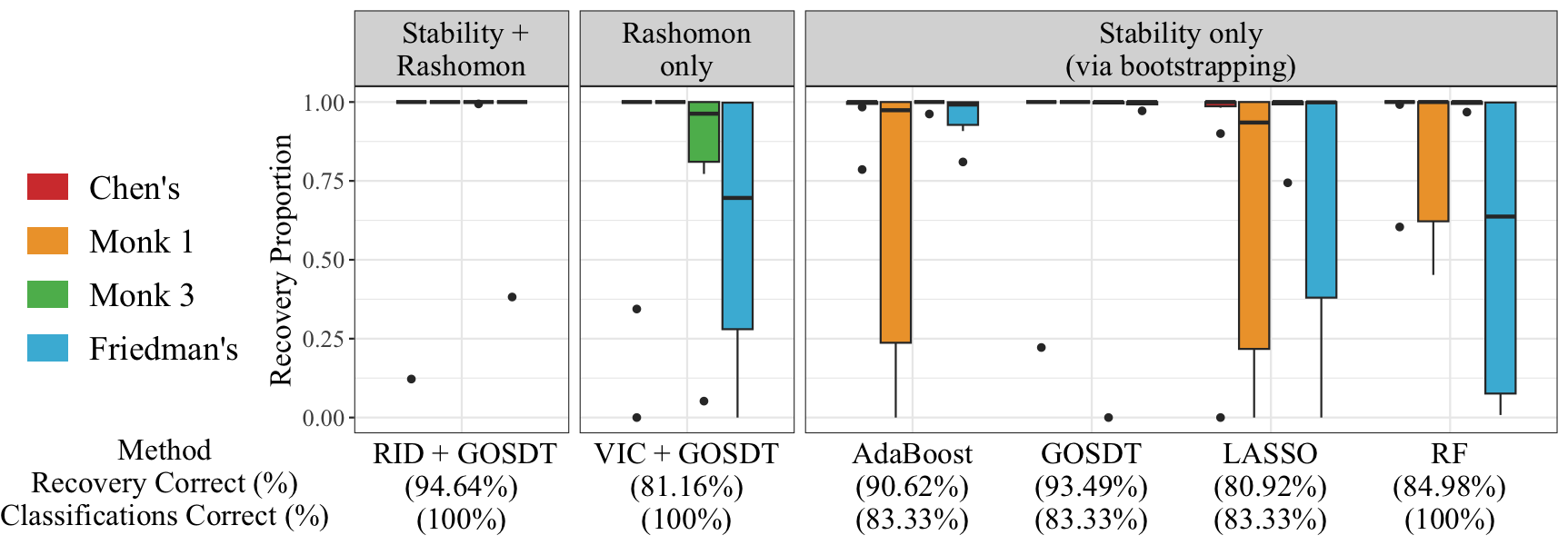}
     \end{subfigure}
    \caption{(Top) The proportion of features ranked correctly by each method on each data set represented as a \textit{stacked} barplot. 
    The figures are ordered by method performance across the four simulation setups. 
    (Bottom) The proportion of independent DGP $\phi^{(sub)}$ calculations on \textit{500 new datasets from the DGP} 
    that were 
    contained within the box-and-whiskers range computed 
    using a single training set (with bootstrapping in all methods except VIC)
    for each method and variable in each simulation. Underneath each method's label, the first row shows the percentage of times across all 500 independently generated datasets and variables that the DGP's variable importance was inside of that method's box-and-whiskers interval. The second row shows the percentage of pairwise rankings correct for each method (from the top plot). Higher is better.}
    \label{fig:baseline-rankings}
\end{figure*}

\subsection{\ourcdf\space Distinguishes Important Variables from Extraneous Variables}
There is no generally accepted ground truth measure for variable importance, so we first evaluate whether a variety of variable importance methods can correctly distinguish between variables used to generate the outcome (in a known data generation process) versus those that are not. 
We consider the following four data generation processes (DGPs). \textbf{Chen's} DGP \citep{chen2017kernel}: $ Y = \mathbb{1}[-2\sin(X_1) + \max(X_2, 0) + X_3 + \exp(-X_4) + \varepsilon \geq 2.048],$ where $X_1, \ldots, X_{10}, \varepsilon \sim \mathcal{N}(0, 1).$ Here, only $X_1, \ldots, X_4$ are relevant. \textbf{Friedman's} DGP \citep{friedman1991multivariate}: $Y = \mathbb{1}[ 10 \sin(\pi X_1 X_2) + 20 (X_3 - 0.5)^2 + 10X_4 + 5 X_5 + \varepsilon \geq 15],$ where $X_1, \ldots, X_{6} \sim \mathcal{U}(0, 1), \varepsilon \sim \mathcal{N}(0, 1).$ Here, only $X_1, \ldots, X_5$ are relevant. The \textbf{Monk 1} DGP \citep{thrun1991monk}: $Y = \max\left(\mathbb{1}[X_1 = X_2], \mathbb{1}[X_5 = 1]\right),$ where the variables $X_1, \ldots, X_6$ have domains of 2, 3, or 4 unique integer values. Only $X_1, X_2, X_5$ are important. The \textbf{Monk 3} DGP \citep{thrun1991monk}: $Y = \max\left( \mathbb{1}[X_5 = 3 \text{ and } X_4 = 1], \mathbb{1}[X_5 \neq 4 \text{ and } X_2 \neq 3] \right)$ for the same covariates in Monk 1. Also, $5\%$ label noise is added. Here, $X_2, X_4,$ and $X_5$ are relevant. 

We compare the ability of $\ourcdf$ to identify extraneous variables with that of the following baseline methods, whose details are provided in Section D of the supplement: subtractive model reliance $\phi^{\text{sub}}$ of a random forest (RF) \citep{breiman2001random}, LASSO \citep{hastie2009elements}, boosted decision trees \citep{freund1997decision}, and  generalized optimal sparse decision trees (GOSDT) \citep{lin2020generalized}; 
conditional model reliance (CMR) \citep{FisherRuDo19}; 
the impurity based model reliance metric for RF from \citep{breiman2001statistical}; the LOCO algorithm reliance \citep{lei2018distribution} for RF and Lasso; the Pearson and Spearman correlation between each feature and the outcome; the mean of the partial dependency plot (PDP) \citep{greenwell2018simple} for each feature;  the SHAP value  \citep{lundberg2018consistent} for RF; and mean of variable importance clouds (VIC) \citep{dong2020exploring} for the Rashomon set of GOSDTs \citep{xin2022exploring}. 
If we do not account for instability and simply learn a model and calculate variable importance, baseline models generally perform poorly, as shown in Section E of the supplement. 
Thus, we chose to account for instability in a way that benefits the baselines. We evaluate each baseline method for each variable across $500$ bootstrap samples and compute the \textit{median VI across bootstraps}, with the exception of VIC --- for VIC, we take the \textit{median VI value across the Rashomon set} for the original dataset, as VIC accounts for Rashomon uncertainty. Here, we aim to see whether we can identify extraneous (i.e., unimportant variables).
For a DGP with $C$ extraneous variables, we classify the $C$ variables with the $C$ smallest median variable importance values as extraneous. We repeat this experiment with different values for the Rashomon threshold $\varepsilon$ in Section E of the supplement. 

Figure \ref{fig:baseline-rankings} (top) reports the proportion of variables that are correctly classified for each simulation setup as a stacked barplot. \textbf{\ourcdf{}\space identifies all important and unimportant variables} for these complex simulations. Note that four other baseline methods -- MR RF, RF Impurity, RF Permute, and VIC -- also differentiated all important from unimportant variables. Motivated by this finding, we next explore how well
methods recover the true value for subtractive model reliance
on the DGP, allowing us to distinguish between the best performing methods on the classification task.

\subsection{\ourcdf\space Captures Model Reliance for the True Data Generation Process}
%
$\ourcdf$ allows us to quantify uncertainty in variable importance due to \textit{both} the Rashomon effect and instability. We perform an ablation study investigating how accounting for both stability and the Rashomon effect compares to having one without the other.
We evaluate what proportion of subtractive model reliances calculated for the DGP on 500 test sets are contained within uncertainty intervals generated using only one training dataset. This experiment tells us whether the intervals created on a single dataset will generalize.


To create the uncertainty interval on the training dataset and for each method, we first find the subtractive model reliance $\phi^{(sub)}$ across 500 bootstrap iterations of a given dataset for the four algorithms shown in Figure \ref{fig:baseline-rankings} (bottom) (baseline results without bootstrapping are in Section E of the supplementary material). Additionally, we find the VIC for the Rashomon set of GOSDTs on the original dataset. We summarize these model reliances (500 bootstraps $\times$ 28 variables across datasets $\times$ 4 algorithms + 8,247 models in VIC's + 10,840,535 total models across Rsets $\times$ 28 variables from \ourcdf) by computing their box-and-whisker ranges (1.5 $\times$ Interquartile range \cite{wiki:boxplot}). To compare with ``ground truth,'' we sample 500 test datasets from the DGP and calculate $\phi^{(sub)}$ for the DGP for that dataset. For example, assume the DGP is $Y = X^2 + \varepsilon$. We would then use $f(X) = X^2$ as our predictive model and evaluate $\phi^{(sub)}(f, \Dn)$ on $f$ for each of the 500 test sets. We then check if the box-and-whisker range of each method's interval constructed on the training set contains the computed $\phi^{(sub)}$ for the DGP for each test dataset. Doing this allows us to understand whether our interval contains the \textit{true} $\phi^{(sub)}$ for each test set.


Figure \ref{fig:baseline-rankings} (bottom) illustrates the proportion of times that the test variable importance values fell within the uncertainty intervals from training. These baselines fail to capture the test $\phi^{(sub)}$ \textit{entirely} for at least one variable ($<0.05\%$ recovery proportion).\textbf{ Only \ourcdf{} \textit{both} recovers  important/unimportant classifications perfectly and achieves a strong recovery proportion at 95\%}.

\subsection{\ourcdf\space is Stable}

\begin{figure}[h!]
        \centering
        \includegraphics[width=0.75\textwidth]{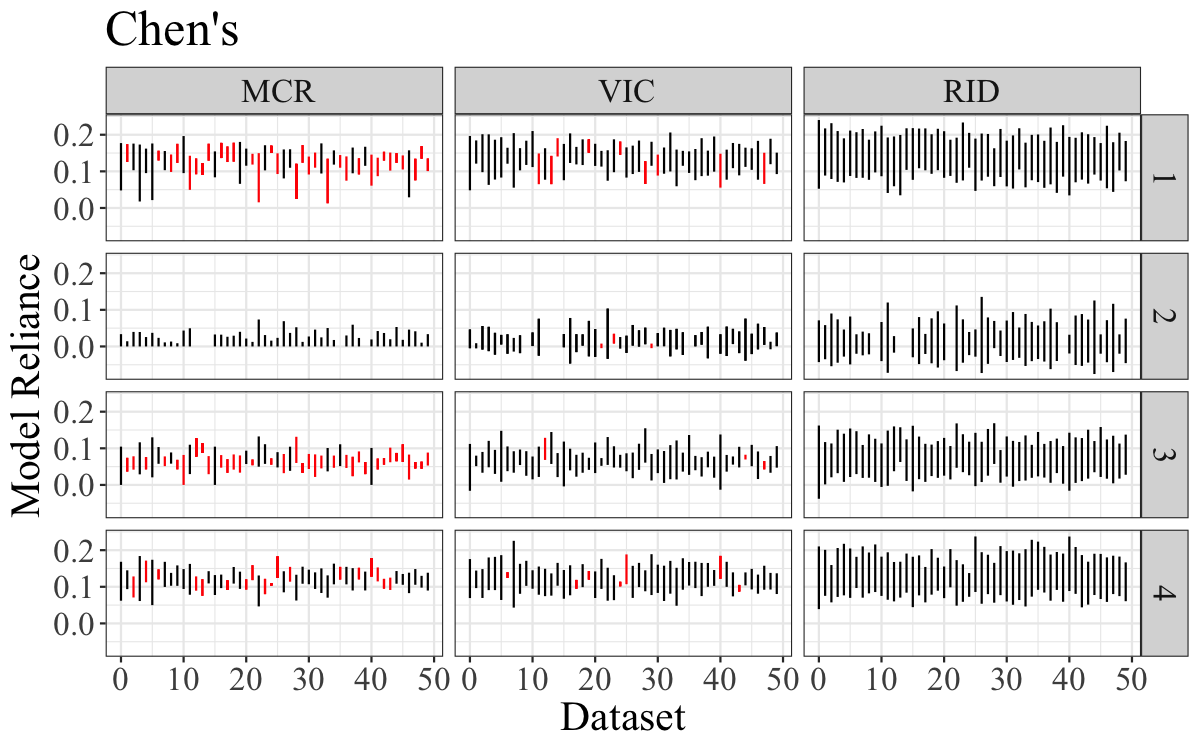}
        \caption{We generate 50 independent datasets from Chen's DGP and calculate MCR, BWRs for VIC, and BWRs for RID. 
        The above plot shows the interval for each dataset for each non-null variable in Chen's DGP. All \textcolor{red}{red}-colored intervals do not overlap with at least one of the remaining 49 intervals. }
        \label{fig:chens_overlap}
    \end{figure}
\begin{figure}[h!]
    \centering
    \includegraphics[width=0.98\textwidth]{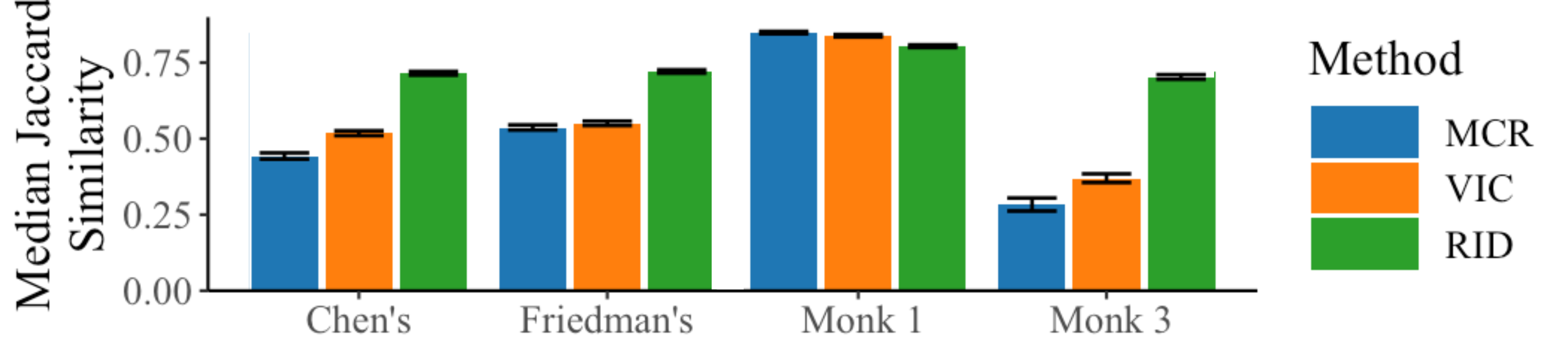}
    \caption{Median Jaccard similarity scores across 50 independently generated MCR, VIC, and \ourcdf{} box and whisker ranges for each DGP; 1 is perfect similarity. Error bars show 95\% confidence interval around the median.}
    \label{fig:iou_heatmap}
\end{figure}

Our final experiment investigates the stability of VIC and MCR (which capture only Rashomon uncertainty but not stability) to \textit{RID}, which naturally considers data perturbations. We generate 50 independent datasets from each DGP and compute the box-and-whisker ranges (BWR) of each uncertainty metric for each dataset; for every pair of BWRs for a given method, we then calculate the Jaccard similarity between BWR's. For each generated dataset, we then average the Jaccard similarity across variables. 
Figure \ref{fig:chens_overlap} shows these intervals for each non-extraneous variable from Chen's DGP. Supplement E.4 presents a similar figure for each DGP, showing that only \textit{RID}'s intervals overlap across all generations for all datasets.


Figure \ref{fig:iou_heatmap} displays the similarity scores between the box and whisker ranges of MCR, VIC, and \ourcdf{} across the 50 datasets for each DGP. Note that Monk 1 has no noise added, so instability should not be a concern for any method. For datasets including noise, \textbf{MCR and VIC achieve median similarity below 0.55; \ourcdf's median similarity is 0.69; it is much more stable}.

\section{Case Study}

Having validated \textit{RID} on synthetic datasets, we demonstrate its utility in a real world application: studying which host cell transcripts and chromatin patterns are associated with high expression of Human Immunodeficiency Virus (HIV) RNA. We analyzed a dataset that combined single cell RNAseq/ATACseq profiles for 74,031 individual HIV infected cells from two different donors in the aims of finding new cellular cofactors for HIV expression that could be targeted to reactivate the latent HIV reservoir in people with HIV (PWH). A longer description of the data is in \cite{BrowneRedacted2023}. Finding results on this type of data allows us to create new hypotheses for which genes are important for HIV load prediction and might generalize to large portions of the population.
\begin{figure}[t!]
    \centering
    \includegraphics[width=.98\textwidth]{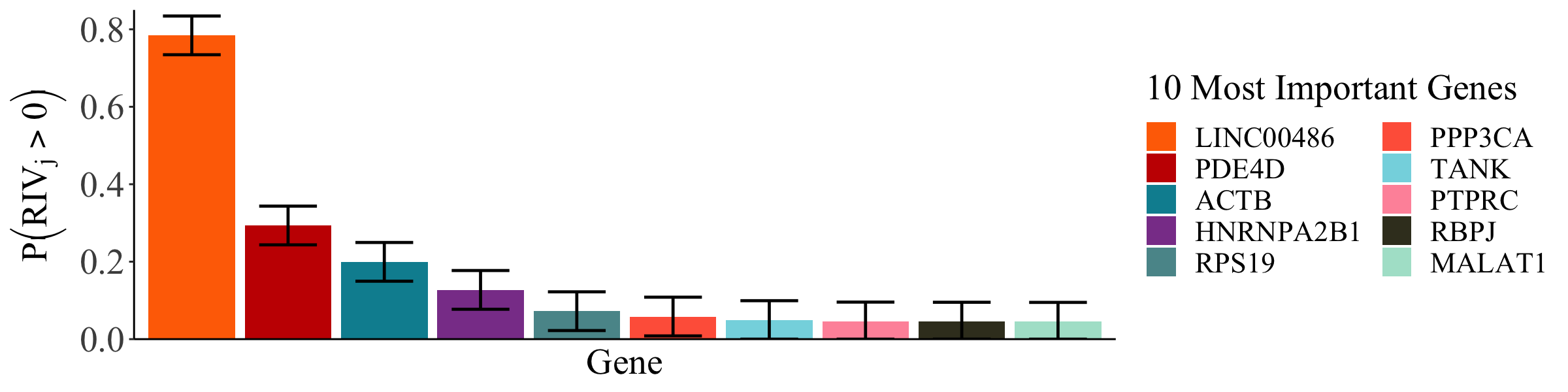}
    \caption{Probability of each gene's model reliance being greater than 0 across Rashomon sets across bootstrapped datasets for the ten genes with the highest $\mathbb{P}( \ourvi_j > 0)$. We ran 738 bootstrap iterations to ensure that $\mathbb{P}( \widehat{\ourvi}_j > 0)$ is within 0.05 of $\mathbb{P}( \ourvi_j > 0)$ with 95\% confidence (from Theorem \ref{thm:ourvi_finite_sample}).}
    \label{fig:hiv_res}
\end{figure}

To identify which genes are stably importance across good models, we evaluated this dataset using \textit{RID} over the model class of sparse decision trees using subtractive model reliance. We selected 14,614 samples (all 7,307 high HIV load samples and 7,307 random low HIV load samples) from the overall dataset in order to balance labels, and filtered the complete profiles down to the top 100 variables by individual AUC. We consider the binary classification problem of predicting high versus low HIV load. For full experimental details, see Section D of the supplement. Section E.5 of the supplement contains timing experiments for \textit{RID} using this dataset.

Figure \ref{fig:hiv_res} illustrates the probability that $\ourcdf$ is greater than 0 for the 10 highest probability variables (0 is when the variable is not important at all). \textbf{We find that LINC00486 -- a less explored gene -- is the most important variable}, with $1 - \ourcdf_{LINC00486}\left(0\right) = 78.4\%$. LINC00486 is a long non-coding RNA (i.e., it functions as an RNA molecule but does not encode a protein like most genes), and there is no literature on this gene and HIV, making this association a novel one. However, recent work \citep{wang2023nuclear} has shown that LINC00486 can enhance EBV (Epstein–Barr virus) infection by activating NF-$\kappa$B. It is well established that NF-$\kappa$B can regulate HIV expression \citep{nixon2020systemic, kretzschmar1992transcriptional, bachelerie1991hiv}, suggesting a possible mechanism and supporting future study. Notably, \ourcdf{} also highlighted PDE4D, which interacts with the Tat protein and thereby HIV transcription \citep{secchiero2000pivotal}; HNRNPA2B1, which promotes HIV expression by altering the structure of the viral promoter \citep{scalabrin2017cellular}; and MALAT1, which has recently been shown to be an important regulator of HIV expression \citep{qu2019long}. These three findings validate prior work and show that \ourcdf{} can uncover variables that are known to interact with HIV.

\textbf{Note that previous methods -- even those that account for the Rashomon effect -- could not produce this result}. MCR and VIC do not account for instability. For example, after computing MCR for 738 bootstrap iterations, we find that the MCR for the LINC00486 gene has overlap with 0 in $96.2\%$ of bootstrapped datasets, meaning MCR would not allow us to distinguish whether LINC00486 is important or not $96.2\%$ of the time. Without \textit{RID}, we would not have strong evidence that LINC00486 is necessary for good models. 
By explicitly accounting for instability, we increase trust in our analyses. 

Critically, \textit{RID} also found \textit{very low} importance for the majority of variables, allowing researchers to dramatically reduce the number of possible directions for future experiments designed to test a gene's functional role. Such experiments are time consuming and cost tens of thousands of dollars \textit{per donor}, so narrowing possible future directions to a small set of genes is of the utmost importance. \textbf{Our analysis provides a manageable set of clear directions for future work studying the functional roles of these genes in HIV.}

\section{Conclusion and Limitations} \label{sec:conclusion}

We introduced \RID, a method for recovering the importance of variables in a way that accounts for both instability and the Rashomon effect. We showed that \ourcdf{} distinguishes between important and extraneous variables, and that \ourcdf{} better captures the true variable importance for the DGP than prior methods. 
We showed through a case study in HIV load prediction that \ourcdf{} can provide insight into complicated real world problems.
Our framework overcomes instability and the Rashomon effect, moving beyond variable importance for a single model and increasing reproducibility.

\RID{} can be directly computed for any model class for which the Rashomon set can be found -- at the time of publishing, decision trees, linear models, and GLMs. A limitation is that currently, there are relatively few model classes for which the Rashomon set can be computed. Therefore, future work should aim to compute and store the Rashomon set of a wider variety of model classes. Future work may investigate incorporating Rashomon sets that may be well-approximated (e.g., GAMs, \cite{chen2023understanding}), but not computed exactly, into the \ourcdf{} approach. Nonetheless, sparse trees are highly flexible, and using them with \ourcdf{} improves the trustworthiness and transparency of variable importance measures, enabling researchers to uncover important, reproducible relationships about complex processes without being misled by the Rashomon effect.

\section{Acknowledgements}
We gratefully acknowledge support from NIH/NIDA R01DA054994, NIH/NIAID R01AI143381, DOE DE-SC0023194, NSF IIS-2147061, and NSF IIS-2130250.
\small


\makeatletter
\@addtoreset{theorem}{section}
\makeatother
\makeatletter
\@addtoreset{assumption}{section}
\makeatother
\newpage
\title{The Rashomon Importance Distribution: Getting RID of Unstable, Single Model-based Variable Importance}

\onecolumn 
\blfootnote{*Jon Donnelly and Srikar Katta contributed equally to this work.}

\maketitle



\appendix
\section*{Supplemental Material}
\section{Proofs}

First, recall the following assumption from the main paper:


\begin{assumption} \label{asm:lipschitz_supp}
    If 
    \begin{align*}
        \rho\left(\RLDCDF(\cdot; \varepsilon, \mathcal{F}, \ell, \mathcal{P}_n,\lambda),
     \LD(\cdot; \ell, n, \mathcal{P}_n,\lambda) \right) &\leq \gamma \text{ then } \\
        \rho\left(\ourcdf_j(\cdot ; \varepsilon, \mathcal{F}, \ell, \mathcal{P}_n, \lambda), \ourcdf_j(\cdot ; \varepsilon, \{g^*\}, \ell, \mathcal{P}_n, \lambda) \right) &\leq d(\gamma)
    \end{align*}
    for a function $d: [0, \ell_{\max} - \ell_{\min}] \to [0, \phi_{\max} - \phi_{\min}]$ such that $\lim_{\gamma \to 0}d(\gamma)=0.$ Here, $\rho$ represents any distributional distance metric (e.g., 1-Wasserstein).
\end{assumption}
    
\begin{theorem} \label{thm:wasser_supp}
    Let Assumption \ref{asm:lipschitz_supp} hold for distributional distance $\rho(A_1, A_2)$ between distributions $A_1$ and $A_2$.
    For any $t > 0$, $j \in \{0, \hdots, p\}$ as  $\rho\left(LD^*(\cdot; \ell, n, \lambda), \RLDCDF(\cdot; \varepsilon, \mathcal{F}, \ell, \mathcal{P}_n, \lambda) \right) \to 0$ and $B \to \infty$,
    \begin{align*}
        \mathbb{P}\left( \left| \widehat{\ourcdf}_j(k; \varepsilon, \mathcal{F}, \ell, \mathcal{P}_n, \lambda) - \ourcdf_j(k; \varepsilon, \{g^*\}, \ell, \mathcal{P}_n, \lambda) \right| \geq t \right) \to 0.
    \end{align*}
\end{theorem}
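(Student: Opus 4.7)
The plan is to decompose the target error by a triangle inequality into a bootstrap sampling error term and a model-class approximation term, and then bound each separately: the first using Hoeffding-style concentration of the bootstrap average (i.e., the statement already proved as Theorem \ref{thm:ourvi_finite_sample}), and the second using Assumption \ref{asm:lipschitz_supp} to transfer smallness of $\rho(LD^*, \RLDCDF)$ into smallness of $\rho(\ourcdf_j(\cdot;\mathcal{F}), \ourcdf_j(\cdot;\{g^*\}))$.

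Concretely, first I would write
\begin{align*}
\bigl| \widehat{\ourcdf}_j(k;\varepsilon,\mathcal{F},\ell,\mathcal{P}_n,\lambda) - \ourcdf_j(k;\varepsilon,\{g^*\},\ell,\mathcal{P}_n,\lambda) \bigr|
&\leq \bigl| \widehat{\ourcdf}_j(k;\varepsilon,\mathcal{F},\ell,\mathcal{P}_n,\lambda) - \ourcdf_j(k;\varepsilon,\mathcal{F},\ell,\mathcal{P}_n,\lambda) \bigr| \\
&\quad + \bigl| \ourcdf_j(k;\varepsilon,\mathcal{F},\ell,\mathcal{P}_n,\lambda) - \ourcdf_j(k;\varepsilon,\{g^*\},\ell,\mathcal{P}_n,\lambda) \bigr|,
\end{align*}
calling the two right-hand terms $E_B$ (bootstrap error) and $E_{\mathrm{model}}$ (approximation error). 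Note $E_{\mathrm{model}}$ is deterministic once $\Dn$ and $\mathcal{F}$ are fixed, while $E_B$ is random only through the $B$ bootstrap draws.

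For $E_B$, I would invoke Theorem \ref{thm:ourvi_finite_sample} directly: each summand in the definition of $\widehat{\ourcdf}_j$ is a ratio taking values in $[0,1]$, so Hoeffding's inequality gives $\mathbb{P}(E_B \geq t/2) \leq 2\exp(-B t^2 / 2)$, which vanishes as $B \to \infty$ for any fixed $t > 0$. For $E_{\mathrm{model}}$, Assumption \ref{asm:lipschitz_supp} says that if $\rho(\RLDCDF, \LD) \leq \gamma$, then $\rho(\ourcdf_j(\cdot;\mathcal{F}), \ourcdf_j(\cdot;\{g^*\})) \leq d(\gamma)$ with $d(\gamma) \to 0$ as $\gamma \to 0$; so as $\rho(LD^*, \RLDCDF) \to 0$, this distributional distance tends to $0$. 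The remaining step is to translate this distributional smallness into pointwise smallness at $k$: since both sides are CDFs bounded in $[0,1]$, and the standard choice $\rho$ (e.g., $1$-Wasserstein or Kolmogorov) either directly dominates the pointwise CDF gap or yields weak convergence plus right-continuity that forces pointwise convergence at any continuity point $k$, I would conclude $E_{\mathrm{model}} \to 0$ under the hypothesis of the theorem.

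Putting the pieces together with a union bound: for any $t>0$, pick the regime so that $E_{\mathrm{model}} \leq t/2$ (possible by the previous paragraph once $\rho(LD^*, \RLDCDF)$ is sufficiently small), whence
\begin{equation*}
\mathbb{P}\bigl(|\widehat{\ourcdf}_j(k) - \ourcdf_j(k;\{g^*\})| \geq t\bigr) \leq \mathbb{P}(E_B \geq t/2) \leq 2\exp(-B t^2 / 2) \to 0.
\end{equation*}
The main obstacle I anticipate is precisely the CDF-to-pointwise step in bounding $E_{\mathrm{model}}$: Assumption \ref{asm:lipschitz_supp} is phrased for a generic distributional metric, so some care is needed to argue that the chosen $\rho$ controls $|\ourcdf_j(k;\mathcal{F}) - \ourcdf_j(k;\{g^*\})|$ at the specific evaluation point $k$ (either by restricting to continuity points of the limiting CDF, or by explicitly picking $\rho$ such as Kolmogorov distance for which the pointwise inequality is immediate). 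Everything else reduces to Hoeffding and a triangle inequality.
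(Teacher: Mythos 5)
Your proposal is correct and follows essentially the same route as the paper's proof: add and subtract $\ourcdf_j(k;\varepsilon,\mathcal{F},\ell,\mathcal{P}_n,\lambda)$, apply the triangle inequality and a union bound, control the bootstrap term via Hoeffding (Corollary \ref{cor:rid_consistency}) and the approximation term via Assumption \ref{asm:lipschitz_supp}. The one subtlety you flag --- that $\rho$ bounds a distributional distance while the theorem asserts a pointwise bound at $k$ --- is real, but the paper's own proof silently elides it as well, so your explicit suggestion to take $\rho$ to be Kolmogorov distance (or restrict to continuity points) is if anything more careful than the original.
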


\begin{proof}
    Let $\Dn$ be a dataset of $n$ $(x_i, y_i)$ tuples independently and identically distributed according to the empirical distribution $\mathcal{P}_n.$ Let $k \in [\phi_{\min}, \phi_{\max}].$ 

    Then, we know that
    \begin{align*}
        &\mathbb{P}\left( \left| \widehat{\ourcdf}_j(k; \varepsilon, \mathcal{F}, \ell, \mathcal{P}_n, \lambda) - \ourcdf_j(k; \varepsilon, \{g^*\}, \ell, \mathcal{P}_n, \lambda) \right| \geq t \right) \\
       =&\mathbb{P}\Big( \Big| \widehat{\ourcdf}_j(k; \varepsilon, \mathcal{F}, \ell, \mathcal{P}_n, \lambda) - \ourcdf_j(k; \varepsilon, \mathcal{F}, \ell, \mathcal{P}_n, \lambda) \\
       &+ \ourcdf_j(k; \varepsilon, \mathcal{F}, \ell, \mathcal{P}_n, \lambda)  - \ourcdf_j(k; \varepsilon, \{g^*\}, \ell, \mathcal{P}_n, \lambda) \Big| \geq t \Big)  \\
       &\text{ (by adding 0)} \\
       \leq&\mathbb{P}\Big( \left| \widehat{\ourcdf}_j(k; \varepsilon, \mathcal{F}, \ell, \mathcal{P}_n, \lambda) - \ourcdf_j(k; \varepsilon, \mathcal{F}, \ell, \mathcal{P}_n, \lambda)  \right| \\ 
       &+ \left|\ourcdf_j(k; \varepsilon, \mathcal{F}, \ell, \mathcal{P}_n, \lambda) - \ourcdf_j(k; \varepsilon, \{g^*\}, \ell, \mathcal{P}_n, \lambda) \right| \geq t \Big) \\
       &\text{ (by the triangle inequality) } \\
       \leq&\mathbb{P}\left( \left| \widehat{\ourcdf}_j(k; \varepsilon, \mathcal{F}, \ell, \mathcal{P}_n, \lambda) - \ourcdf_j(k; \varepsilon, \mathcal{F}, \ell, \mathcal{P}_n, \lambda) \right| \geq \frac{t}{2} \right) \\ 
       &+ \mathbb{P}\left(\left |  \ourcdf_j(k; \varepsilon, \mathcal{F}, \ell, \mathcal{P}_n, \lambda) - \ourcdf_j(k; \varepsilon, \{g^*\}, \ell, \mathcal{P}_n, \lambda) \right| \geq \frac{t}{2} \right) \\
       &\text{(by union bound)}.
    \end{align*}
     Recall that, in the theorem statement, we have assumed $\rho\left(LD^*(\cdot; \ell, n, \lambda), \RLDCDF(\cdot; \varepsilon, \mathcal{F}, \ell, \mathcal{P}_n, \lambda) \right) \to 0.$ Therefore, by Assumption \ref{asm:lipschitz_supp},
    \begin{align*}
        \mathbb{P}\left(\left|\ourcdf_j(k; \varepsilon, \mathcal{F}, \ell, \mathcal{P}_n, \lambda) - \ourcdf_j(k; \varepsilon, \{g^*\}, \ell, \mathcal{P}_n, \lambda) \right| \geq \frac{t}{2} \right) \to 0.
    \end{align*}

    Additionally, we will show in Corollary \ref{cor:rid_consistency} that as $B \to \infty,$ 
    \begin{align*}
        \mathbb{P}\left(\left | \widehat{\ourcdf}_j(k; \varepsilon, \mathcal{F}, \ell, \mathcal{P}_n, \lambda) - \ourcdf_j(k; \varepsilon, \mathcal{F}, \ell, \mathcal{P}_n, \lambda)\right| \geq \frac{t}{2} \right) \to 0.
    \end{align*}
    Therefore, as $B \to \infty$ and $\rho\left(LD^*(\cdot; \ell, n, \lambda), \RLDCDF(\cdot; \varepsilon, \mathcal{F}, \ell, \mathcal{P}_n, \lambda) \right) \to 0,$ the \textit{estimated} Rashomon importance distribution for model class $\mathcal{F}$ converges to the true Rashomon importance distribution for the DGP $g^*.$
\end{proof}

\begin{theorem} \label{thm:rid_finite_sample}
Let $\Dn$ be a dataset of $n$ $(x_i, y_i)$ tuples independently and identically distributed according to the empirical distribution $\mathcal{P}_n.$ Let $k \in [\phi_{\min}, \phi_{\max}].$ Then, with probability $1 - \delta$, with $B \geq \frac{1}{2t^2}\ln\left( \frac{2}{\delta} \right)$ bootstrap replications,
\begin{align*}
     \left| \widehat{\ourcdf}_j(k) - \ourcdf_j(k) \right| < t.
\end{align*}
\end{theorem}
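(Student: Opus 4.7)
The plan is to recognize that $\widehat{\ourcdf}_j(k)$ is the empirical mean of $B$ i.i.d.\ bounded random variables whose common expectation is exactly $\ourcdf_j(k)$, and then apply Hoeffding's inequality and invert the bound to solve for $B$.

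More concretely, the first step is to fix $k \in [\phi_{\min}, \phi_{\max}]$ and define, for each bootstrap iteration $b = 1, \ldots, B$, the random variable
\begin{equation*}
    Z_b \;:=\; \frac{\bigl|\{f \in \mathcal{R}^{\varepsilon}_{\Dn_b} : \phi_j(f, \Dn_b) \leq k\}\bigr|}{\bigl|\mathcal{R}^{\varepsilon}_{\Dn_b}\bigr|}.
\end{equation*}
Since each $\Dn_b$ is drawn independently from $\mathcal{P}_n$, the $Z_b$ are i.i.d. Each $Z_b$ is a ratio of a cardinality to the total cardinality of the same Rashomon set, so $Z_b \in [0,1]$. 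By the definition of $\ourcdf_j$ from Equation~(\ref{eq:RID}), $\mathbb{E}[Z_b] = \ourcdf_j(k; \varepsilon, \mathcal{F}, \ell, \mathcal{P}_n, \lambda)$, and by construction $\widehat{\ourcdf}_j(k; \varepsilon, \mathcal{F}, \ell, \mathcal{P}_n, \lambda) = \frac{1}{B}\sum_{b=1}^B Z_b$.

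The second step is a direct application of Hoeffding's inequality to the bounded i.i.d.\ sequence $Z_1, \ldots, Z_B$: for any $t > 0$,
\begin{equation*}
    \mathbb{P}\!\left(\left|\widehat{\ourcdf}_j(k) - \ourcdf_j(k)\right| \geq t\right) \;\leq\; 2\exp\!\left(-2 B t^2\right).
\end{equation*}
Setting the right-hand side to be at most $\delta$ and solving for $B$ yields the stated threshold $B \geq \frac{1}{2t^2}\ln(2/\delta)$, which gives the desired concentration with probability at least $1 - \delta$.

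There is no real obstacle here: the whole argument is a textbook Hoeffding bound once one observes that $Z_b \in [0,1]$ and that the $Z_b$ are i.i.d.\ across bootstrap replicates. The only subtle point worth stating explicitly in the writeup is that the bound is pointwise in $k$ (not uniform over $k$), which matches the theorem statement that requires the conclusion to hold for any fixed $k$, and that the randomness is taken solely with respect to the $B$ bootstrap draws from $\mathcal{P}_n$ (the dataset $\Dn$ and hence $\mathcal{P}_n$ itself are treated as fixed).
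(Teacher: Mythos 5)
Your proof is correct and follows essentially the same route as the paper's: both recognize the per-bootstrap ratio as a $[0,1]$-bounded i.i.d.\ random variable with expectation $\ourcdf_j(k)$, apply Hoeffding's inequality to obtain the $2\exp(-2Bt^2)$ tail bound, and invert it to get the stated threshold on $B$. Your explicit remarks that the bound is pointwise in $k$ and that the randomness is over bootstrap draws alone are accurate and consistent with the paper's treatment.
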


\begin{proof}
First, let us restate the definition of $\ourcdf_j$ and $\widehat{\ourcdf}_j.$ Let $n \in \mathbb{N}.$ Let $\varepsilon$ be the Rashomon threshold, and let the Rashomon set for some dataset $\Dn$ and some fixed model class $\mathcal{F}$ be denoted as $\mathcal{R}_{\Dn}^\varepsilon$. Without loss of generality, assume $\mathcal{F}$ is a finite model class. Then, for a given $k \in [\phi_{\min}, \phi_{\max}]$,
\begin{align*}
      \ourcdf_j(k; \varepsilon, \mathcal{F}, \ell, \mathcal{P}_n,\lambda) 
    &= \mathbb{E}_{\Dn \sim \mathcal{P}_n}\left[ \frac{\sum_{f \in \mathcal{R}_{\Dn}^\varepsilon} \mathbb{1}[\phi_j(f, \Dn) \leq k]}{| \mathcal{R}_{\Dn}^\varepsilon|} \right].
\end{align*}
Note that the expectation is over all datasets of size $n$ sampled with replacement from the originally observed dataset, represented by $\mathcal{P}_n;$ we are taking the expectation over bootstrap samples.

We then sample datasets of size $n$ with replacement from the \textit{empirical} CDF $\mathcal{P}_n$, find the Rashomon set for the replicate dataset, and compute the variable importance metric for each model in the discovered Rashomon set. For the same $k \in [\phi_{\min}, \phi_{\max}],$
\begin{align*}
    \widehat{\ourcdf}_j(k; \varepsilon, \mathcal{F}, \ell, \mathcal{P}_n, \lambda)
    &= \frac{1}{B}\sum_{\Dn_b \sim \mathcal{P}_n}\left[ \frac{\sum_{f \in \mathcal{R}_{\Dn_b}^\varepsilon} \mathbb{1}[\phi_j(f, \Dn_b) \leq k]}{| \mathcal{R}_{\Dn_b}^\varepsilon|} \right],
\end{align*}
where $B$ represents the number of size $n$ datasets sampled from $\mathcal{P}_n$.

Notice that
\begin{align} \label{eqn:inside_rid}
    0 \leq \frac{\sum_{f \in \mathcal{R}_{\Dn}^\varepsilon} \mathbb{1}[\phi_j(f, \Dn) \leq k]}{| \mathcal{R}_{\Dn}^\varepsilon|} \leq 1.
\end{align}
Because $\widehat{\ourcdf}_j(k; \varepsilon, \mathcal{F}, \ell, \mathcal{P}_n, \lambda)$ is an Euclidean average of the quantity in Equation \eqref{eqn:inside_rid} and $\ourcdf_j(k;\varepsilon, \mathcal{F}, \ell, \mathcal{P}_n, \lambda)$ is the expectation of the quantity in Equation \eqref{eqn:inside_rid}, we can use Hoeffding's inequality to show that
\begin{align*}
    &\mathbb{P}\left( \left| \widehat{\ourcdf}_j(k; \varepsilon, \mathcal{F}, \ell, \mathcal{P}_n, \lambda) -  \ourcdf_j(k;\varepsilon, \mathcal{F}, \ell, \mathcal{P}_n, \lambda)\right| > t \right) \\
    \leq &2\exp\left( -2Bt^2\right)
\end{align*}
for some $t > 0.$ 

Now, we can manipulate Hoeffding's inequality to discover a finite sample bound. Instead of setting $B$ and $t,$ we will now find the $B$ necessary to guarantee that
\begin{align}
    \mathbb{P}\left( \left| \widehat{\ourcdf}_j(k; \varepsilon, \mathcal{F}, \ell, \mathcal{P}_n, \lambda) - \ourcdf_j(k;\varepsilon, \mathcal{F}, \ell, \mathcal{P}_n, \lambda)  \right| \geq  t\right) \leq \delta
\end{align}
for some $\delta, t > 0.$ 

Let $\delta > 0$. From Hoeffding's inequality, we see that if we choose $B$ such that $2\exp\left(-2Bt^2 \right) \leq \delta,$ then 
\begin{align*} 
    \mathbb{P}\left( \left| \widehat{\ourcdf}_j(k; \varepsilon, \mathcal{F}, \ell, \mathcal{P}_n, \lambda) - \ourcdf_j(k;\varepsilon, \mathcal{F}, \ell, \mathcal{P}_n, \lambda)  \right| \geq  t\right) \leq 2\exp\left(-2Bt^2\right) \leq \delta.
\end{align*}
Notice that $2\exp\left(-2Bt^2 \right) \leq \delta$ if and only if $B \geq \frac{1}{2t^2}\ln\left( \frac{2}{\delta} \right).$

Therefore, with probability $1 - \delta,$
\begin{align*}
    \left|\widehat{\ourcdf}_j(k) - \ourcdf_j(k) \right| \leq t
\end{align*}
with $B \geq \frac{1}{2t^2}\ln\left( \frac{2}{\delta} \right)$ bootstrap iterations.
\end{proof}

\begin{corollary} \label{cor:rid_consistency}
Let $t > 0$,  $k \in [\phi_{\min}, \phi_{\max}],$ and assume that $\Dn \sim\mathcal{P}_n.$ As $B \to \infty,$ 
\begin{align*}
    \mathbb{P}\left( \left| \widehat{\ourcdf}_j(k; \varepsilon, \mathcal{F}, \ell, \mathcal{P}_n, \lambda) - \ourcdf_j(k;\varepsilon, \mathcal{F}, \ell, \mathcal{P}_n, \lambda)  \right| \geq  t\right) \to 0.
\end{align*}
\end{corollary}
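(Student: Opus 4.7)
The plan is to derive this corollary essentially for free from Theorem \ref{thm:rid_finite_sample}, since convergence in probability as $B \to \infty$ is exactly what the finite-sample Hoeffding-style bound gives in the limit. I would first recall that in the proof of Theorem \ref{thm:rid_finite_sample}, the quantity $\widehat{\ourcdf}_j(k; \varepsilon, \mathcal{F}, \ell, \mathcal{P}_n, \lambda)$ was shown to be an empirical average of $B$ i.i.d.\ bounded random variables (each lying in $[0,1]$, as in Equation \eqref{eqn:inside_rid}) whose common expectation is $\ourcdf_j(k; \varepsilon, \mathcal{F}, \ell, \mathcal{P}_n, \lambda)$. This is the only structural fact required.

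Next, I would apply Hoeffding's inequality directly to obtain, for any fixed $t > 0$,
\begin{align*}
\mathbb{P}\left( \left| \widehat{\ourcdf}_j(k; \varepsilon, \mathcal{F}, \ell, \mathcal{P}_n, \lambda) - \ourcdf_j(k; \varepsilon, \mathcal{F}, \ell, \mathcal{P}_n, \lambda) \right| \geq t \right) \leq 2\exp(-2Bt^2).
\end{align*}
Letting $B \to \infty$ with $t$ held fixed, the right-hand side tends to $0$, giving the claim. Equivalently, one can invoke Theorem \ref{thm:rid_finite_sample} with any fixed $t > 0$ and any sequence $\delta_B \to 0$ such that $B \geq \frac{1}{2t^2}\ln(2/\delta_B)$ eventually holds (e.g.\ $\delta_B = 2\exp(-2Bt^2)$), and conclude that the failure probability is bounded by $\delta_B \to 0$.

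There is essentially no obstacle here; the corollary is a restatement of Theorem \ref{thm:rid_finite_sample} in its asymptotic form. The only thing worth being careful about is the quantifier order: the statement fixes $k$ and $t$ before letting $B \to \infty$, so pointwise (in $k$) convergence in probability is all that is being claimed, which matches exactly what Hoeffding provides. No uniformity over $k$ is required, and Assumption \ref{asm:lipschitz_supp} is not invoked, consistent with the remark before Theorem \ref{thm:ourvi_finite_sample} in the main text.
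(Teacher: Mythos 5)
Your proposal is correct and takes essentially the same route as the paper: the paper's proof of Corollary \ref{cor:rid_consistency} simply recalls the Hoeffding bound $2\exp(-2Bt^2)$ from Theorem \ref{thm:rid_finite_sample} and lets $B \to \infty$ with $t$ fixed. Your additional remarks on quantifier order and the non-necessity of Assumption \ref{asm:lipschitz_supp} are accurate but not needed.
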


\begin{proof}
    Recall the results of Theorem \ref{thm:rid_finite_sample}:

    \begin{align*}
        &\mathbb{P}\left( \left| \widehat{\ourcdf}_j(k; \varepsilon, \mathcal{F}, \ell, \mathcal{P}_n, \lambda) - \ourcdf_j(k;\varepsilon, \mathcal{F}, \ell, \mathcal{P}_n, \lambda)  \right| \geq  t\right) \\
        &\leq 2\exp\left(-2Bt^2 \right) \\
        &\to 0 \text{ as B } \to \infty.
    \end{align*}
    
\end{proof}

\newpage
\section{Statistics Derived From \ourcdf}
\begin{corollary} \label{cor:ourvi_expectation_finite_sample}
    Let $\varepsilon, B > 0.$ Then,
    \begin{align}
        \mathbb{P}\left(\left| \mathbb{E}[\ourcdf_j] - \mathbb{E}[\widehat{\ourcdf}_j] \right| \geq \varepsilon_E \right) \leq 2\exp\left(\frac{-2B\varepsilon_E^2}{(\phi_{\max} - \phi_{\min})^2}\right).
    \end{align}
    Therefore, the expectation of $\widehat{\ourvi}_j$ converges exponentially quickly to the expectation of $\ourvi_j.$ The notation $\mathbb{E}[\ourcdf_j]$ denotes the expectation of the random variable distributed according to $\ourcdf_j.$
\end{corollary}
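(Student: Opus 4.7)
The plan is to re-express both $\mathbb{E}[\ourcdf_j]$ and $\mathbb{E}[\widehat{\ourcdf}_j]$ as arithmetic means (or expectations) of a single bounded quantity defined for each bootstrap draw, and then invoke Hoeffding's inequality exactly as in the proof of Theorem \ref{thm:rid_finite_sample}.

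First I would introduce, for each bootstrap draw $\Dn_b \sim \mathcal{P}_n$, the per-bootstrap average variable importance
\begin{align*}
    M_b \;:=\; \frac{1}{|\mathcal{R}^{\varepsilon}_{\Dn_b}|}\sum_{f \in \mathcal{R}^{\varepsilon}_{\Dn_b}} \phi_j(f, \Dn_b).
\end{align*}
Since $\phi_j$ takes values in $[\phi_{\min}, \phi_{\max}]$, each $M_b$ lies in $[\phi_{\min}, \phi_{\max}]$ as well, and the $M_b$ are i.i.d.\ across $b$ because the bootstrap draws are. Next I would argue that $\mathbb{E}[\ourcdf_j] = \mathbb{E}_{\Dn_b \sim \mathcal{P}_n}[M_b]$ and $\mathbb{E}[\widehat{\ourcdf}_j] = \frac{1}{B}\sum_{b=1}^{B} M_b$. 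For the first identity, the CDF $\ourcdf_j(k)$ is, by definition (Equation~\ref{eq:RID}), the expectation over $\Dn_b$ of the empirical CDF $G_b(k) := |\{f \in \mathcal{R}^{\varepsilon}_{\Dn_b} : \phi_j(f,\Dn_b)\le k\}|/|\mathcal{R}^{\varepsilon}_{\Dn_b}|$ of variable importances within a single Rashomon set. Computing the mean of the random variable with CDF $\ourcdf_j$ and invoking Fubini (valid because everything is bounded) interchanges the outer expectation with the integral $\int k\, dG_b(k) = M_b$, giving $\mathbb{E}[\ourcdf_j] = \mathbb{E}_{\Dn_b}[M_b]$. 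The same manipulation with the empirical average in place of the expectation yields $\mathbb{E}[\widehat{\ourcdf}_j] = \frac{1}{B}\sum_b M_b$.

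With these two identities in hand, the left-hand side of the desired bound becomes
\begin{align*}
    \mathbb{P}\!\left(\left| \frac{1}{B}\sum_{b=1}^{B} M_b \;-\; \mathbb{E}_{\Dn_b\sim\mathcal{P}_n}[M_b] \right| \geq \varepsilon_E \right),
\end{align*}
and Hoeffding's inequality applied to the i.i.d.\ sequence $\{M_b\}_{b=1}^B$ bounded in $[\phi_{\min}, \phi_{\max}]$ directly yields the stated $2\exp\!\bigl(-2B\varepsilon_E^2/(\phi_{\max}-\phi_{\min})^2\bigr)$. The exponential rate of convergence of $\mathbb{E}[\widehat{\ourcdf}_j]$ to $\mathbb{E}[\ourcdf_j]$ then follows immediately.

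The only non-routine step is the Fubini-style interchange that converts the nested expectation defining $\mathbb{E}[\ourcdf_j]$ into an expectation of the per-bootstrap mean $M_b$; this is the main obstacle and is handled by noting that the integrand is uniformly bounded so no integrability issues arise. Everything else is a direct application of Hoeffding, matching the structure already used in Theorem \ref{thm:rid_finite_sample} but with the bounded quantity $M_b$ in place of the bounded indicator average used there.
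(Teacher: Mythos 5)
Your proof is correct and follows essentially the same route as the paper's: both use a Fubini interchange to reduce the difference of expectations to the deviation of an empirical average of a bounded per-bootstrap statistic from its mean, then apply Hoeffding's inequality. The only cosmetic difference is that the paper computes expectations via the tail formula $\mathbb{E}[X]=\int(1-F(x))\,dx$ (which forces it to first assume $\phi_{\min}\ge 0$), producing the bounded quantity $\int_{\phi_{\min}}^{\phi_{\max}} G_b(k)\,dk = \phi_{\max}-M_b$, which is just an affine shift of your $M_b$ and yields the identical bound — your direct identification $\int k\,dG_b(k)=M_b$ sidesteps that nonnegativity detour.
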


\begin{proof}
    Let $\phi_{\min}, \phi_{\max}$ represent the bounds of the variable importance metric $\phi$. Assume that $0 \leq \phi_{\min} \leq \phi_{\max} < \infty$. If $\phi_{\min} < 0$, then we can modify the variable importance metric to be strictly positive; for example, if $\phi$ is Pearson correlation -- which has a range between -1 and 1 -- we can define a new variable importance metric that is the absolute value of the Pearson correlation \textit{or} define another metric that is the Pearson correlation plus 1 so that the range is now bounded below by 0.

    Now, recall that for any random variable $X$ whose support is strictly greater than 0, we can calculate its expectation as $\mathbb{E}_X[X] = \int_0^\infty(1 - \mathbb{P}(X \leq x)) dx.$ Because $\phi_{\min} \geq 0,$ we know that
    \begin{align*}
        &\mathbb{E}[\ourcdf_j] \\
        =& \int_{\phi_{\min}}^{\phi_{\max}} \left(1 - \mathbb{P}(\ourvi_j \leq k) \right) dk \\
        =& \int_{\phi_{\min}}^{\phi_{\max}} \left(1 - \mathbb{E}_{D^{(n)}}\left[ \sum_{f \in \mathcal{F}} \frac{\mathbb{1}[f \in \mathcal{R}_{\Dn}^{\varepsilon}]\mathbb{1}[\phi_j(f, D^{(n)}) \leq k]}{\sum_{f \in \mathcal{F}}\mathbb{1}[f \in \mathcal{R}_{\Dn}^{\varepsilon}]} \right] \right) dk \\
        =& \int_{\phi_{\min}}^{\phi_{\max}} dk - \int_{\phi_{\min}}^{\phi_{\max}} \mathbb{E}_{D^{(n)}}\left[ \sum_{f \in \mathcal{F}} \frac{\mathbb{1}[f \in \mathcal{R}_{\Dn}^{\varepsilon}]\mathbb{1}[\phi_j(f, D^{(n)}) \leq k]}{\sum_{f \in \mathcal{F}}\mathbb{1}[f \in \mathcal{R}_{\Dn}^{\varepsilon}]} \right] dk \\
        =& \left(\phi_{\max} - \phi_{\min}\right)  - \mathbb{E}_{D^{(n)}}\left[\int_{\phi_{\min}}^{\phi_{\max}}  \sum_{f \in \mathcal{F}} \frac{\mathbb{1}[f \in \mathcal{R}_{\Dn}^{\varepsilon}]\mathbb{1}[\phi_j(f, D^{(n)}) \leq k]}{\sum_{f \in \mathcal{F}}\mathbb{1}[f \in \mathcal{R}_{\Dn}^{\varepsilon}]} dk\right] \text{ by Fubini's theorem}.
    \end{align*}
    Using similar logic we can show that
    \begin{align*}
        \mathbb{E}[\widehat{\ourcdf}_j] 
        &= \int_{\phi_{\min}}^{\phi_{\max}} \left(1 - \frac{1}{B} \sum_{b = 1}^B \sum_{f \in \mathcal{F}} \frac{\mathbb{1}[f \in \mathcal{R}_{\Dn}^{\varepsilon}]\mathbb{1}[\phi_j(f, D_b^{(n)}) \leq k]}{\sum_{f \in \mathcal{F}}\mathbb{1}[f \in \mathcal{R}_{\Dn}^{\varepsilon}]} \right) dk \\
        &= \int_{\phi_{\min}}^{\phi_{\max}} dk - \int_{\phi_{\min}}^{\phi_{\max}} \frac{1}{B} \sum_{b = 1}^B \sum_{f \in \mathcal{F}} \frac{\mathbb{1}[f \in \mathcal{R}_{\Dn}^{\varepsilon}]\mathbb{1}[\phi_j(f, D_b^{(n)}) \leq k]}{\sum_{f \in \mathcal{F}}\mathbb{1}[f \in \mathcal{R}_{\Dn}^{\varepsilon}]} dk \\
        &= \left(\phi_{\max} - \phi_{\min}\right) - \frac{1}{B} \sum_{b = 1}^B \left(\int_{\phi_{\min}}^{\phi_{\max}} \sum_{f \in \mathcal{F}} \frac{\mathbb{1}[f \in \mathcal{R}_{\Dn}^{\varepsilon}]\mathbb{1}[\phi_j(f, D_b^{(n)}, m) \leq k]}{\sum_{f \in \mathcal{F}}\mathbb{1}[f \in \mathcal{R}_{\Dn}^{\varepsilon}]} dk\right).
    \end{align*}

    We can then rewrite $\left|\mathbb{E}[\ourcdf_j]  - \mathbb{E}[\widehat{\ourcdf}_j] \right|$ using the calculations above: 
    \begin{align*}
        &\left|\mathbb{E}[\ourcdf_j]  - \mathbb{E}[\widehat{\ourcdf_j}] \right| \\
    =& \Bigg| \left(\phi_{\max} - \phi_{\min}\right)  - \mathbb{E}_{D^{(n)} \sim \mathcal{P}_n}\left[\int_{\phi_{\min}}^{\phi_{\max}}  \sum_{f \in \mathcal{F}} \frac{\mathbb{1}[f \in \mathcal{R}_{\Dn}^{\varepsilon}]\mathbb{1}[\phi)j(f, D_b^{(n)}) \leq k]}{\sum_{f \in \mathcal{F}}\mathbb{1}[f \in \mathcal{R}_{\Dn}^{\varepsilon}]} dk\right] \\
    -& \left(\left(\phi_{\max} - \phi_{\min}\right)  - \frac{1}{B} \sum_{b = 1}^B \left(\int_{\phi_{\min}}^{\phi_{\max}} \sum_{f \in \mathcal{F}} \frac{\mathbb{1}[f \in \mathcal{R}_{\Dn}^{\varepsilon}]\mathbb{1}[\phi_j(f, D_b^{(n)}) \leq k]}{\sum_{f \in \mathcal{F}}\mathbb{1}[f \in \mathcal{R}_{\Dn}^{\varepsilon}]} dk\right) \right) \Bigg| \\
    &= \Bigg| -\mathbb{E}_{D^{(n)} \sim \mathcal{P}_n}\left[\int_{\phi_{\min}}^{\phi_{\max}}  \sum_{f \in \mathcal{F}} \frac{\mathbb{1}[f \in \mathcal{R}_{\Dn}^{\varepsilon}]\mathbb{1}[\phi_j(f, D_b^{(n)}) \leq k]}{\sum_{f \in \mathcal{F}}\mathbb{1}[f \in \mathcal{R}_{\Dn}^{\varepsilon}]} dk\right] \\
    &+ \frac{1}{B} \sum_{b = 1}^B \left(\int_{\phi_{\min}}^{\phi_{\max}} \sum_{f \in \mathcal{F}} \frac{\mathbb{1}[f \in \mathcal{R}_{\Dn}^{\varepsilon}]\mathbb{1}[\phi_j(f, D_b^{(n)}) \leq k]}{\sum_{f \in \mathcal{F}}\mathbb{1}[f \in \mathcal{R}_{\Dn}^{\varepsilon}]} dk\right) \Bigg|.
    \end{align*}
    Because $0 \leq \mathbb{P}(\ourvi_j \leq k), \mathbb{P}(\widehat{\ourvi_j} \leq k) \leq 1$ for all $k \in \mathbb{R}$, 
    \begin{align*}
        \int_{\phi_{\min}}^{\phi_{\max}} 0 dk &\leq \int_{\phi_{\min}}^{\phi_{\max}} \mathbb{P}(\ourvi_j \leq k) dk, \int_{\phi_{\min}}^{\phi_{\max}} \mathbb{P}(\widehat{\ourvi}_j \leq k) dk \leq \int_{\phi_{\min}}^{\phi_{\max}} 1 dk \\
        0 &\leq \int_{\phi_{\min}}^{\phi_{\max}} \mathbb{P}(\ourvi_j \leq k) dk, \int_{\phi_{\min}}^{\phi_{\max}} \mathbb{P}(\widehat{\ourvi_j} \leq k) dk \leq (\phi_{\max} - \phi_{\min}),
    \end{align*}
    suggesting that $\left(\int_{\phi_{\min}}^{\phi_{\max}}  \frac{\sum_{f \in \mathcal{F}}\mathbb{1}[f \in \mathcal{R}_{\Dn}^{\varepsilon}]\mathbb{1}[\phi_j(f, D_b^{(n)}) \leq k]}{\sum_{f \in \mathcal{F}}\mathbb{1}[f \in \mathcal{R}_{\Dn}^{\varepsilon}]} dk\right)$ is bounded.

    Then, by Hoeffding's inequality, we know that
    \begin{align*}
        &\mathbb{P}\left(\left|\mathbb{E}[\ourvi_j]  - \mathbb{E}[\widehat{\ourvi}_j] \right| > \varepsilon_E \right) \\
       =&  \mathbb{P}\Bigg(\Bigg| \mathbb{E}_{D^{(n)} \sim \mathcal{P}_n}\left[\int_{\phi_{\min}}^{\phi_{\max}}  \sum_{f \in \mathcal{F}} \frac{\mathbb{1}[f \in \mathcal{R}_{\Dn}^{\varepsilon}]\mathbb{1}[\phi_j(f, D_b^{(n)}) \leq k]}{\sum_{f \in \mathcal{F}}\mathbb{1}[f \in \mathcal{R}_{\Dn}^{\varepsilon}]} dk\right] \\
    &- \frac{1}{B} \sum_{b = 1}^B \left(\int_{\phi_{\min}}^{\phi_{\max}} \sum_{f \in \mathcal{F}} \frac{\mathbb{1}[f \in \mathcal{R}_{\Dn}^{\varepsilon}]\mathbb{1}[\phi_j(f, D_b^{(n)}) \leq k]}{\sum_{f \in \mathcal{F}}\mathbb{1}[f \in \mathcal{R}_{\Dn}^{\varepsilon}]} dk\right) \Bigg| > \varepsilon_E \Bigg) \\
    \leq& 2\exp\left( \frac{-2B\varepsilon_E^2}{(\phi_{\max} - \phi_{\min})^2} \right).
    \end{align*}
\end{proof}

\begin{corollary} \label{cor:iqr}
        
        Assume $\widehat{\ourcdf}_j(k)$ and $\ourcdf_j(k)$ are \textit{strictly} increasing in $k \in [\phi_{\min}, \phi_{\max}].$ Then, the interquantile range (IQR) of $\widehat{\ourcdf}_j$ will converge in probability to the IQR of $\ourcdf_j$.
    \end{corollary}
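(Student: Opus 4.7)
The plan is to reduce the claim about interquantile ranges to a claim about quantiles, and then reduce the quantile convergence to the pointwise CDF convergence already established in Theorem \ref{thm:ourvi_finite_sample}. Write $\mathrm{IQR}(\ourcdf_j) = q_{0.75} - q_{0.25}$ where $q_p = \ourcdf_j^{-1}(p)$, and similarly $\widehat{\mathrm{IQR}}(\widehat{\ourcdf}_j) = \hat q_{0.75} - \hat q_{0.25}$; strict monotonicity of both CDFs on $[\phi_{\min}, \phi_{\max}]$ guarantees that these inverses are well-defined single points. By the triangle inequality, it then suffices to show that $\hat q_p \to q_p$ in probability for each $p \in \{0.25, 0.75\}$, since then the difference $\hat q_{0.75} - \hat q_{0.25}$ converges in probability to $q_{0.75} - q_{0.25}$.

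Next, I would carry out the quantile convergence by a standard two-sided sandwich argument using strict monotonicity of $\ourcdf_j$. Fix $p \in \{0.25, 0.75\}$ and $\delta > 0$ small enough that $q_p \pm \delta \in [\phi_{\min}, \phi_{\max}]$. Because $\ourcdf_j$ is strictly increasing,
\begin{equation*}
\eta \;:=\; \min\bigl(p - \ourcdf_j(q_p - \delta),\; \ourcdf_j(q_p + \delta) - p\bigr) \;>\; 0.
\end{equation*}
By Theorem \ref{thm:ourvi_finite_sample} applied at the two points $q_p - \delta$ and $q_p + \delta$ (with error level $\eta/2$ and confidence $1 - \delta'/2$ each), for $B$ sufficiently large we have with probability at least $1 - \delta'$ that $\widehat{\ourcdf}_j(q_p - \delta) < p < \widehat{\ourcdf}_j(q_p + \delta)$. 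Because $\widehat{\ourcdf}_j$ is strictly increasing, this forces $\hat q_p \in (q_p - \delta, q_p + \delta)$, i.e.\ $|\hat q_p - q_p| < \delta$. Since $\delta, \delta' > 0$ were arbitrary, $\hat q_p \to q_p$ in probability.

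Finally, combining the two quantile convergences gives
\begin{equation*}
\bigl|\widehat{\mathrm{IQR}}(\widehat{\ourcdf}_j) - \mathrm{IQR}(\ourcdf_j)\bigr|
\;\le\; |\hat q_{0.75} - q_{0.75}| + |\hat q_{0.25} - q_{0.25}|
\;\xrightarrow{\,P\,}\; 0,
\end{equation*}
completing the argument. The main obstacle is the move from pointwise convergence of CDFs to convergence of quantiles: this step genuinely needs the strict-monotonicity hypothesis on $\ourcdf_j$, since without it a quantile could sit on a flat piece of the limit CDF and the inverse would fail to be continuous there. The strict-monotonicity assumption on $\widehat{\ourcdf}_j$ is only needed so that $\hat q_p$ is a well-defined point rather than an interval; if one instead uses the generalized inverse $\hat q_p = \inf\{k : \widehat{\ourcdf}_j(k) \ge p\}$, the same sandwich argument still pins $\hat q_p$ into $(q_p - \delta, q_p + \delta)$ and the conclusion is unchanged.
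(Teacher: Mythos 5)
Your proof is correct, and it is worth comparing to the paper's own argument because the key technical step is handled differently. The paper also reduces the claim to convergence of the two quantiles $k_{0.25}$ and $k_{0.75}$, but it then applies the Hoeffding bound of Theorem \ref{thm:ourvi_finite_sample} \emph{at the true quantiles themselves}, showing $\bigl|\widehat{\ourcdf}_j(k_{0.25}) - 0.25\bigr| + \bigl|\widehat{\ourcdf}_j(k_{0.75}) - 0.75\bigr| \le t$ with probability at least $1 - 4\exp(-2Bt^2/4)$ via a union bound, and from there concludes that the quantiles (hence the IQR) converge. That last inference -- from closeness of $\widehat{\ourcdf}_j(k_p)$ to $p$ to closeness of $\hat q_p$ to $q_p$ -- is left implicit in the paper, and it is not automatic: strict monotonicity of $\widehat{\ourcdf}_j$ alone does not give a uniform lower bound on its slope near $k_p$, so a small error in the CDF value could in principle correspond to a large displacement of the empirical quantile. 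Your two-sided sandwich argument at $q_p \pm \delta$ is the standard way to close exactly this gap: strict monotonicity of the \emph{limit} CDF $\ourcdf_j$ produces the positive margin $\eta$, and then pointwise convergence at the two perturbed points, combined with mere monotonicity of $\widehat{\ourcdf}_j$, pins $\hat q_p$ into $(q_p - \delta, q_p + \delta)$. So your route uses the same ingredients (the pointwise Hoeffding bound plus a union bound over two evaluation points) but applies them at $q_p \pm \delta$ rather than at $q_p$, which is what makes the inverse-function step rigorous; your closing remark that the generalized inverse suffices for $\widehat{\ourcdf}_j$ is also accurate and slightly weakens the hypotheses needed on the empirical CDF.
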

    \begin{proof}
        Let $k_{0.25}$ be the $k$ such that $\ourcdf_j(k_{0.25}) = 0.25.$ And let $k_{0.75}$ be the $k$ such that $\ourcdf_j(k_{0.75}) = 0.75.$ Similarly, let $\hat{k}_{0.25}$ be the $k$ such that $\widehat{\ourcdf}_j(\hat{k}_{0.25}) = 0.25.$ And let $\hat{k}_{0.75}$ be the $k$ such that $\widehat{\ourcdf}_j(\hat{k}_{0.75}) = 0.75.$ The IQR of $\widehat{\ourcdf}_j$ converges to the IQR of $\ourcdf_j$ if $\hat{k}_{0.25} \to k_{0.25}$ and $\hat{k}_{0.75} \to k_{0.75}.$

        Because $\widehat{\ourcdf}_j(k)$ and $\ourcdf_j(k)$ are increasing in $k,$ we know that if $
            \mathbb{P}\left(\widehat{\ourvi}_j \leq k_{0.25}  \right) = 0.25,$
        then $\hat{k}_{0.25} = k_{0.25}.$ An analogous statement holds for $\hat{k}_{0.75}.$

        So, we will bound how far $\widehat{\ourcdf}_j(k_{0.25})$ is from $0.25 = \ourcdf_j(k_{0.25})$ and how far $\widehat{\ourcdf}_j(k_{0.75})$ is from $0.75 = \ourcdf_j(k_{0.75}).$

        Let $t > 0.$ Then,
        \begin{align*}
            &\mathbb{P}\left( \left|\widehat{\ourcdf}_j(k_{0.25}) - \ourcdf_j(k_{0.25})  \right| + \left| \widehat{\ourcdf}_j(k_{0.75}) - \ourcdf_j(k_{0.75}) \right| > t \right) \\
            \leq &\mathbb{P}\left( \left\{ \left|\widehat{\ourcdf}_j(k_{0.25}) - \ourcdf_j(k_{0.25})  \right| > \frac{t}{2}\right\}  \cup \left\{ \left| \widehat{\ourcdf}_j(k_{0.75}) - \ourcdf_j(k_{0.75}) \right| > \frac{t}{2}\right\}  \right) \\
            \leq &\mathbb{P}\left( \left\{ \left|\widehat{\ourcdf}_j(k_{0.25}) - \ourcdf_j(k_{0.25})  \right| > \frac{t}{2}\right\} \right) \\ 
            &+ \mathbb{P}\left(\left\{ \left| \widehat{\ourcdf}_j(k_{0.75}) - \ourcdf_j(k_{0.75}) \right| > \frac{t}{2} \right\} \right) \text{ by Union bound}.
        \end{align*}
        Then, by Theorem \ref{thm:rid_finite_sample},
        \begin{align*}
            \mathbb{P}\left( \left|\widehat{\ourcdf}_j(k_{0.25}) - \ourcdf_j(k_{0.25})  \right|  > \frac{t}{2}\right) \leq 2\exp\left(-2B\frac{t^2}{4} \right).
        \end{align*}
        So,
        \begin{align*}
            &\mathbb{P}\left( \left|\widehat{\ourcdf}_j(k_{0.25}) - \ourcdf_j(k_{0.25})  \right| + \left| \widehat{\ourcdf}_j(k_{0.75}) - \ourcdf_j(k_{0.75}) \right| > t \right) \\
            \leq &\mathbb{P}\left( \left\{ \left|\widehat{\ourcdf}_j(k_{0.25}) - \ourcdf_j(k_{0.25})  \right| \right\} > \frac{t}{2}\right) \\ 
            &+ \mathbb{P}\left(\left\{ \left| \widehat{\ourcdf}_j(k_{0.75}) - \ourcdf_j(k_{0.75}) \right| \right\} > \frac{t}{2} \right) \\
            \leq & 2\exp\left(-2B\frac{t^2}{4} \right) + 2\exp\left(-2B\frac{t^2}{4} \right) \\
            = &4\exp\left(-2B\frac{t^2}{4} \right).
        \end{align*}
    So, as $B \to \infty, \mathbb{P}\left( \left|\widehat{\ourcdf}_j(k_{0.25}) - \ourcdf_j(k_{0.25})  \right| + \left| \widehat{\ourcdf}_j(k_{0.75}) - \ourcdf_j(k_{0.75}) \right| > t \right) $ ultimately converging to 0.

    Therefore, the IQR of $\widehat{\ourcdf}_j$ converges to the IQR of $\ourcdf$.

    \end{proof}

\newpage
\section{Example Model Classes for Which \ourcdf Converges}
First, recall the following assumption from the main paper:
\setcounter{assumption}{0}

\begin{assumption} 
    If 
    \begin{align*}
        \rho\left(\RLDCDF(\cdot; \varepsilon, \mathcal{F}, \ell, \mathcal{P}_n,\lambda),
     \LD(\cdot; \ell, n, \mathcal{P}_n,\lambda) \right) &\leq \gamma \text{ then } \\
        \rho\left(\ourcdf_j(\cdot ; \varepsilon, \mathcal{F}, \ell, \mathcal{P}_n, \lambda), \ourcdf_j(\cdot ; \varepsilon, \{g^*\}, \ell, \mathcal{P}_n, \lambda) \right) &\leq d(\gamma)
    \end{align*}
    for a function $d: [0, \ell_{\max} - \ell_{\min}] \to [0, \phi_{\max} - \phi_{\min}]$ such that $\lim_{\gamma \to 0}d(\gamma)=0.$ Here, $\rho$ represents any distributional distance metric (e.g., 1-Wasserstein).
\end{assumption}
In this section, we highlight two simple examples of model classes and model reliance metrics for which Assumption \ref{asm:lipschitz_supp} holds. First we show that Assumption \ref{asm:lipschitz_supp} holds for the class of linear regression models with the model reliance metric being the coefficient assigned to each variable in Proposition \ref{prop:linear}; Proposition \ref{prop:gam} presents a similar result for generalized additive models. 
We begin by presenting two lemmas which will help prove Proposition \ref{prop:linear}:

\begin{lemma}
\label{lem:rld_convex}
    Let $\ell$ be unregularized mean square error, used as the objective for estimating optimal models in some class of continuous models $\mathcal{F}$.  Assume that the DGP's noise $\epsilon$ is centered at 0: $\mathbb{E}[\epsilon] = 0.$ Define the function $m : [0, \ell_{\text{\rm max}}] \to [0, 1]$ as: 
   \begin{align*}
        m(\varepsilon) := \lim_{n \to \infty}\int_{\ell_{\min}}^{\ell_{\max}} \left|\LD(k; \ell, n, \mathcal{P}_n,\lambda) - \RLDCDF(k; \varepsilon, \mathcal{F}, \ell, \mathcal{P}_n, \lambda) \right| dk.\\
    \end{align*}
    The function $m$ is a strictly increasing function of $\varepsilon$; $m$ simply measures the integrated absolute error between the CDF of $g^*$'s loss distribution and the CDF of the Rashomon set's loss distribution.
    Then, if $g^* \in \mathcal{F}$, then $m(0) = 0$.
\end{lemma}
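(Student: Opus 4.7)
The plan is to simplify $m(0)$ into a single nonnegative expectation that measures the gap between the loss of $g^*$ on a bootstrap sample and the empirical minimum over $\mathcal{F}$, and then drive that gap to zero using consistency of empirical risk minimization under the correct specification $g^* \in \mathcal{F}$.

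First, when $\varepsilon = 0$, every model in $\mathcal{R}^0_{\Dn_b}$ attains exactly the same empirical loss $L^*_b := \min_{f \in \mathcal{F}} \ell(f, \Dn_b)$, so directly from the definition
\[
\RLDCDF(k; 0, \mathcal{F}, \ell, \mathcal{P}_n, \lambda) = \mathbb{P}_{\Dn_b \sim \mathcal{P}_n}(L^*_b \leq k).
\]
Since $g^* \in \mathcal{F}$ we have $L^*_b \leq \ell(g^*, \Dn_b)$ for every $\Dn_b$, hence $\RLDCDF(k;0) \geq \LD(k)$ pointwise and the absolute value can be dropped from the integrand. Using the standard identity $\int_{\ell_{\min}}^{\ell_{\max}} F_X(k)\,dk = \ell_{\max} - \mathbb{E}[X]$, valid for any random variable $X \in [\ell_{\min}, \ell_{\max}]$, the two terms telescope to
\[
\int_{\ell_{\min}}^{\ell_{\max}} \bigl|\LD(k) - \RLDCDF(k;0)\bigr|\,dk = \mathbb{E}_{\Dn_b \sim \mathcal{P}_n}\!\bigl[\ell(g^*, \Dn_b) - L^*_b\bigr].
\]

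The second step is to show that this nonnegative expectation vanishes as $n \to \infty$. By Fubini and the fact that a bootstrap sample is iid from $\mathcal{P}_n$, $\mathbb{E}_{\Dn_b}[\ell(g^*, \Dn_b)] = \frac{1}{n}\sum_{i=1}^n (Y_i - g^*(X_i))^2$, which converges almost surely to $\mathbb{E}_{(X,Y)}[(Y - g^*(X))^2]$ by the strong law, using $\mathbb{E}[\epsilon] = 0$. For the minimized loss, ERM consistency together with $g^* \in \mathcal{F}$ will give $L^*_b \to \inf_{f \in \mathcal{F}} \mathbb{E}[\ell(f)] = \mathbb{E}[(Y - g^*(X))^2]$ in probability; since the loss is bounded, bounded convergence promotes this to convergence of $\mathbb{E}[L^*_b]$ to the same limit. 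Subtracting the two limits yields $m(0) = 0$.

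The routine piece is the CDF-to-expectation collapse in the first step. The substantive obstacle is justifying ERM consistency at the stated level of generality (``some class of continuous models $\mathcal{F}$''): for the concretely specified classes of interest this is classical -- OLS theory for linear models, smoothing theory for generalized additive models -- but in the abstract the proof will need either a uniform law of large numbers or a Glivenko-Cantelli hypothesis on $\{\ell(f, \cdot) : f \in \mathcal{F}\}$ so that empirical and population infima commute with the $n \to \infty$ limit. I would state that regularity condition explicitly at the outset, after which the remainder is a short bounded-convergence cleanup.
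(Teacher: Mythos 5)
Your proposal addresses only half of the lemma. The statement makes two claims: (i) $m$ is a \emph{strictly increasing} function of $\varepsilon$, and (ii) $m(0)=0$ when $g^*\in\mathcal{F}$. The paper's proof devotes nearly all of its effort to (i), via a three-case comparison of the $\varepsilon$- and $\varepsilon'$-Rashomon sets showing $\left|\LD(k)-\RLDCDF(k;\varepsilon)\right| \geq \left|\LD(k)-\RLDCDF(k;\varepsilon')\right|$ pointwise in $k$, with strict inequality on $[\ell^*,\,\ell^*+\varepsilon')$. You omit (i) entirely, and it is not a cosmetic omission: Proposition C.1 constructs $d := r_j\circ m^{-1}$, so the invertibility of $m$ — hence its strict monotonicity — is exactly what the lemma must deliver for Assumption 1 to be verified. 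A complete proof needs the argument that enlarging $\varepsilon$ can only push mass of the Rashomon loss distribution away from the point mass of $\LD$ at $\ell^*$, for every $k$.

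The part you do prove is correct and, in fact, tidier than the paper's treatment of it. Collapsing the $\varepsilon=0$ Rashomon set to the empirical-minimum event, dropping the absolute value via $L^*_b\leq\ell(g^*,\Dn_b)$, and telescoping the two CDF integrals into the single expected excess risk $\mathbb{E}_{\Dn_b}\left[\ell(g^*,\Dn_b)-L^*_b\right]$ is a clean reduction; the paper instead asserts that the $\varepsilon=0$ Rashomon set ``contains only $g^*$'' in the limit, which is a stronger statement than needed and is not justified beyond the observation that $g^*$ minimizes expected loss. You are also right that driving the excess risk to zero requires a uniform law of large numbers (or some Glivenko--Cantelli condition on $\{\ell(f,\cdot):f\in\mathcal{F}\}$) so that the empirical infimum converges to the population infimum — a hypothesis the paper leaves implicit. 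So: keep your second step as an improvement to the $m(0)=0$ argument, but you must supply the monotonicity proof before the lemma is established.
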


\begin{proof}
    Let $\ell$ be unregularized mean square error, used as the objective for estimating optimal models in some class of continuous models $\mathcal{F}$. Let $g^*$ denote the unknown DGP. Throughout this proof, we consider the setting with $n \to \infty$, although we often omit this notation for simplicity. 

    First, we restate the definition of $\RLDCDF$ and $\LD$ for reference:
    \begin{align*}
        \RLDCDF(k; \varepsilon, \mathcal{F}, \ell, \mathcal{P}_n, \lambda) &:= \mathbb{E}_{\Dn \sim \mathcal{P}_n}\left[\frac{\nu(\{f \in \mathcal{R}^{\varepsilon}_{\Dn} : \ell(f, \Dn) \leq k \} )}{\nu(\mathcal{R}^{\varepsilon}_{\Dn})} \right] 
    \end{align*}
    and
    
\begin{align*}
    \LD(k; \ell, n, \mathcal{P}_n,\lambda) := \mathbb{E}_{\Dn \sim \mathcal{P}_n} \left[\mathbb{1}[\ell(g^*, \Dn) \leq k]\right].
\end{align*}

    Because $g^*$ is the DGP, we know that its expected loss should be lower than the expected loss for any other model in the model class: $\mathbb{E}_{\Dn \sim \mathcal{P}_n}[\ell(g^*, \Dn)] \leq \mathbb{E}_{\Dn \sim \mathcal{P}_n}[\ell(f, \Dn)]$ for any $f \in \mathcal{F}$ such that $f \neq g^*$, as we have assumed that any noise has expectation 0. For simplicity, we denote $\mathbb{E}_{\Dn \sim \mathcal{P}_n}[\ell(g^*, \Dn)]$ by $\ell^*$. We first show that $m$ is monotonically increasing in $\varepsilon$ by showing that, for any $\varepsilon > \varepsilon' \geq 0$: 
    \begin{align*}
        &\lim_{n \to \infty}\int_{\ell_{\min}}^{\ell_{\max}} \left|\LD(k; \ell, n, \mathcal{P}_n,\lambda) - \RLDCDF(k; \varepsilon, \mathcal{F}, \ell, \mathcal{P}_n, \lambda) \right| dk\\
        >
        &\lim_{n \to \infty} \int_{\ell_{\min}}^{\ell_{\max}} \left|\LD(k; \ell, n, \mathcal{P}_n,\lambda) - \RLDCDF(k; \varepsilon', \mathcal{F}, \ell, \mathcal{P}_n, \lambda) \right| dk
    \end{align*} 
    by demonstrating that the inequality holds for each individual value of $k$. First, note that:
    \begin{align*}
        \LD(k; \ell, n, \mathcal{P}_n,\lambda) &= \mathbb{E}_{\Dn \sim \mathcal{P}_n}\left[\mathbb{1}[\ell(g^*, \Dn) \leq k] \right].
    \end{align*}
    As $n \to \infty$, this quantity approaches
    \begin{align*}
        \mathbb{E}_{\Dn \sim \mathcal{P}_n}\left[\mathbb{1}[\ell(g^*, \Dn) \leq k] \right] &= \mathbb{1}[\ell(g^*, \Dn) \leq k].
    \end{align*}

    \begin{figure}
        \centering
        \includegraphics[width=0.95\textwidth]{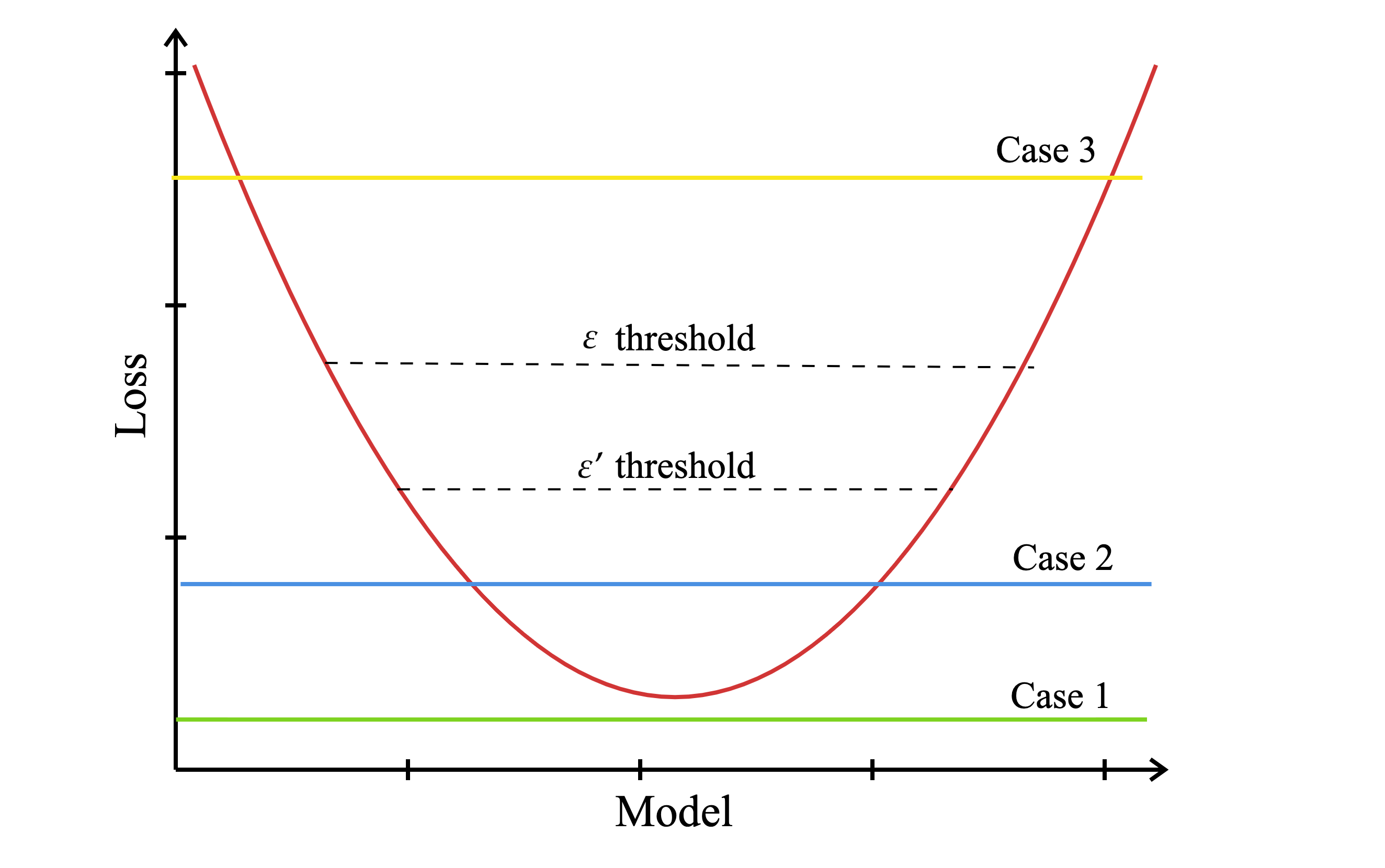}
        \caption{A visual overview of the proof of Lemma \ref{lem:rld_convex}. In \textbf{Case 1}, we consider loss values that are achieved by no models in the model class, so each loss distribution has 0 mass below $k$ in this case. \textbf{Case 2} covers each value of $k$ such that $k$ is larger than $\ell^*$, so $\LD(k) = 1$. The $\RLDCDF$ for the $\varepsilon'$ Rashomon set is closer to 1 than the $\varepsilon$ Rashomon set because a larger proportion of this set falls below $k$. Under \textbf{Case 3}, all models in the $\varepsilon'$ Rashomon set fall below $k$.}
        \label{fig:rld_proof_viz}
    \end{figure}
    
    We will consider three cases: first, we consider $\ell^* > k_1 \geq 0$, followed by $\varepsilon' + \ell^* > k_2 \geq \ell^*$, and finally $k_3 \geq \varepsilon' + \ell^*$. Figure \ref{fig:rld_proof_viz} provides a visual overview of these three cases and the broad idea within each case.

    \textbf{Case 1: $\ell^* > k_1 \geq 0$}
    
    For any $k_1$ such that $\ell^* > k_1 \geq 0$, it holds that 
    \begin{align*}
        \mathbb{1}[\ell(g^*, \Dn) \leq k_1] &= 0,
    \end{align*}
    since $\ell^* > k_1$ by definition. Further, because $\ell(g^*, \Dn) \leq \ell(f, \Dn)$ for mean squared error in the infinite data setting, 
    \begin{align*}
        \RLDCDF(k_1; \varepsilon', \mathcal{F}, \ell, \mathcal{P}_n, \lambda) = \RLDCDF(k_1; \varepsilon, \mathcal{F}, \ell, \mathcal{P}_n, \lambda) = 0
    \end{align*}
    
    \textbf{Case 2: $\varepsilon' + \ell^* \geq k_2 \geq \ell^*$}
    
    For any $k_2$ such that $\varepsilon' + \ell^* \geq k_2 \geq \ell^*$,
    \begin{align*}
        \mathbb{1}[\ell(g^*, \Dn) \leq k_2] &= 1,
    \end{align*}
    since $\ell(g^*, \Dn) \leq k_2$ by the definition of $k_2.$
    Let $\nu$ denote a volume function over the target model class. Recalling that $\varepsilon > \varepsilon'$, we know that:
    \begin{align*}
        \nu(\mathcal{R}^\varepsilon) >  \nu(\mathcal{R}^{\varepsilon'}) \iff  & \frac{1}{\nu(\mathcal{R}^\varepsilon)} <  \frac{1}{\nu(\mathcal{R}^{\varepsilon'})}\\
        \iff &  \frac{\nu(\{f \in \mathcal{R}^\varepsilon: \ell(f, \Dn) \leq k_2\})}{\nu(\mathcal{R}^\varepsilon)} <  \frac{\nu(\{f \in \mathcal{R}^\varepsilon: \ell(f, \Dn) \leq k_2\})}{\nu(\mathcal{R}^{\varepsilon'})}\\
        \iff &  \frac{\nu(\{f \in \mathcal{R}^\varepsilon: \ell(f, \Dn) \leq k_2\})}{\nu(\mathcal{R}^{\varepsilon})} <  \frac{\nu(\{f \in \mathcal{R}^{\varepsilon'}: \ell(f, \Dn) \leq k_2\})}{\nu(\mathcal{R}^{\varepsilon'})},
        \end{align*}
        since the set of models in the $\varepsilon$ Rashomon set with loss less than $k_2$ is the same set as set of models in the $\varepsilon'$ Rashomon set with loss less than $k_2$ for $k_2 \leq \varepsilon' + \ell^*.$ We can further manipulate this quantity to show:
        %
        \begin{align*}
        &\frac{\nu(\{f \in \mathcal{R}^\varepsilon: \ell(f, \Dn) \leq k_2\})}{\nu(\mathcal{R}^{\varepsilon})} <  \frac{\nu(\{f \in \mathcal{R}^{\varepsilon'}: \ell(f, \Dn) \leq k_2\})}{\nu(\mathcal{R}^{\varepsilon'})}\\
        \iff  & 1 - \frac{\nu(\{f \in \mathcal{R}^\varepsilon: \ell(f, \Dn) \leq k_2\})}{\nu(\mathcal{R}^{\varepsilon})} >  1 - \frac{\nu(\{f \in \mathcal{R}^{\varepsilon'}: \ell(f, \Dn) \leq k_2\})}{\nu(\mathcal{R}^{\varepsilon'})}\\
        \iff &  \left|1 - \frac{\nu(\{f \in \mathcal{R}^\varepsilon: \ell(f, \Dn) \leq k_2\})}{\nu(\mathcal{R}^{\varepsilon})}\right| >  \left|1 - \frac{\nu(\{f \in \mathcal{R}^{\varepsilon'}: \ell(f, \Dn) \leq k_2\})}{\nu(\mathcal{R}^{\varepsilon'})}\right| \\
        \iff &  \left|1 - \RLDCDF(k_2; \varepsilon, \mathcal{F}, \ell)\right| >  \left|1 - \RLDCDF(k_2; \varepsilon', \mathcal{F}, \ell)\right| \\
        \iff & \left|\LD(k_2) - \RLDCDF(k_2; \varepsilon, \mathcal{F}, \ell)\right| \\
        &>
        \left|\LD(k_2) - \RLDCDF(k_2; \varepsilon', \mathcal{F}, \ell) \right|,
    \end{align*}
    because $\LD(k_2) = 1$.
    
    \textbf{Case 3: $k_3 > \varepsilon' + \ell^*$}
    
    For any $k_3 > \varepsilon' + \ell^*$, we have 
    \begin{align*}
        \RLDCDF(k_3; \varepsilon', \mathcal{F}, \ell)  &= \frac{\nu(\{f \in \mathcal{R}^{\varepsilon'}: \ell(f, \Dn) \leq k_3\})}{\nu(\mathcal{R}^{\varepsilon'})}\\ 
        &= \frac{\nu(\mathcal{R}^{\varepsilon'})}{\nu(\mathcal{R}^{\varepsilon'})} & \text{because $k_3 > \varepsilon' + \ell^*$}\\
        &= 1.
    \end{align*}
    This immediately gives that
    \begin{align*}
        \left|\LD(k_3) - \RLDCDF(k_3; \varepsilon', \mathcal{F}, \ell)  \right| &= \left| 1 - 1 \right|\\&= 0,
    \end{align*}
    the minimum possible value for this quantity. We can then use the fact that the absolute value is greater than or equal to zero to show that
    \begin{align*}
         &\left|\LD(k_3) - \RLDCDF(k_3; \varepsilon, \mathcal{F}, \ell) \right| \\
         & \geq 0  =
        \left|\LD(k_3) - \RLDCDF(k_3; \varepsilon', \mathcal{F}, \ell)  \right| 
    \end{align*}
    
    In summary, under cases 1 and 3, 
    \begin{align*}
         &\left|\LD(k; \ell, n, \mathcal{P}_n,\lambda) - \RLDCDF(k; \varepsilon, \mathcal{F}, \ell, \mathcal{P}_n, \lambda) \right| \\
         &\geq
        \left|\LD(k; \ell, n, \mathcal{P}_n,\lambda) - \RLDCDF(k; \varepsilon', \mathcal{F}, \ell, \mathcal{P}_n, \lambda) \right|; 
    \end{align*}
    under case 2,
    \begin{align*}
         &\left|\LD(k_2; \ell, n, \mathcal{P}_n,\lambda) - \RLDCDF(k_2; \varepsilon, \mathcal{F}, \ell, \mathcal{P}_n, \lambda) \right| \\
         &>
        \left|\LD(k_2; \ell, n, \mathcal{P}_n,\lambda) - \RLDCDF(k_2; \varepsilon', \mathcal{F}, \ell, \mathcal{P}_n, \lambda) \right|.
    \end{align*}
    Since there is some range of values $k \in [\ell^*, \varepsilon' + \ell^*)$ for which the inequality above is strict, it follows that
    \begin{align*}
        &\int_{\ell_{\min}}^{\ell_{\max}}\left|\LD(k; \ell, n, \mathcal{P}_n,\lambda) - \RLDCDF(k; \varepsilon, \mathcal{F}, \ell, \mathcal{P}_n, \lambda) \right|dk \\
        >& \int_{\ell_{\min}}^{\ell_{\max}}\left|\LD(k; \ell, n, \mathcal{P}_n,\lambda) - \RLDCDF(k; \varepsilon', \mathcal{F}, \ell, \mathcal{P}_n, \lambda) \right|dk, \\
    \end{align*}
    showing that $\varepsilon > \varepsilon'$ is a \textit{sufficient} condition for $m(\varepsilon) > m(\varepsilon')$. Observe that, for a loss function with no regularization and a fixed model class, \textit{RLD} is a function of only $\varepsilon$. As such, varying $\varepsilon$ is the only way to vary \textit{RLD}, making $\varepsilon > \varepsilon'$ a \textit{necessary} condition for the above. Therefore, we have shown that $\varepsilon > \varepsilon' \iff m(\varepsilon) > m(\varepsilon')$, i.e. $m$ is strictly increasing. 
    
    Further, if $g^* \in \mathcal{F}$, the
    Rashomon set with $\varepsilon = 0$ will contain only $g^*$ as $n$ approaches infinity, immediately yielding that
    \begin{align*}
        m(0) = \int_{\ell_{\min}}^{\ell_{\max}}\left|\LD(k; \ell, n, \mathcal{P}_n,\lambda) - \RLDCDF(k; 0, \mathcal{F}, \ell) \right|dk = 0.
    \end{align*}
    
\end{proof}
Lemma \ref{lem:rld_convex} provides a mechanism  through which $\RLD$ will approach $\LD$ in the infinite data setting. The following lemma states that each level set of the quadratic loss surface is a hyper-ellipsoid, providing another useful tool for the propositions given in this section.
\begin{lemma}
\label{lem:ellipsoidal}
    The level set of the quadratic loss at $\varepsilon$ is a hyper-ellipsoid defined by:
    \begin{align*}
        (\theta - \theta^*)^TX^T X (\theta - \theta^*) = \varepsilon - c,
    \end{align*}
    which is centered at $\theta^*$ and of constant shape in terms of $\varepsilon$.
    
    \label{lemma}
\end{lemma}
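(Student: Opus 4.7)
The plan is to rewrite the mean squared error as a quadratic form centered at the OLS optimum $\theta^*$ via completing the square. First I would expand the loss
\begin{align*}
\ell(\theta) = \|y - X\theta\|^2 = y^\top y - 2 y^\top X \theta + \theta^\top X^\top X \theta,
\end{align*}
and then use the first-order optimality condition $X^\top X \theta^* = X^\top y$ to substitute $y^\top X = (\theta^*)^\top X^\top X$ into the cross term. This rearranges into the standard identity
\begin{align*}
\ell(\theta) = \ell(\theta^*) + (\theta - \theta^*)^\top X^\top X (\theta - \theta^*).
\end{align*}
Setting $c := \ell(\theta^*)$ (which is constant in $\theta$) and equating $\ell(\theta) = \varepsilon$ immediately yields the claimed level-set equation $(\theta - \theta^*)^\top X^\top X (\theta - \theta^*) = \varepsilon - c$.

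Because $X^\top X$ is symmetric positive semi-definite (indeed positive definite under the standard assumption that $X$ has full column rank), this equation describes a hyper-ellipsoid in parameter space centered at $\theta^*$. Writing the spectral decomposition $X^\top X = U \Lambda U^\top$ with eigenvalues $\lambda_1, \dots, \lambda_p > 0$, the principal axes of the ellipsoid point along the columns of $U$, and the squared semi-axis lengths are $(\varepsilon - c)/\lambda_i$. Since the eigenvectors and eigenvalues of $X^\top X$ depend only on the design matrix and not on $\varepsilon$, changing $\varepsilon$ only rescales the ellipsoid isotropically in the sense of preserving its orientation and axis ratios; this is exactly what "constant shape in terms of $\varepsilon$" asserts.

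I do not expect a real obstacle here as the argument is a one-line algebraic identity followed by a standard spectral observation. The only caveat worth flagging is the degenerate case where $X^\top X$ is singular: then the level set becomes a hyper-ellipsoidal cylinder that is unbounded along the null space of $X^\top X$, but the stated equation and the constant-shape conclusion still hold. Since subsequent propositions will invoke this lemma for computing volume ratios of Rashomon sets, I would remark that $(\varepsilon - c)$ must be nonnegative for the level set to be nonempty, with nonnegativity automatic when $\varepsilon \geq c = \ell(\theta^*)$.
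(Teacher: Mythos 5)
Your proposal is correct and follows essentially the same route as the paper: expand the squared loss, invoke the normal equations $X^\top X\theta^* = X^\top y$ to complete the square, and identify the constant $c = y^\top(y - X\theta^*) = \ell(\theta^*)$, after which the level-set equation is immediate. Your added remarks on the spectral decomposition (justifying ``constant shape''), the rank-deficient case, and the nonnegativity of $\varepsilon - c$ are sensible elaborations of points the paper leaves implicit, but they do not change the argument.
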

\begin{proof}
    Recall that the quadratic loss for some parameter vector $\theta$ is given by:
    \begin{align*}
        \ell(\theta) &= \| y - X \theta \|^2
    \end{align*}
    and that the optimal vector $\theta^*$ is given by:
    \begin{align*}
        \theta^* &= (X^TX)^{-1}X^T y\\
        \iff X^TX\theta^* &= X^T y
    \end{align*}
    With these facts, we show that the level set for the quadratic loss at some fixed value $\varepsilon$ takes on the standard form for a hyper-ellipsoid. This is shown as:
    \begin{align*}
        \ell(\theta) &= \| y - X \theta \|^2\\
        &= \| y - X \theta \|^2 \underbrace{- y^T(y - X\theta^*) + y^T(y - X\theta^*)}_{\text{add }0 }\\
        &= \underbrace{y^Ty - 2y^T X \theta + \theta^T X^T X \theta}_{\text{expand quadratic}} \underbrace{- y^Ty - y^T X\theta^*}_{\text{distribute } y^T} + y^T(y - X\theta^*)\\
        &= y^Ty - 2y^T X \theta + \theta^T X^T X \theta - y^Ty - \underbrace{(X^Ty)^T \theta^*}_{\text{pull out transpose}} + y^T(y - X\theta^*)\\
        &= y^Ty - 2y^T X \theta + \theta^T X^T X \theta - y^Ty - \underbrace{\theta^{*T}X^TX\theta^*}_{\text{ because } X^Ty = X^TX\theta^*} + y^T(y - X\theta^*)\\
        &= y^Ty - \underbrace{2(X^Ty)^T \theta}_{\text{ pull out transpose }} + \theta^T X^T X \theta - y^Ty - \theta^{*T}X^TX\theta^* + y^T(y - X\theta^*)\\
        &= y^Ty - \underbrace{2\theta^*X^TX \theta}_{\text{ because } X^Ty = X^TX\theta^*} + \theta^T X^T X \theta - y^Ty - \theta^{*T}X^TX\theta^* + y^T(y - X\theta^*)\\
        &=  \theta^T X^T X \theta  - 2\theta^*X^TX \theta  - \theta^{*T}X^TX\theta^* + y^T(y - X\theta^*) \text{ because } y^Ty \text{ terms cancel out}\\
        &= (\theta - \theta^*)^TX^T X (\theta - \theta^*)+ y^T(y - X\theta^*) \text{ by factorization.}
    \end{align*}
    Noting that the term $y^T(y - X\theta^*)$ is constant in terms of $\theta$, so we can simplify this expression to $\ell(\theta) = (\theta - \theta^*)^TX^T X (\theta - \theta^*)+c$ where $c = y^T(y - X\theta^*)$. If we are interested in the level set at $\ell(\theta) = c + \varepsilon$ --- that is, with loss $\varepsilon$ greater than the optimal loss --- this is exactly:
    \begin{align*}
        &(\theta - \theta^*)^TX^T X (\theta - \theta^*)+c = c + \varepsilon\\
        \iff &(\theta - \theta^*)^TX^T X (\theta - \theta^*) = \varepsilon.
    \end{align*}
    That is, the set of parameters $\theta$ yielding loss value $c + \varepsilon$ is a hyper-ellipsoid centered at $\theta^*$ according to the positive semi-definite matrix $X^TX$.
\end{proof}

\begin{proposition}
\label{prop:linear}
    If the DGP is a linear regression model, Assumption \ref{asm:lipschitz_supp} is guaranteed to hold for the function class of linear models (i.e., $g^* \in \mathcal{F}$) as $n \to \infty$.
\end{proposition}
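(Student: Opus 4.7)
The plan is to chain together Lemmas \ref{lem:rld_convex} and \ref{lem:ellipsoidal}: closeness of $\RLDCDF$ and $\LD$ forces $\varepsilon$ to be small, which forces the Rashomon set to be a thin ellipsoid around the OLS minimizer $\theta_b^*$ (which itself concentrates on the true DGP parameters $\theta_{g^*}$ by OLS consistency), which finally forces the induced distribution of $\phi_j$ values to concentrate on $\phi_j(g^*, \cdot)$ by continuity of $\phi_j$ in its model argument.

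Concretely, I would take $\rho$ to be the $1$-Wasserstein distance on CDFs over $[\ell_{\min}, \ell_{\max}]$, under which $\rho(\RLDCDF, \LD) = m(\varepsilon)$ for $m$ as in Lemma \ref{lem:rld_convex}. Since Lemma \ref{lem:rld_convex} shows $m$ is strictly increasing with $m(0) = 0$, and $m$ is continuous as an integral of absolute values, $m$ admits a continuous inverse on a neighborhood of $0$ with $m^{-1}(\gamma) \to 0$ as $\gamma \to 0$. Consequently the hypothesis $\rho(\RLDCDF, \LD) \leq \gamma$ forces $\varepsilon \leq m^{-1}(\gamma)$. By Lemma \ref{lem:ellipsoidal}, for each bootstrap sample $\Dn_b$ the Rashomon set is then the ellipsoid $\{\theta : (\theta - \theta_b^*)^T X_b^T X_b (\theta - \theta_b^*) \leq \varepsilon\}$, whose Euclidean diameter is of order $\sqrt{\varepsilon / \lambda_{\min}(X_b^T X_b)}$ and which contains $\theta_{g^*}$ asymptotically since $g^* \in \mathcal{F}$ and OLS is consistent.

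For any coefficient-based model reliance metric $\phi_j$, the map $\theta \mapsto \phi_j(\theta, \Dn_b)$ is continuous, so the empirical CDF of $\phi_j$ values over the Rashomon ellipsoid lies within the modulus of continuity of $\phi_j$ (applied to the ellipsoid's vanishing diameter) of the point mass at $\phi_j(g^*, \Dn_b)$. Taking expectations over $\Dn_b \sim \mathcal{P}_n$ and using Jensen's inequality for $1$-Wasserstein then yields $\rho(\ourcdf_j(\cdot; \varepsilon, \mathcal{F}, \ell, \mathcal{P}_n, \lambda), \ourcdf_j(\cdot; \varepsilon, \{g^*\}, \ell, \mathcal{P}_n, \lambda)) \leq d(\gamma)$, where $d$ is the composition of $\phi_j$'s modulus of continuity with a constant multiple of $\sqrt{m^{-1}(\gamma)}$, and $d(\gamma) \to 0$ as $\gamma \to 0$ by construction.

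The main obstacle is bridging the generic distributional distance $\rho$ in Assumption \ref{asm:lipschitz_supp} with the integrated absolute difference $m(\varepsilon)$ used in Lemma \ref{lem:rld_convex}. For the $1$-Wasserstein metric these agree directly; for other standard distances (e.g., total variation or Kolmogorov--Smirnov) one must either re-derive the monotonicity argument of Lemma \ref{lem:rld_convex} with $\rho$ in place of the integrated absolute difference, or invoke standard inequalities relating these metrics, which may require additional care near the jump of $\LD$ at $\ell^*$.
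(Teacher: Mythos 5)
Your proposal is correct, and its first half — using the strict monotonicity of $m$ from Lemma \ref{lem:rld_convex} with $m(0)=0$ to convert the hypothesis $\rho(\RLDCDF,\LD)\leq\gamma$ into a bound $\varepsilon\leq m^{-1}(\gamma)$, and then composing with a bound on the variable-importance discrepancy to build $d$ — is exactly the paper's strategy. Where you diverge is in how you control the VI side as $\varepsilon\to 0$. The paper defines $r_j(\varepsilon)=\int|\ourcdf_j(k;\{g^*\},0)-\ourcdf_j(k;\mathcal{F},\varepsilon)|\,dk$ and proves it is \emph{strictly increasing} in $\varepsilon$ via a two-case geometric argument on the concentric, similar hyper-ellipsoids of Lemma \ref{lem:ellipsoidal} (comparing the volume fractions of the $\varepsilon$- and $\varepsilon'$-sets below the hyperplane $\theta_j=k$ on either side of $\theta_j^*$), then sets $d=r_j\circ m^{-1}$. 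You instead bound $r_j(\varepsilon)$ directly: since $\phi_j=\theta_j$ is a ($1$-Lipschitz) coordinate projection and the Rashomon ellipsoid has $j$-extent $O(\sqrt{\varepsilon/\lambda_{\min}(X^TX)})$ around the center $\theta^*=\theta_{g^*}$ in the infinite-data limit, the pushforward distribution is supported on a shrinking interval containing the point mass, giving $r_j(\varepsilon)\leq C\sqrt{\varepsilon}$ and hence the explicit modulus $d(\gamma)=C\sqrt{m^{-1}(\gamma)}$. This is more elementary (no case analysis, and monotonicity of $r_j$ is not needed — an upper bound vanishing at $0$ suffices for Assumption \ref{asm:lipschitz_supp}) and it yields a quantitative rate; the paper's route buys the additional structural fact that $r_j$ is monotone in $\varepsilon$. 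Two minor caveats: your appeal to a \emph{continuous} inverse of $m$ is more than is needed (strict monotonicity with $m(0)=0$ already gives $\varepsilon\to 0$ as $\gamma\to 0$), and the mismatch you flag between a generic $\rho$ and the integrated absolute CDF difference is a real ambiguity, but one inherited from the paper itself, whose own $m$ and $r_j$ are likewise the $1$-Wasserstein distance on the line.
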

\begin{proof}
    We now turn our attention to \RID. Let our variable importance metric $\phi_j := \theta_j,$ the coefficient of a linear model, and let $p$ denote the number of variables in the dataset such that $\boldsymbol{\theta} \in \mathbb{R}^p$. As in Lemma \ref{lem:rld_convex}, we restrict ourselves to the setting in which $n \to \infty,$ although we often omit this notation.
    Define the function $r_j: [0, \ell_{\text{max}}] \to [0, 1]$ to be:
    \begin{align*}
        r_j(\varepsilon) := \int_{\phi_{\min}}^{\phi_{\max}} \left|\ourcdf_j(k; \{g^*\}, 0) - \ourcdf_j(k; \mathcal{F}, \varepsilon) \right| dk
    \end{align*}
    We show that $r_j$ is a monotonic function of $\varepsilon$, for any $j \in \{1, 2, \hdots, p\}$. In other words, as $\varepsilon$ gets smaller, the value of $r_j(\varepsilon)$ gets smaller. We do so by showing that the following holds for this VI metric: 
    \begin{align*}
        &\int_{\phi_{\min}}^{\phi_{\max}} \left|\ourcdf_j(k; \{g^*\}, 0) - \ourcdf_j(k; \mathcal{F}, \varepsilon) \right| dk \\
        \geq & \int_{\phi_{\min}}^{\phi_{\max}}\left|\ourcdf_j(k; \{g^*\}, 0) - \ourcdf_j(k; \mathcal{F}, \varepsilon') \right| dk
    \end{align*} 
    if and only if $\varepsilon > \varepsilon'$ by showing that, for any $k$, 
    \begin{align*}
        &\left|\ourcdf_j(k; \{g^*\}, 0) - \ourcdf_j(k; \mathcal{F}, \varepsilon)\right| \\
        &\geq 
        \left|\ourcdf_j(k; \{g^*\}, 0) - \ourcdf_j(k; \mathcal{F}, \varepsilon') \right|.
    \end{align*} 

    For simplicity of notation, we denote the linear regression model parameterized by some coefficient vector $\boldsymbol{\theta} \in \mathbb{R}^p$ simply as $\boldsymbol{\theta}$. Let $\boldsymbol{\theta}^* \in \mathbb{R}^p$ denote the coefficient vector for the optimal model. Additionally, we define the following quantities to represent the most extreme values for $\theta_j$ (i.e., the coefficient along the $j$-th axis) for each Rashomon set. Let $a_j$ and $b_j$ be the two values defined as:
    \begin{align*}
        a_j &:= \min_{\mathbf{v} \in \mathbb{R}^p} (\boldsymbol{\theta}^* + \mathbf{v})_j \text{ s.t. } \ell(\boldsymbol{\theta}^* + \mathbf{v}, \Dn) = \ell^* + \varepsilon\\
        b_j &:= \max_{\mathbf{v} \in \mathbb{R}^p}  (\boldsymbol{\theta}^* + \mathbf{v})_j \text{ s.t. } \ell(\boldsymbol{\theta}^* + \mathbf{v}, \Dn) = \ell^* + \varepsilon.
    \end{align*} 
     Similarly, let $a'_j$ and $b'_j$ be the two values defined as:
    \begin{align*}
        a'_j &:= \min_{\mathbf{v} \in \mathbb{R}^p}  (\boldsymbol{\theta}^* + \mathbf{v})_j \text{ s.t. } \ell(\boldsymbol{\theta}^* + \mathbf{v}, \Dn) = \ell^* + \varepsilon'\\
        b'_j &:= \max_{\mathbf{v} \in \mathbb{R}^p}  (\boldsymbol{\theta}^* + \mathbf{v})_j \text{ s.t. } \ell(\boldsymbol{\theta}^* + \mathbf{v}, \Dn) = \ell^* + \varepsilon'.
    \end{align*} 
    Intuitively, these  values represent the most extreme values of $\boldsymbol{\theta}$ along dimension $j$ that are still included in their respective Rashomon sets. Figure \ref{fig:a_b_explanation} provides a visual explanation of each of these quantities.
    \begin{figure}[t]
        \centering
        \includegraphics[width=0.9\textwidth]{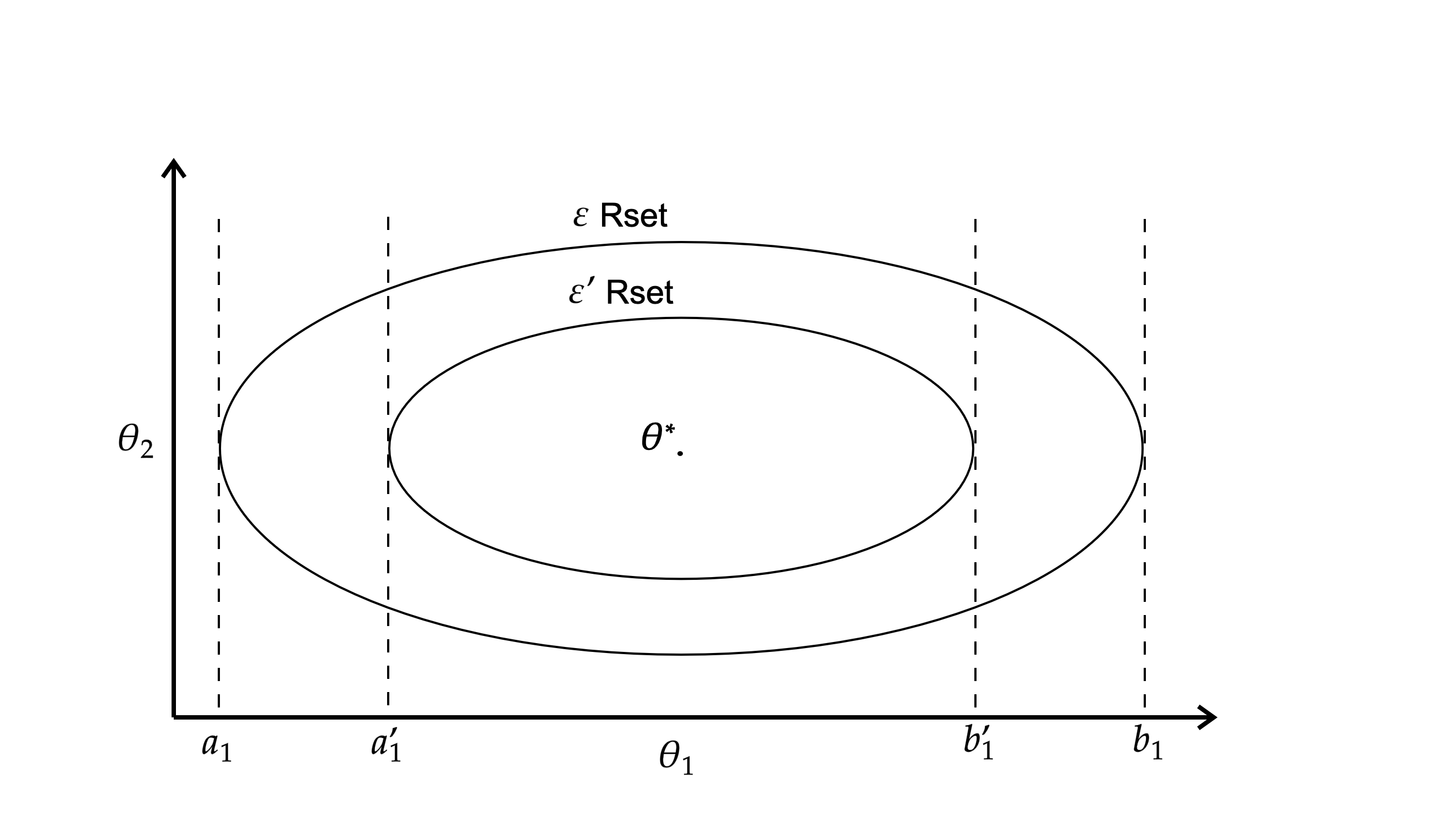}
        \caption{A visualization of the $\varepsilon$ and $\varepsilon'$ Rashomon sets for linear regression with two input features. We highlight the extrema of each Rashomon set along axis 1 ($a_1$ and $b_1$ for the $\varepsilon$ Rashomon set, $a'_1$ and $b'_1$ for the $\varepsilon'$ Rashomon set).}
        \label{fig:a_b_explanation}
    \end{figure}
    Finally, recall that:
    \begin{align*}
        \ourcdf_j(k; \{g^*\}, 0) &= \begin{cases}
            1 & \text{ if } \theta_j^* \leq k\\
            0 & \text{ otherwise}, 
        \end{cases}
    \end{align*}
    since $\boldsymbol{\theta}^*$ is a deterministic quantity given infinite data. 
    
    Without loss of generality, we will consider two cases: \begin{enumerate}
        \item The case where $\theta_j^* \leq k$,
        \item The case where $k < \theta_j^*$.
    \end{enumerate}
    Figures \ref{fig:case_1_viz} and \ref{fig:case_2_viz} give an intuitive overview of the mechanics of this proof. As depicted in Figure \ref{fig:case_1_viz}, we will show that the proportion of the volume of the $\varepsilon'$-Rashomon set with $\phi_j$ below $k$ is closer to 1 than that of the $\varepsilon$-Rashomon set under case 1. We will than show that the opposite holds under case 2, as depicted in Fugre \ref{fig:case_2_viz}.

    \subsubsection*{Case 1: $\theta_j^* \leq k$}
    \begin{figure}[t]
        \centering
        \includegraphics[width=0.9\textwidth]{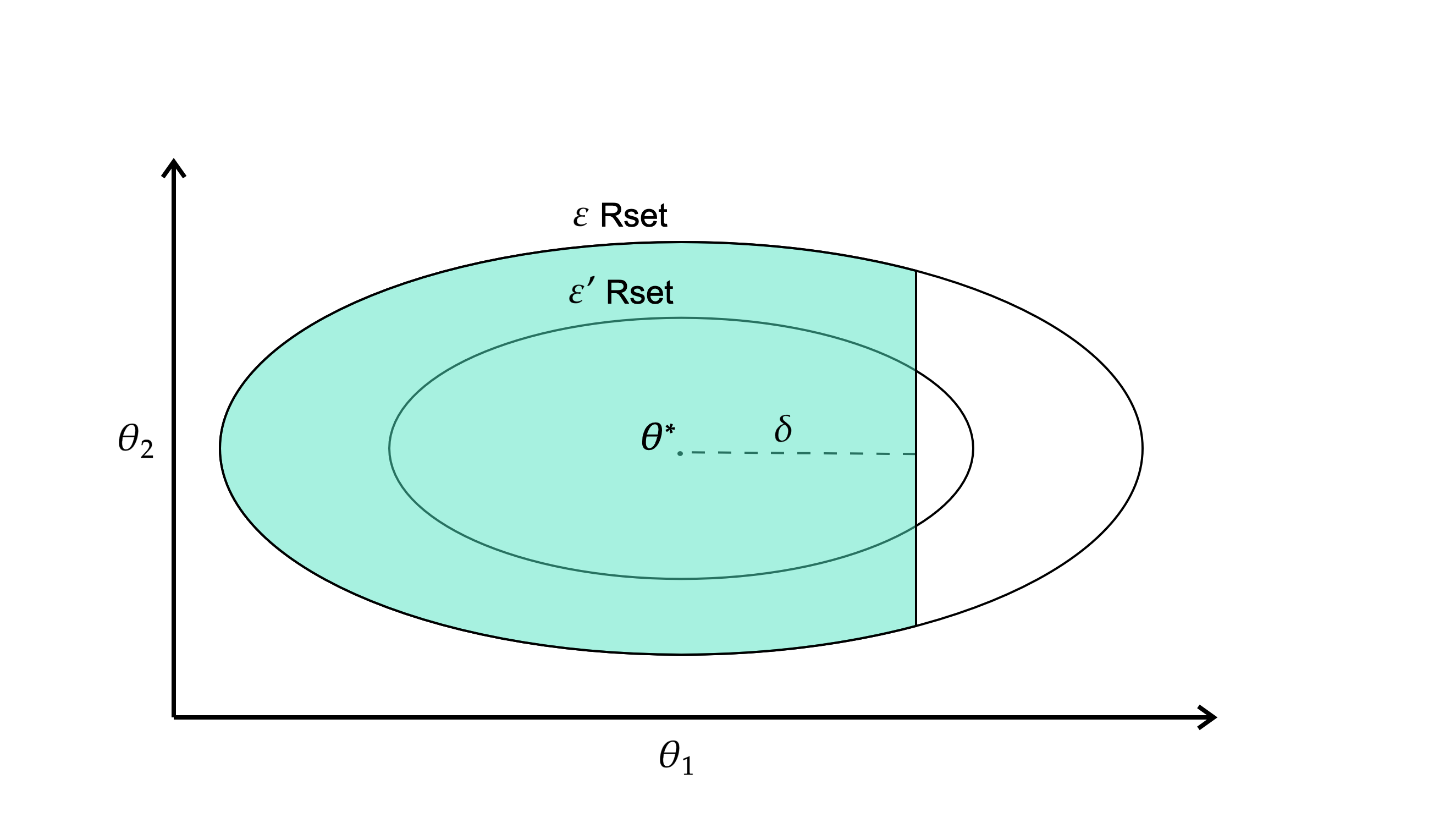}
        \caption{A simple illustration of the key idea in case 1 of the proof of Proposition \ref{prop:linear}. For two concentric ellipsoids of the same shape, the proportion of each ellipsoid's volume falling below some point greater than the center along axis $j$ is greater for the smaller ellipsoid than for the larger ellipsoid. }
        \label{fig:case_1_viz}
    \end{figure}
    
    Define two functions $h: [a_j, b_j] \rightarrow [0, 1]$ and $h': [a'_j, b'_j] \rightarrow [0, 1]$ as:
    \begin{align*}
        h(c) &=  \frac{c - a_j}{b_j - a_j}\\
        h'(c) &=  \frac{c - a'_j}{b'_j - a'_j}.
    \end{align*}
    These functions map each value $c$ in the original space of $\theta_j$ to its \textit{relative position} along each axis of the $\varepsilon$-Rashomon set and the $\varepsilon'$-Rashomon set respectively, with $h(b_j) = h'(b'_j) = 1$ and $h(a_j) = h'(a'_j) = 0$.

    Define $\delta \in [0, b_j - \theta_j^*]$ to be the value such that $k = \theta_j^* + \delta$. Since in this case $\theta^*_j \leq k$, it follows that $\delta \geq 0$. As such, we can then quantify the proportion of the $\varepsilon$-Rashomon set along the j-th axis such that $\theta_j^* \leq \theta_j \leq k$ as:
    \begin{align*}
        h(\theta_j^* + \delta) - h(\theta_j^*) &= \frac{(\theta^*_j + \delta) - a_j}{b_j - a_j} - \frac{(\theta_j^* - a_j)}{(b_j - a_j)} \\
        &= 
        \frac{\theta_j^* + \delta - a_j - \theta_j^* + a_j}{b_j - a_j}\\
        &= \frac{\delta}{b_j - a_j}
    \end{align*}
    Similarly, we can quantify the proportion of the $\varepsilon'$-Rashomon set along the $j$-th axis with $\theta_j$ between $k$ and $\theta_j^*$ as:
    \begin{align*}
        h'(\delta + \theta_j^*) - h'(\theta_j^*) &= \frac{\theta_j^* + \delta - a'_j - \theta_j^* + a'_j}{b'_j - a'_j}\\
        &= \frac{\delta}{b'_j - a'_j}.
    \end{align*}
    Recalling that, by definition, $a_j < a'_j < b'_j < b_j$, as well as the fact that $\delta \geq 0$ we can see that:
    \begin{align*}
        b_j - a_j > b'_j - a'_j &\iff \frac{1}{b_j - a_j} < \frac{1}{b'_j - a'_j}\\
        &\iff \frac{\delta}{b_j - a_j} \leq \frac{\delta}{b'_j - a'_j}\\
        &\iff h(\theta^*_j + \delta) - h(\theta^*_j) \leq h'(\theta^*_j + \delta) - h'(\theta^*_j) \\
        &\iff h(k) - h(\theta^*_j)  \leq h'(k) - h'(\theta^*_j) .
    \end{align*}
    That is, the proportion of the $\varepsilon$-Rashomon set along the $j$-th axis with $\theta_j$ between $k$ and $\theta^*_j$ is \textit{less than or equal to} the proportion of the $\varepsilon'$-Rashomon set along the $j$-th axis with $\theta_j$ between $k$ and $\theta^*_j$.  
    By Lemma \ref{lem:ellipsoidal}, recall that the $\varepsilon$-Rashomon set and the $\varepsilon'$-Rashomon set are concentric (centered at $\theta^*$) and similar (with shape defined by $X^TX$). Let $\nu$ denote the volume function for some subsection of a hyper-ellipsoid. 
    We then have 
    \begin{align*}
       h(k) - &h(\theta^*_j)  \leq h'(k) - h'(\theta^*_j)\\ 
       \iff &\frac{\nu(\{\boldsymbol{\theta} \in \mathcal{R}^\varepsilon : \theta^*_j \leq \theta_j \leq k\})}{\nu(\{\mathcal{R}^\varepsilon\})} \leq \frac{\nu(\{\boldsymbol{\theta}' \in \mathcal{R}^{\varepsilon'} : \theta^*_j \leq \theta'_j \leq k\})}{\nu(\{\mathcal{R}^{\varepsilon'}\})}\\
       \iff &\frac{1}{2} + \frac{\nu(\{\boldsymbol{\theta} \in \mathcal{R}^\varepsilon : \theta^*_j \leq \theta_j \leq k\})}{\nu(\{\mathcal{R}^\varepsilon\})} \leq \frac{1}{2} +  \frac{\nu(\{\boldsymbol{\theta}' \in \mathcal{R}^{\varepsilon'} : \theta^*_j \leq \theta'_j \leq k\})}{\nu(\{\mathcal{R}^{\varepsilon'}\})}\\
       \iff &\frac{\nu(\{\boldsymbol{\theta} \in \mathcal{R}^\varepsilon : \theta_j \leq \theta^*_j\})}{\nu(\{\mathcal{R}^\varepsilon\})} + \frac{\nu(\{\boldsymbol{\theta} \in \mathcal{R}^\varepsilon : \theta^*_j \leq \theta_j \leq k\})}{\nu(\{\mathcal{R}^\varepsilon\})}\\
       &\leq \frac{\nu(\{\boldsymbol{\theta}' \in \mathcal{R}^{\varepsilon'} : \theta'_j  \leq \theta^*_j\})}{\nu(\{\mathcal{R}^{\varepsilon'}\})} +  \frac{\nu(\{\boldsymbol{\theta}' \in \mathcal{R}^{\varepsilon'} : \theta^*_j \leq \theta'_j \leq k\})}{\nu(\{\mathcal{R}^{\varepsilon'}\})}\\
       \iff &\frac{\nu(\{\boldsymbol{\theta} \in \mathcal{R}^\varepsilon : \theta_j \leq k\})}{\nu(\{\mathcal{R}^\varepsilon\})} \leq \frac{\nu(\{\boldsymbol{\theta}' \in \mathcal{R}^{\varepsilon'} : \theta'_j \leq k\})}{\nu(\{\mathcal{R}^{\varepsilon'}\})}.
    \end{align*}
    Recalling that, by definition,  $\ourcdf_j(k; \mathcal{F}, \varepsilon') = \frac{\nu(\{\theta' \in \mathcal{R}^{\varepsilon'}: \theta'_j \leq k\})}{\nu(\mathcal{R}^{\varepsilon'})}$, it follows that:
    \begin{align*}
        &\ourcdf_j(k; \mathcal{F}, \varepsilon) \leq \ourcdf_j(k; \mathcal{F}, \varepsilon')\\
        &\iff  1 - \ourcdf_j(k; \mathcal{F}, \varepsilon) \geq 1 - \ourcdf_j(k; \mathcal{F}, \varepsilon')\\
        &\iff  |1 - \ourcdf_j(k; \mathcal{F}, \varepsilon)| \geq |1 - \ourcdf_j(k; \mathcal{F}, \varepsilon')|.
    \end{align*}
    Recalling that $\ourcdf_j(k; \{g^*\}, 0) = 1$, since $k \geq \theta^*_j$, the above gives:
    \begin{align*}
    |1 - &\ourcdf_j(k; \mathcal{F}, \varepsilon)| \geq |1 - \ourcdf_j(k; \mathcal{F}, \varepsilon')|\\
        \iff  &|\ourcdf_j(k; \{g^*\}, \varepsilon) - \ourcdf_j(k; \mathcal{F}, \varepsilon)|\\
        &\geq |\ourcdf_j(k; \{g^*\}, \varepsilon) - \ourcdf_j(k; \mathcal{F}, \varepsilon')|
    \end{align*}
    for all $\theta^*_j \leq k$.
    
    \subsubsection*{Case 2: $k < \mathbf{\theta}_j^* $}
    \begin{figure}[t]
        \centering
        \includegraphics[width=0.9\textwidth]{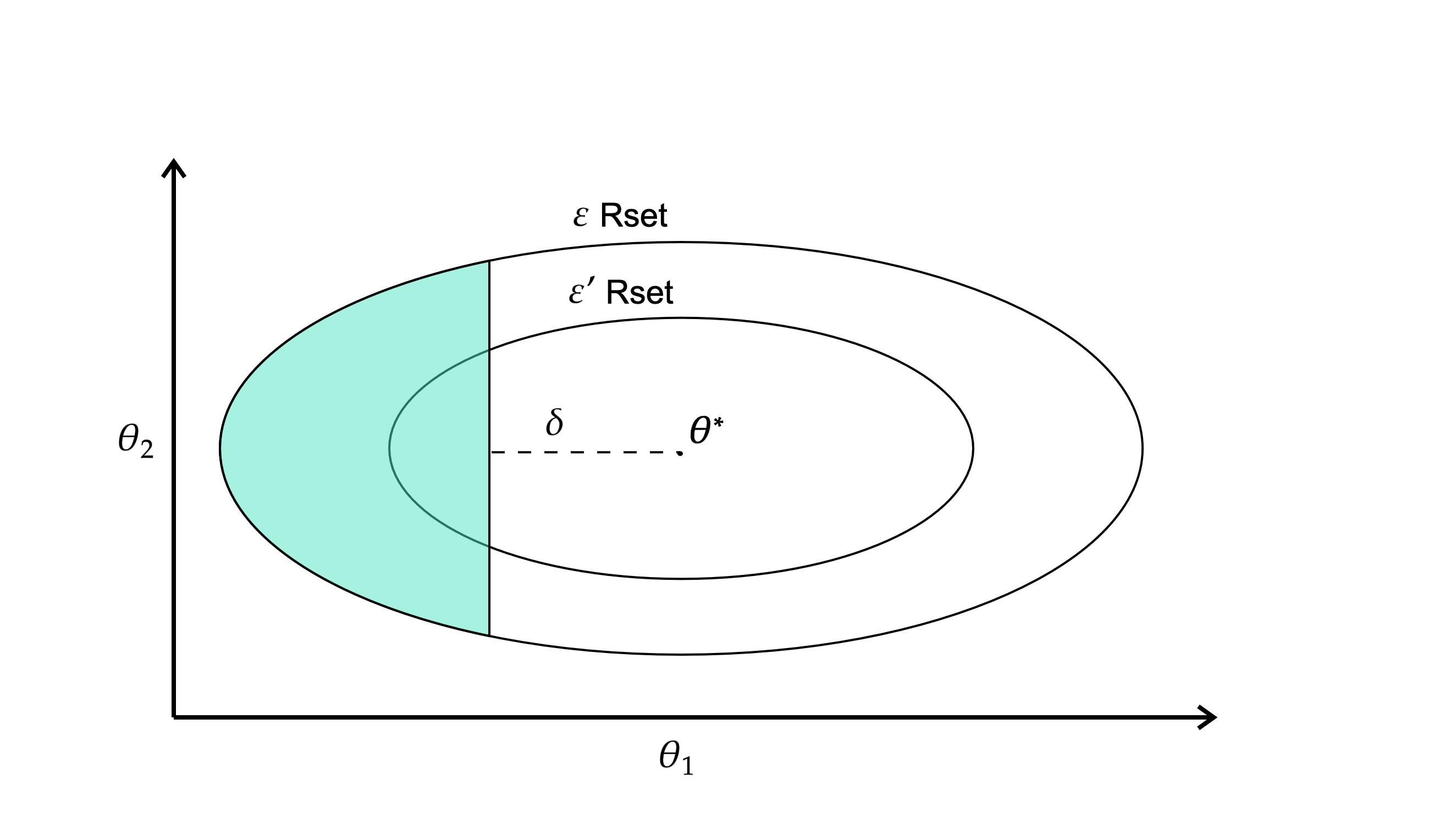}
        \caption{A simple illustration of the key idea in case 2 of the proof of Proposition \ref{prop:linear}. For two concentric ellipsoids of the same shape, the proportion of each ellipsoid's volume falling below some point less than the center along axis $j$ is smaller for the smaller ellipsoid than for the larger ellipsoid.}
        \label{fig:case_2_viz}
    \end{figure}
    Let $h$ and $h'$ be defined as in Case 1. Define $\delta \in [a_j - \theta^*_j, 0]$ to be the quantity such that $k = \theta^*_j + \delta$. In this case, $k < \theta^*_j$, so it follows that $\delta < 0$. Repeating the derivation from Case 1, we then have:
    \begin{align*}
        b_j - a_j > b'_j - a'_j &\iff \frac{1}{b_j - a_j} < \frac{1}{b'_j - a'_j}\\
        &\iff \frac{\delta}{b_j - a_j} > \frac{\delta}{b'_j - a'_j}\\
        &\iff h(\theta^*_j + \delta) - h(\theta^*_j) > h'(\theta^*_j + \delta) - h'(\theta^*_j) \\
        &\iff h(k) - h(\theta^*_j)  > h'(k) - h'(\theta^*_j) .
    \end{align*}
    That is, the proportion of the $\varepsilon$-Rashomon set along the $j$-th axis with $\theta_j$ between $k$ and $\theta^*_j$ is \textit{greater than} the proportion of the $\varepsilon'-$Rashomon set along the $j$-th axis with $\theta_j$ between $k$ and $\theta^*_j$.  By similar reasoning as in Case 1, it follows that:
    \begin{align*}
        &\ourcdf_j(k; \mathcal{F}, \varepsilon) > \ourcdf_j(k; \mathcal{F}, \varepsilon')\\
        &\iff  |\ourcdf_j(k; \mathcal{F}, \varepsilon) - 0| > |\ourcdf_j(k; \mathcal{F}, \varepsilon') - 0|
    \end{align*}
    Recalling that $\ourcdf_j(k; \{g^*\}, \varepsilon) = 0$, since $k < \theta^*_j$, the above gives:
    \begin{align*}
    |\mathbb{P}_{\Dn \sim \mathcal{P}_n}(\ourvi_j&(\mathcal{F}, \varepsilon) \leq k) - 0| > |\ourcdf_j(k; \mathcal{F}, \varepsilon') - 0|\\
        \iff  &|\ourcdf_j(k; \mathcal{F}, \varepsilon) - \ourcdf_j(k; \{g^*\}, 0)|\\
        &> |\ourcdf_j(k; \mathcal{F}, \varepsilon') - \ourcdf_j(k; \{g^*\}, 0)|
    \end{align*}
    for all $ a_j \leq k <\theta^*_j$. As such, for any $k$, we have that: 
    \begin{align*}
        &\left|\ourcdf_j(k; \{g^*\}, 0) - \ourcdf_j(k; \mathcal{F}, \varepsilon)\right| \\
        &\geq 
        \left|\ourcdf_j(k; \{g^*\}, 0) - \ourcdf_j(k; \mathcal{F}, \varepsilon') \right|,
    \end{align*}
    showing that $\varepsilon > \varepsilon'$ is a \textit{sufficient} condition for the above. Since \RID{} is a function of only $\varepsilon$, varying $\varepsilon$ is the only way to vary \RID{}, making $\varepsilon > \varepsilon'$ a \textit{necessary} condition for the above, yielding that $r_j(\varepsilon) > r_j(\varepsilon') \iff \varepsilon > \varepsilon'$ and $r_j$ is monotonically increasing.

    Let $m$ be defined as in Lemma \ref{lem:rld_convex}, and let $\gamma$ be some value such that $m(\varepsilon) \leq \gamma$. Define the function $d := r_j \circ m^{-1}$ (note that $m^{-1},$ the inverse of $m,$ is guaranteed to exist and be strictly increasing because $m$ is strictly increasing). The function $d$ is monotonically increasing as the composition of two monotonically increasing functions, and:
    \begin{align*}
        &m(\varepsilon) \leq \gamma \\
        \iff &\varepsilon \leq m^{-1}(\gamma)\\
        \iff & r_j(\varepsilon) \leq d(\gamma)\\
    \end{align*}
    as required. 
    
    Further, Lemma \ref{lem:rld_convex} states that $m(0) = 0$ if $g^* \in \mathcal{F}.$ Note also that the Rashomon set with $\varepsilon=0$ contains only $g^*$, and as such $r_j(0) = d(m^{-1}(0)) = 0$, meaning $d(0) = 0$. Therefore $\lim_{\gamma \to 0}d(\gamma) = 0$.

\end{proof}

\begin{proposition}
\label{prop:gam}
    Assume the DGP is a generalized additive model (GAM). Then, Assumption \ref{asm:lipschitz_supp} is guaranteed to hold for the function class of GAM's where our variable importance metric is the coefficient on each bin.
\end{proposition}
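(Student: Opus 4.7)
The plan is to reduce the GAM case to the linear regression case already established in Proposition \ref{prop:linear}. A GAM with a fixed binning is linear in its bin-indicator representation: if we let $\tilde{X} \in \mathbb{R}^{n \times (\sum_j K_j)}$ denote the expanded design matrix whose columns are the indicator vectors $\mathbb{1}[X_j \in B_{j,b}]$ for each feature $j$ and bin $b$, then the GAM's predictions are $\tilde{X} \theta$ for a coefficient vector $\theta$, and the bin-coefficient variable importance $\phi_{(j,b)}(f) = \theta_{j,b}$ is exactly the coordinate of $\theta$ corresponding to one column of $\tilde{X}$. As such, the entire argument of Proposition \ref{prop:linear} should transfer to the GAM setting once $X$ is replaced by $\tilde{X}$ and the scalar coefficient $\theta_j$ is reinterpreted as the targeted bin coefficient $\theta_{j,b}$.

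First, I would write this reduction out formally and observe that mean-squared error loss on the GAM becomes $\|y - \tilde{X} \theta\|^2$, which is precisely the quadratic loss treated in Lemma \ref{lem:ellipsoidal}. Applying Lemma \ref{lem:ellipsoidal} to the expanded design matrix gives that every $\varepsilon$-sublevel set of the GAM loss is a hyper-ellipsoid centered at the empirical minimizer $\theta^* = (\tilde{X}^T\tilde{X})^{-1}\tilde{X}^T y$ with shape matrix $\tilde{X}^T\tilde{X}$, which is the exact geometric object that drives the proof of Proposition \ref{prop:linear}.

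With this structure in hand, I would replay the two-case analysis from the proof of Proposition \ref{prop:linear} verbatim, now letting the distinguished coordinate index a single bin $(j,b)$ rather than a single feature $j$. The monotonicity of the analog $r_{(j,b)}(\varepsilon)$ follows from the same fact about concentric similar hyper-ellipsoids: along any fixed coordinate axis, the proportion of the ellipsoid lying on either side of a threshold $k$ varies monotonically with the radius. Lemma \ref{lem:rld_convex} continues to give the monotonicity of $m(\varepsilon)$ and $m(0) = 0$ when $g^* \in \mathcal{F}$, so defining $d := r_{(j,b)} \circ m^{-1}$ yields a monotone $d$ with $\lim_{\gamma \to 0} d(\gamma) = 0$, which is exactly what Assumption \ref{asm:lipschitz_supp} requires.

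The main obstacle is identifiability of the bin-coefficient parameterization: if for every observation the bin indicators within a single feature sum to one, then the columns of $\tilde{X}$ are linearly dependent, $\tilde{X}^T\tilde{X}$ is singular, and Lemma \ref{lem:ellipsoidal} no longer describes a proper (non-degenerate) hyper-ellipsoid. I would resolve this in one of two standard ways: either adopt the reference-cell convention of dropping one bin per feature (so the remaining bin columns are linearly independent whenever the original covariates are non-degenerate), or invoke the regularization hyperparameter $\lambda$ already present in the loss function, since an $\ell_2$ penalty makes $\tilde{X}^T\tilde{X} + \lambda I$ strictly positive definite and preserves the required hyper-ellipsoidal level-set structure via the same derivation as in Lemma \ref{lem:ellipsoidal}. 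Either remedy reduces the GAM setting to the linear regression setting, at which point Proposition \ref{prop:linear} closes the argument.
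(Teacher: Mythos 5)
Your proposal takes essentially the same route as the paper: the paper's proof is exactly the observation that a GAM with binned shape functions is a linear regression over the expanded bin-indicator design matrix, so that Proposition \ref{prop:linear} (via Lemmas \ref{lem:rld_convex} and \ref{lem:ellipsoidal}) applies coordinate-wise to the bin coefficients. Your additional discussion of the rank deficiency of $\tilde{X}^T\tilde{X}$ when the indicators within a feature sum to one --- and the reference-cell or ridge remedies for it --- addresses a genuine degeneracy that the paper's proof silently assumes away, so it strengthens rather than departs from the argument.
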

\begin{proof}
    Recall from Proposition \ref{prop:linear} that Assumption \ref{asm:lipschitz_supp} holds for the class of linear regression models with the model reliance metric $\phi_j = \theta_j$. A generalized additive model (GAM) \cite{hastie1990generalized} over $p$ variables is generally represented as:
    \begin{align*}
        g(\mathbb{E}[Y]) = \omega + f_1(x_1) + \hdots + f_p(x_p),
    \end{align*}
    where $g$ is some link function, $\omega$ is a bias term, and $f_1, \hdots, f_p$ denote the shape functions associated with each of the variables.
    In practice, each shape function $f_j$ generally takes the form of a linear function over binned variables \cite{lou2013accurate}:
    \begin{align*}
        f_j(x_i) = \sum_{j'=0}^{\beta_j - 1} \theta_{j'} \mathbb{1}[b_{j'} \leq x_{ij} \leq b_{j'+1}],
    \end{align*}
    where $\beta_j$ denotes the number of possible bins associated with variable $X_{j},$ $b_{j'}$ denotes the $j'$-th cuttoff point associated with $X_{j}$, and $\theta_{j'}$ denotes the weight associated with the $j'$-th bin on variable $X_j.$ With the above shape function, a GAM is a linear regression over a binned dataset; as such, for the variable importance metric $\phi_{j'} = \theta_{j'}$ on the complete, binned dataset, Assumption \ref{asm:lipschitz_supp} holds by the same reasoning as Proposition \ref{prop:linear}.
\end{proof}

\newpage
\section{Detailed Experimental Setup}
In this work, we considered the following four simulation frameworks:
\begin{itemize}
    \item Chen's \cite{chen2017kernel}: $ Y = \mathbb{1}[-2\sin(X_1) + \max(X_2, 0) + X_3 + \exp(-X_4) + \varepsilon \geq 2.048],$ where $X_1, \ldots, X_{10}, \varepsilon \sim \mathcal{N}(0, 1).$ Here, only $X_1, \ldots, X_4$ are relevant.
    \item Friedman's \cite{friedman1991multivariate}: $Y = \mathbb{1}[ 10 \sin(\pi X_1 X_2) + 20 (X_3 - 0.5)^2 + 10X_4 + 5 X_5 + \varepsilon \geq 15],$ where $X_1, \ldots, X_{6} \sim \mathcal{U}(0, 1), \varepsilon \sim \mathcal{N}(0, 1).$ Here, only $X_1, \ldots, X_5$ are relevant.
    \item Monk 1 \cite{thrun1991monk}: $Y = \max\left(\mathbb{1}[X_1 = X_2], \mathbb{1}[X_5 = 1]\right),$ where the variables $X_1, \ldots, X_6$ have domains of 2, 3, or 4 unique integer values. Only $X_1, X_2, X_5$ are important.
    \item Monk 3 \cite{thrun1991monk}: $Y = \max\left( \mathbb{1}[X_5 = 3 \text{ and } X_4 = 1], \mathbb{1}[X_5 \neq 4 \text{ and } X_2 \neq 3] \right)$ for the same covariates in Monk 1. Here, $X_2, X_4,$ and $X_5$ are relevant, and $5\%$ label noise is added.
\end{itemize}

\begin{table}[h]
    \centering
    \begin{tabular}{c||c|c|c}
        DGP & Num Samples & Num Features & Num Extraneous Features \\
        \hline
        Chen's & 1,000 & 10 & 6\\
        Friedman's & 200 & 6 & 1\\
        HIV & 14,742 & 100 & Unknown\\
        Monk 1 & 124 & 6 & 3\\
        Monk 3 & 124 & 6 & 3\\
    \end{tabular}
    \caption{Overview of the size of each dataset considered (or generated from a DGP) in this paper.}
    \label{tab:dataset_desc}
\end{table}
For our experiments in Sections 4.1 and 4.2 of the main paper, we trained and evaluated all models using the standard training set provided by \cite{thrun1991monk} for Monk 1 and Monk 3. We generated 200 samples following the above process for Friedman's DGP, and 1000 samples following the above process for Chen's DGP.  

In Section 5 of the main paper, we evaluated \ourcdf on a dataset studying which host cell transcripts and chromatin patterns are associated with high expression of Human Immunodeficiency Virus (HIV) RNA. We used the model class of sparse decision trees and subtractive model reliance. The dataset combined single cell RNAseq/ATACseq profiles for 74,031 individual HIV infected cells from two different donors in the aims of finding new cellular cofactors for HIV expression that could be targeted to reactivate the latent HIV reservoir in people with HIV (PWH). A longer description of the data is in \cite{BrowneRedacted2023}. 

We consider the binary classification problem of predicting high versus low HIV load, where high HIV load means an HIV load in the top 10\% of observed values. We selected 14,614 samples (all 7,307 high HIV load samples and 7,307 random low HIV load samples) from the overall dataset in order to balance labels, and filtered the complete profiles down to the top 100 variables by individual AUC in order to accelerate the runtime of \ourcdf. 

Table \ref{tab:dataset_desc} summarizes the size of each dataset we considered. In all cases, we used random seed 0 for dataset generation, model training, and evaluation unless otherwise specified.

We compared the rankings produced by $\ourcdf$ with the following baseline methods: 
\begin{itemize}
    \item Subtractive model reliance $\phi^{\text{sub}}$ of a random forest (RF) \citep{breiman2001random} using scikit-learn's implementation \cite{scikit-learn} of RF
    \item Subtractive model reliance $\phi^{\text{sub}}$ of an L1 regularized logistic regression model (Lasso) using scikit-learn's implementation \cite{scikit-learn} of Lasso 
    \item Subtractive model reliance $\phi^{\text{sub}}$ of boosted decision trees \cite{freund1997decision} using scikit-learn's implementation \cite{scikit-learn} of AdaBoost
    \item Subtractive model reliance $\phi^{\text{sub}}$ of a generalized optimal sparse decision tree (GOSDT) \cite{lin2020generalized} using the implementation from \citep{xin2022exploring}
    \item Subtractive conditional model reliance (CMR) \cite{FisherRuDo19} -- a metric designed to capture only the unique information of a variable -- of RF using scikit-learn's implementation \cite{scikit-learn} of RF
    \item Subtractive conditional model reliance (CMR) \cite{FisherRuDo19} of Lasso using scikit-learn's implementation \cite{scikit-learn} of Lasso
    \item The impurity based model reliance metric for RF from \cite{breiman2001statistical} using scikit-learn's implementation \cite{scikit-learn} of RF 
    \item The LOCO algorithm reliance \cite{lei2018distribution} value for RF and for Lasso using scikit-learn's implementation \cite{scikit-learn} of both models
    \item The Pearson correlation between each feature and the outcome
    \item The Spearman correlation between each feature and the outcome  
    \item The mean of the partial dependency plot (PDP) \citep{greenwell2018simple} for each feature using scikit-learn's implementation \cite{scikit-learn}
    \item The SHAP value  \cite{lundberg2018consistent} for RF using scikit-learn's implementation \cite{scikit-learn} of RF 
    \item The mean of variable importance clouds (VIC) \cite{dong2020exploring} for the Rashomon set of sparse decision trees, computed using TreeFarms \cite{xin2022exploring}. 
\end{itemize}

We used the default parameters in scikit-learn's implementation \cite{scikit-learn} of each baseline model. The parameters used for \textit{RID}, VIC, and GOSDT for each dataset are summarized in Table \ref{tab:parameters}. In all cases, we constructed each of \textit{RID}, VIC, and GOSDT using the code from \citep{xin2022exploring}. 


\begin{table}[]
    \centering
    \begin{tabular}{c||c|c|c}
        Dataset & Rashomon Threshold $\varepsilon$ & Regularization Weight $\lambda$ & Depth Bound \\
        \hline
        Chen's & 0.01 & 0.01 & 5 \\
        Friedman's & 0.025 & 0.02 & 6 \\
        HIV & 0.075 & 0.005 & 3 \\
        Monk 1 & 0.1 & 0.03 & 5\\
        Monk 3 & 0.05 & 0.025 & 7 
    \end{tabular}
    \caption{The parameters used for \textit{RID}, VIC, and GOSDT by data generation process.}
    \label{tab:parameters}
\end{table}

\subsection{Computational Resources}
All experiments for this work were performed on an academic institution's cluster computer. We used up to 40 machines in parallel, selected from the specifications below:
\begin{itemize}
    \item 2 Dell R610's with 2 E5540 Xeon Processors (16 cores)
    \item 10 Dell R730's with 2 Intel Xeon E5-2640 Processors (40 cores)
    \item 10 Dell R610's with 2 E5640 Xeon Processors (16 cores)
    \item 10 Dell R620's with 2 Xeon(R) CPU E5-2695 v2's (48 cores)
    \item 8 Dell R610's with 2 E5540 Xeon Processors (16 cores)
\end{itemize}
We did not use GPU acceleration for this work.

\newpage
\section{Additional Experiments}

\subsection{Recovering MR without Bootstrapping Baseline Methods}
\begin{figure}[h!]
    \centering
    \includegraphics[width=\textwidth]{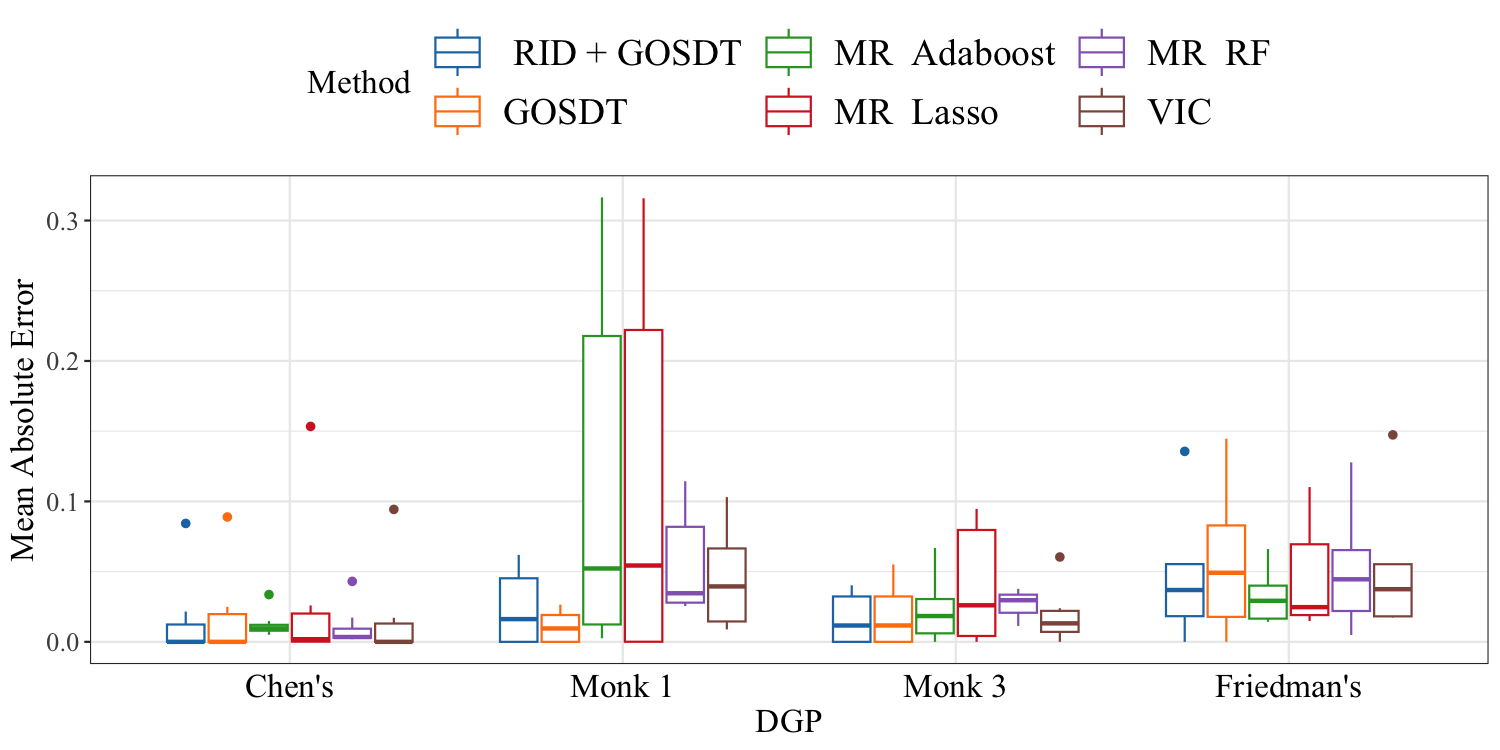}
    \caption{Boxplot over variables of the mean absolute error over test sets between the MR value produced by each method without bootstrapping (except \textit{RID}) and the model reliance of the DGP for 500 test sets.}
    \label{fig:scalar_mr_recovery}
\end{figure}

In this section, we evaluate the ability of each baseline method to recover the value of subtractive model reliance for the data generation process \textit{without bootstrapping}. 
%
For this comparison, we use one training set to find the model reliance of each variable for each of the following algorithms: GOSDT, AdaBoost, Lasso, and Random Forest. Because \ourcdf and VIC produce distributions/samples, we instead estimate the \textit{median} model reliance across \ourcdf and VIC's model reliance distributions. 

We then sample 500 test sets independently for each DGP. We then calculate the model reliance for each test set using the DGP as if it were a predictive model (that is, if the DGP were $Y = X + \varepsilon$ for some Gaussian noise $\varepsilon$, our predictive model would simply be $f(X) = X$). Finally, we calculate the mean absolute error between the test model reliance values for the DGP and the train model reliance values for each algorithm.

Figure \ref{fig:scalar_mr_recovery} shows the results of this experiment. As Figure \ref{fig:scalar_mr_recovery} illustrates, \textit{RID} produces more accurate point estimates than baseline methods even though this is not the goal of \ourcdf -- the goal of \RID{} is to produce \textit{the entire distribution} of model reliance across good models over bootstrap datasets, not a single point estimate. 

\subsection{Width of Box and Whisker Ranges}
When evaluating whether the box and whisker range (BWR) for each method captures the MR value for the DGP across test sets, a natural question is whether \ourcdf outperforms other methods simply because it produces wider BWR's. Figure \ref{fig:width_bwr} demonstrates the width of the BWR produced by each evaluated method across variables and datasets. As shown in Figure \ref{fig:width_bwr}, \textit{RID} consistently produces BWR widths on par with baseline methods.
\begin{figure}
    \centering
    \includegraphics[width=0.9\textwidth]{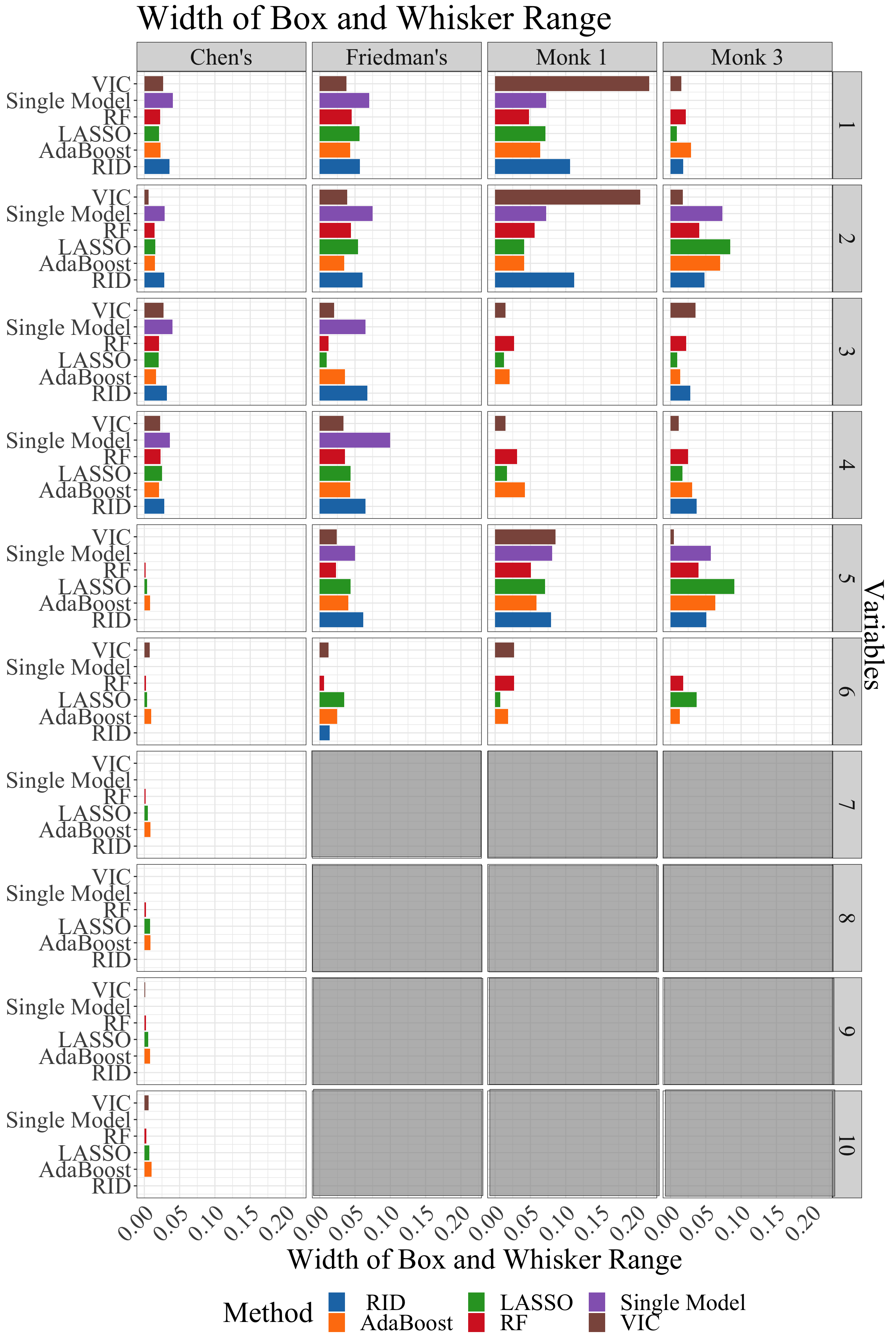}
    \caption{Width of the box and whisker range produced by each baseline method by dataset and variable. Gray subplots represent DGPs for which such a variable does not exist. Friedman's, Monk 1, and Monk 3 only have six variables.}
    \label{fig:width_bwr}
\end{figure}

\subsection{The Performance of \textit{RID} is Stable Across Reasonable Values for $\varepsilon$}
The parameter $\varepsilon$ controls what the maximum possible loss a model in the Rashomon set could be. We investigate whether this choice of $\varepsilon$ significantly alters the performance of \RID. In order to investigate this question, we repeat the coverage experiment from Section 4.2 of the main paper for three different values of $\varepsilon$ for each dataset on VIC and \ourcdf (the two methods effected by $\varepsilon$). In particular, we construct the BWR over 100 bootstrap iterations for \ourcdf and over models for VIC for three different values of $\varepsilon$ on each training dataset. These values are chosen as $0.75 \varepsilon^*$, $\varepsilon^*,$ and $1.25\varepsilon^*$, where $\varepsilon^*$ denotes the value of $\varepsilon$ used in the experiments presented in the main paper. We then generate 500 test datasets for each DGP and evaluate the subtractive model reliance for the DGP on each variable; we then measure what proportion of these test model reliance values are contained in each BWR. We refer to this proportion as the ``recovery percentage''.

Figure \ref{fig:epsilon_experiment} illustrates that \textbf{\textit{RID} is almost entirely invariant to reasonable choices of $\varepsilon$}: the recovery proportion for \textit{RID} ranges from $90.38\%$ to $90.64\%$ on Chen's DGP, $100\%$ to $100\%$ on Monk 1, $99.43\%$ to $99.93\%$ on Monk 3 DGP, and from $87.23\%$ to $88.8\%$ on Friedman's DGP. We find that VIC is somewhat more sensitive to choices of $\varepsilon$: the recovery proportion for VIC ranges from $83.44\%$ to $89.62\%$ on Chen's DGP, $100\%$ to $100\%$ on Monk 1, $75.30\%$ to $79.17\%$ on Monk 3 DGP, and from $60.53\%$ to $75.57\%$ on Friedman's DGP.

\begin{figure}[h]
    \centering
    \includegraphics[width=\textwidth]{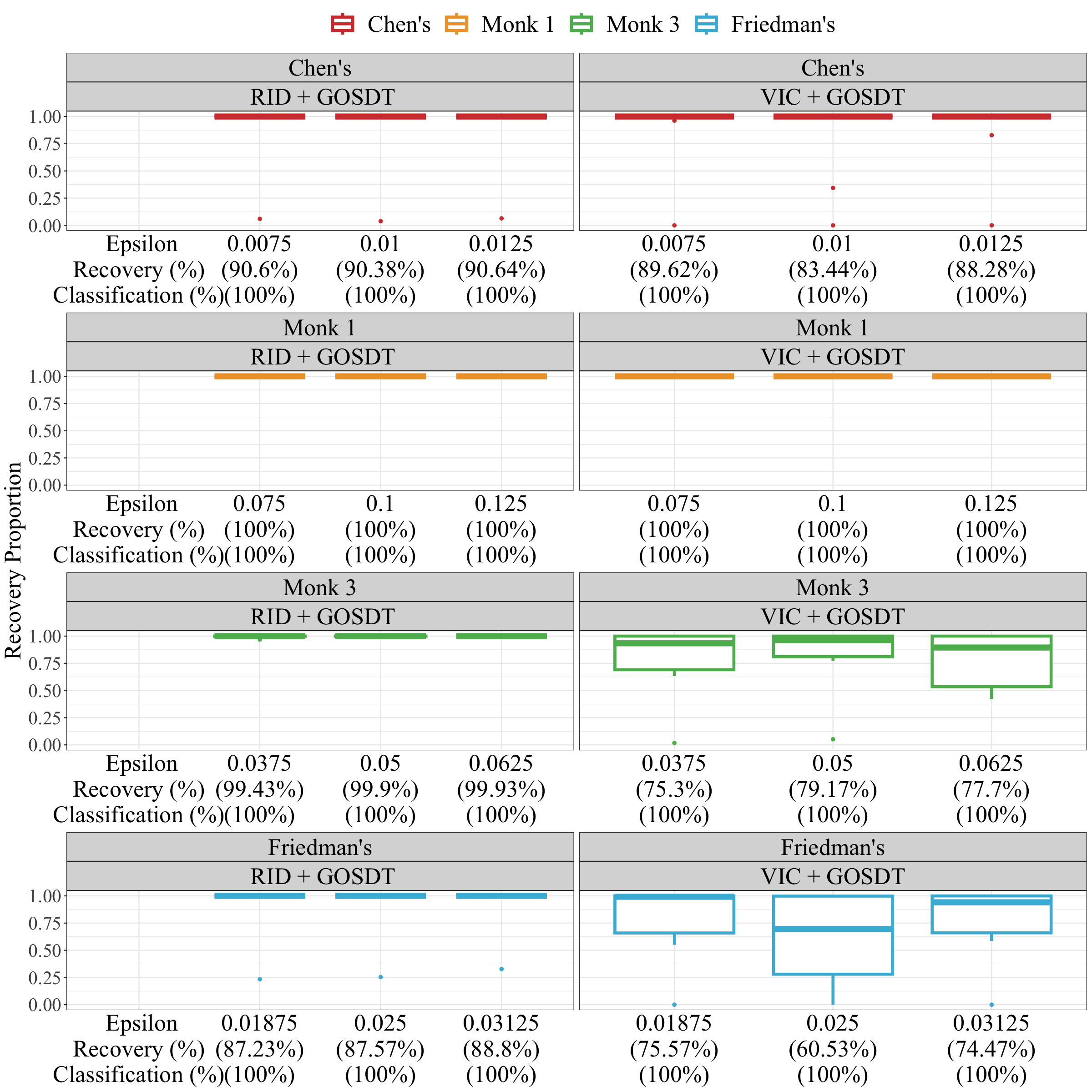}
    \caption{Box and whiskers plot over variables of the proportion test MR values for the DGP captured by the BWR range for $\ourcdf{}$ and VIC at different loss thresholds $\varepsilon$. We find that the performance of $\ourcdf{}$ is invariant to reasonable changes in $\varepsilon$.}
    \label{fig:epsilon_experiment}
\end{figure}

\subsection{Full Stability Results}
In this section, we demonstrate each interval produced by MCR, the BWR of VIC, and the BWR of $\ourcdf{}$ over 50 datasets generated from each DGP. We construct $\ourcdf{}$ using 50 bootstraps from each of the 50 generated datasets. 

Figures \ref{fig:chens_overlap}, \ref{fig:monk1_overlap}, \ref{fig:monk3_overlap}, and \ref{fig:friedmans_overlap} illustrate the 50 resulting intervals produced by each method for each non-extraneous variable on each DGP. If a method produces generalizable results, we would expect it to produce overlapping intervals across datasets drawn from the same DGP. As shown in Figures \ref{fig:chens_overlap}, \ref{fig:monk3_overlap}, and \ref{fig:friedmans_overlap}, both MCR and the BWR for VIC produced completely non-overlapping intervals between datasets for at least one variable on each of Chen's DGP, Monk 3, and Friedman's DGP, which means their results are not generalizable. In contrast, \textbf{the BWR range for \textit{$\ourcdf{}$ never} has zero overlap between the ranges produced for different datasets}. This highlights that $\ourcdf{}$ is more likely to generalize than existing Rashomon-based methods.

    
    \begin{figure}[h]
        \centering
        \includegraphics[width=0.82\textwidth]{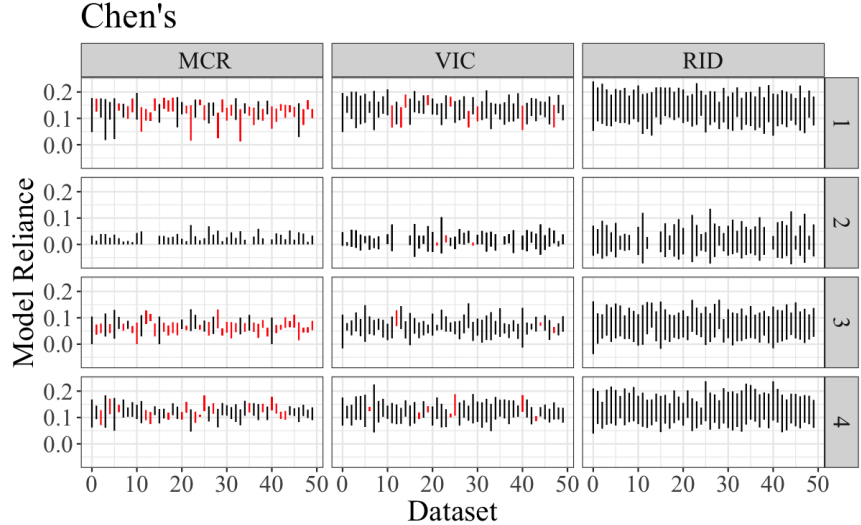}
        \caption{We generate 50 independent datasets from Chen's DGP and calculate MCR, BWRs for VIC, and BWRs for RID. 
        The above plot shows the interval for each dataset for each non-null variable in Chen's DGP. All \textcolor{red}{red}-colored intervals do not overlap with at least one of the remaining 49 intervals. }
        \label{fig:chens_overlap}
    \end{figure}
    
    \begin{figure}[h]
        \centering
        \includegraphics[width=0.82\textwidth]{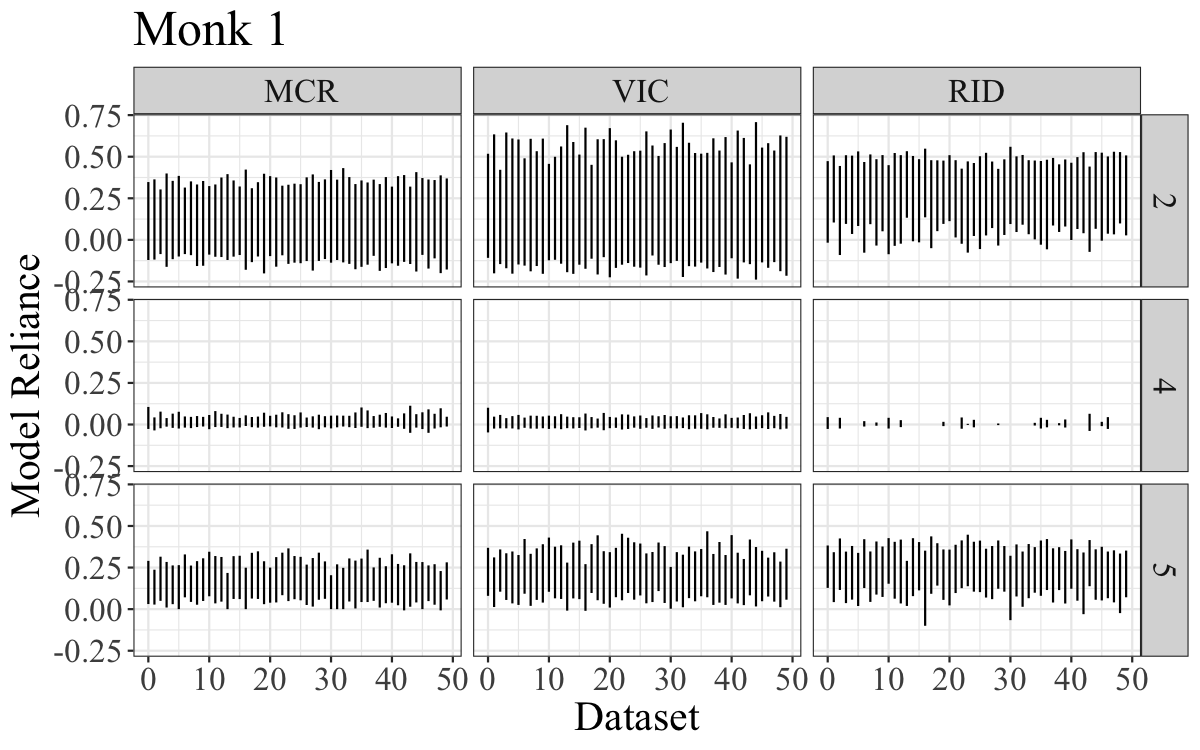}
        \caption{We generate 50 independent datasets from the Monk 1 DGP and calculate MCR, BWRs for VIC, and BWRs for RID. 
        The above plot shows the interval for each dataset for each non-null variable in Monk 1 DGP. All \textcolor{red}{red}-colored intervals (there are none in this plot) do not overlap with at least one of the remaining 49 intervals. }
        \label{fig:monk1_overlap}
    \end{figure}

    \begin{figure}[h]
        \centering
        \includegraphics[width=0.82\textwidth]{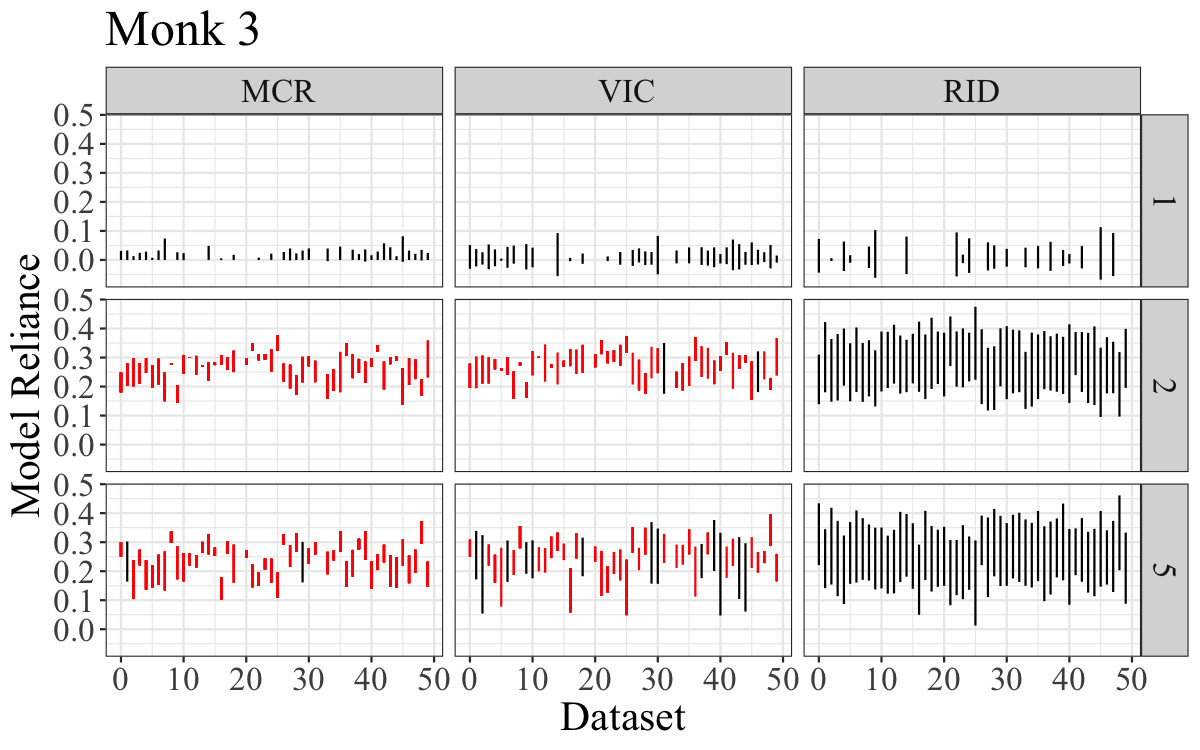}
        \caption{We generate 50 independent datasets from the Monk 3 DGP and calculate MCR, BWRs for VIC, and BWRs for RID. 
        The above plot shows the interval for each dataset for each non-null variable in the Monk 3 DGP. All \textcolor{red}{red}-colored intervals do not overlap with at least one of the remaining 49 intervals. }
        \label{fig:monk3_overlap}
    \end{figure}

    \begin{figure}[h]
        \centering
        \includegraphics[width=0.82\textwidth]{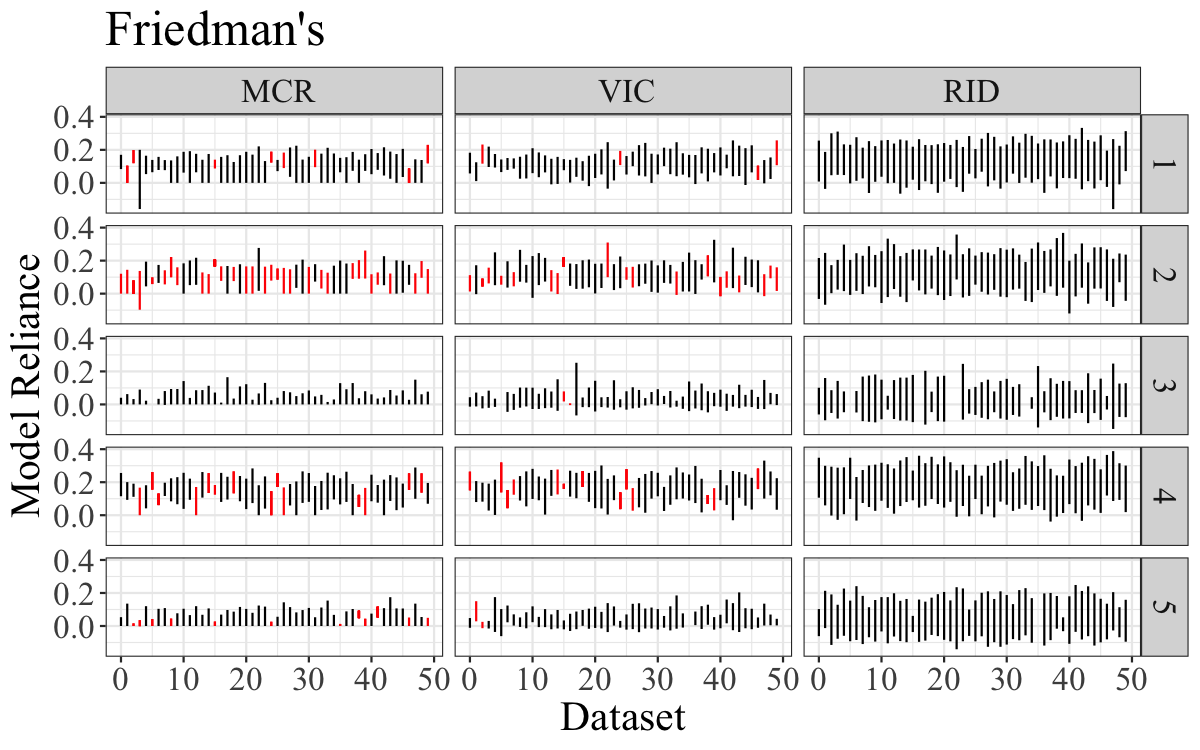}
        \caption{We generate 50 independent datasets from Friedman\'s DGP and calculate MCR, BWRs for VIC, and BWRs for RID. 
        The above plot shows the interval for each dataset for each non-null variable in Friedman's DGP. All \textcolor{red}{red}-colored intervals do not overlap with at least one of the remaining 49 intervals. }
        \label{fig:friedmans_overlap}
    \end{figure}

\subsection{Timing Experiments}
Finally, we perform an experiment studying how well the runtime of $\ourcdf$ scales with respect to the number of samples and the number of features in the input dataset using the HIV dataset \cite{BrowneRedacted2023}. The complete dataset used for the main paper consists of 14,742 samples measuring 100 features each. We compute $\ourcdf$ using 30 bootstrap iterations for each combination of the following sample and feature subset sizes: 14,742 samples, 7,371 samples, and 3,686 samples; 100 features, 50 features, and 25 features. 

Note that, in our implementation of $\ourcdf,$ any number of bootstrap datasets may be handled in parallel; as such, we report the mean runtime per bootstrap iteration in Table \ref{tab:timing}, as this quantity is independent of how many machines are in use. As shown in Table \ref{tab:timing}, $\ourcdf$ scales fairly well in the number of samples included, and somewhat less well in the number of features. This is because the number of possible decision trees grows rapidly with the number of input features, making finding the Rashomon set a more difficult problem and leading to larger Rashomon sets. Nonetheless, even for a large number of samples and features, $\ourcdf$ can be computed in a tractable amount of time: with 100 features and 14,742 samples, we found an average time per bootstrap of about
52 minutes.
\begin{table}[]
    \centering
    \begin{tabular}{c||c|c|c}
        \diagbox[width=8em]{Samples}{Variables} & 25 & 50 & 100 \\
        \hline
        3,686 & 19.3 (0.9) & 64.2 (6.2) & 164.0 (14.6)\\
        7,371 & 40.5 (2.5) & 177.7 (18.8)  & 723.1 (106.4)\\
        14,742 & 92.9 (6.8) & 431.4 (39.9) & 3128.7 (281.9)\\
    \end{tabular}
    \caption{Average runtime in seconds per bootstrap for $\ourcdf$ as a function of the number of variables and number of samples included from the HIV dataset. The standard error about each average is reported in parentheses.}
    \label{tab:timing}
\end{table}


\bibliography{paper_files/references}

\bibliography{paper_files/references}

\end{document}